\documentclass[opre,nonblindrev]{informs3}

\usepackage{algorithmicx}
\usepackage{algpseudocode}
\OneAndAHalfSpacedXI
\usepackage{endnotes}
\let\footnote=\endnote
\let\enotesize=\normalsize
\def\notesname{Endnotes}%
\def\makeenmark{\hbox to1.275em{\theenmark.\enskip\hss}}
\def\enoteformat{\rightskip0pt\leftskip0pt\parindent=1.275em
  \leavevmode\llap{\makeenmark}}

\usepackage{natbib}
\usepackage{multibib}
\newcites{EC}{References in Appendices} 

\usepackage{algorithm}
\usepackage{bm}
\usepackage{graphicx}
\usepackage{enumerate}
\usepackage{hyperref}
\usepackage{subcaption}
\usepackage{mathrsfs}
\usepackage{diagbox}
   \usepackage{booktabs} 
 \usepackage{multirow} 
 \usepackage[flushleft]{threeparttable}
\usepackage{threeparttable} 
\usepackage{caption}
\usepackage{natbib}
 \usepackage{multibib}
\newcites{EC}{References in Appendix}
 \bibpunct[, ]{(}{)}{,}{a}{}{,}%
 \def\bibfont{\small}%
\TheoremsNumberedThrough     
\ECRepeatTheorems
\EquationsNumberedThrough    

\begin{document}




\TITLE{Nonparametric Bayesian Optimization for General Rewards}

\ARTICLEAUTHORS{%
\AUTHOR{Zishi Zhang}

\AFF{Guanghua School of Management, Peking University,  zishizhang@stu.pku.edu.cn} 
\AUTHOR{Tao Ren}
\AFF{Guanghua School of Management, Peking University} 
\AUTHOR{Yijie Peng}
\AFF{Guanghua School of Management, Peking University,  pengyijie@pku.edu.cn} 
} 

\ABSTRACT{%
 This work focuses on Bayesian optimization (BO) under reward model uncertainty. 
We propose the first BO algorithm that achieves \textit{no-regret} guarantee in a general reward setting, requiring only Lipschitz continuity of the objective function and accommodating a broad class of measurement noise.
The core of our approach is a novel surrogate model, termed as infinite Gaussian process ($\infty$-GP).
It is a Bayesian nonparametric model that places a prior on the space of reward distributions, enabling it to represent a substantially broader class of reward models than classical Gaussian process (GP).
The $\infty$-GP is used in combination with Thompson Sampling (TS) to enable effective exploration and exploitation. Correspondingly, we develop a new TS regret analysis framework for general rewards, which relates the regret to the total variation distance between the surrogate model and the true reward distribution. 
Furthermore, with a truncated Gibbs sampling procedure, our method is computationally scalable, incurring minimal additional memory and computational complexities
compared to classical GP. Empirical results demonstrate state-of-the-art performance, particularly in settings with non-stationary, heavy-tailed, or other ill-conditioned rewards.
}

\maketitle

\section{Introduction}

Bayesian Optimization (BO) \citep{frazier2018tutorial} is a powerful framework for optimizing expensive-to-evaluate black-box functions and has found broad applications in hyperparameter tuning for machine learning models \citep{snoek2012practical}, robotics \citep{berkenkamp2023bayesian}, biology \citep{amin2024bayesian}, and reinforcement learning \citep{brochu2010tutorial}. To minimize the number of costly function evaluations, BO maintains a probabilistic model, known as a \emph{surrogate model}, over the unknown objective. This surrogate is updated after each evaluation and serves as a cheap proxy for the reward. Based on the surrogate, an \emph{acquisition policy} is used to determine the next point to evaluate. The policy aims to balance \emph{exploration} and \emph{exploitation}, enabling efficient search for the global optimum. Common choices include Expected Improvement (EI), Upper Confidence Bound (UCB), Knowledge Gradient (KG), and Thompson Sampling (TS).

The quality of the surrogate model plays a central role in the optimization process. In most BO methods, Gaussian processes (GPs) and their variants serve as the default surrogate models. GPs offer two key advantages: first, they admit easy-to-compute closed-form posterior updates; second, they provide not only point predictions but also \textit{value uncertainty} quantification, i.e., how confident we are in the predicted value, which is essential for balancing exploration and exploitation.

However, the success of GP surrogate model hinges on restrictive and often unrealistic assumptions, both in practice and in theory. Suppose the observed reward is composed of a deterministic objective $\mu^\ast(x)$ and a noise term $\epsilon^\ast(x)$. For the deterministic part, GP-based BO methods implicitly require the true objective to ``resemble" a GP. Specifically, $\mu^\ast(x)$ is often assumed either to be a sample path drawn from a GP \citep{russo2014learning, wang2025convergence}, or to lie in a reproducing kernel Hilbert space (RKHS) associated with a kernel $k(x,x')$ and have a known RHKS norm bound \citep{srinivas2010gaussian, srinivas2012information, chowdhury2017kernelized}, referred to as the \textit{RKHS assumption} in this paper. The connection between GP regression and RKHS is formally established in \cite{kanagawa2018gaussian}, which shows that the \textit{RKHS assumption} is closely related to assuming that the objective function is a GP sample path. Additionally, the aforementioned RKHS norm bound is assumed to be known in the literature, although it is difficult to compute in practice, as noted by \cite{lederer2019uniform}. 
For the noise term $\epsilon^\ast(x)$, it is commonly assumed to be independent and (sub-)Gaussian, or even noiseless in some formulations \citep{chen2023pseudo}.

These assumptions are often difficult to verify and are frequently violated in practical applications, under which GP-based BO methods may fail.
For instance, \citet{snoek2014input} point out that GP surrogates often struggle with non-stationary objectives $\mu^\ast$, and this is a widely recognized challenge in applying BO to hyperparameter tuning for machine learning models, where the objective function may exhibit varying smoothness, frequency, or scale across different regions of the input space.
GPs are also inadequate for modeling heavy-tailed or heteroscedastic noise, which frequently arises in financial applications and other real-world BO tasks \citep{ray2019bayesian, cakmak2020bayesian}.

Our objective is to pursue an efficient BO algorithm under unknown reward distributions. To this end, inspired by the Bayesian nonparametric literature,
we introduce a novel fully nonparametric surrogate model, the \emph{\( \infty \)-Gaussian Process} (\( \infty \)-GP), which models the reward function as an infinite mixture of GPs, constructed via a sequential spatial Dirichlet process mixture. Our model treats the reward distribution itself as random and places a prior over the space of distributions, enabling the model to capture \emph{reward model uncertainty}—that is, uncertainty over the distributional form of the reward—rather than merely the \textit{value uncertainty} within a fixed distribution class as in classical GP model.
When combined with TS as the acquisition policy, our method achieves \emph{no-regret} guarantees (i.e., the cumulative regret grows sublinearly) for a much broader class of reward distributions than GP-based methods. The main computational bottleneck of our algorithm lies in a Markov chain Monte Carlo (MCMC) step. To address this issue, we design a tailored truncated Gibbs sampler. As a result, although our approach involves an infinite mixture of GPs, it incurs only minimal additional computational overhead compared to standard GP, increasing the complexity from $\mathcal{O}(n^3)$ to $\mathcal{O}(n^3  \log n)$.

\vspace{-5pt}
\begin{table}[htbp]
\centering
\scriptsize
\setlength{\tabcolsep}{3pt}
\renewcommand{\arraystretch}{1.1}
\begin{threeparttable}
\caption{Comparison of modeling assumptions in no-regret BO methods}
\label{tab:bo-assumptions}
\begin{tabular}{p{4.2cm}p{4.2cm}p{4.2cm}p{3.9cm}}
\toprule
\textbf{Paper} & \textbf{Objective Function $\mu^\ast(x)$} & \textbf{Noise $\epsilon^\ast(x)$} & \textbf{Acquisition Policy} \\
\midrule
\cite{srinivas2010gaussian,srinivas2012information,UCBnips_bound} & RKHS Ass. or Sample from GP + Lip. Con. & i.i.d. Gaussian (known var.) & UCB \\
\cite{russo2014learning} & Sample from GP+ Lip. Con. & i.i.d. Gaussian (known var.) & TS (Bayes regret) \\
\cite{chowdhury2017kernelized} & RKHS Ass. & i.i.d. sub-Gaussian & UCB, TS \\
\cite{ray2019bayesian} & RKHS Ass.& i.i.d. heavy-tailed & Trunc. UCB \\
\cite{chen2023pseudo} & Continuous\tnote{1} & Noiseless & General \\
\cite{wang2025convergence} & Sample from GP + Lip. Con. & i.i.d. Gaussian & EI \\
\textbf{\normalsize This work}
     & \textbf{\normalsize Arbitrary (Lip. Con.)\tnote{2}} & \textbf{\normalsize Arbitrary}\tnote{3} & \textbf{\normalsize TS} \\
\bottomrule
\end{tabular}
\begin{tablenotes}
\scriptsize
\item[1] Requires additional assumptions on the interaction between $\mu^\ast$ and the acquisition function.
\item[2] For posterior consistency, $\mu^\ast(x)$ only needs to be continuous; Lipschitz continuity is additionally required to establish no-regret.
\item[3] Requires mild tail assumptions and smoothness assumptions
\end{tablenotes}
\end{threeparttable}
\vspace{-8pt}
\end{table}

To highlight the theoretical advancement of our work, Table~\ref{tab:bo-assumptions} summarizes the acquisition policies and imposed assumptions in several representative BO studies that establish provable \emph{no-regret} guarantees or consistency. It is worth noting that the extensive literature on multi-armed bandits (MAB), as well as the related ranking and selection (R\&S) problems, is not included here, despite their close connection to BO. As pointed out by \cite{chen2023pseudo}, the discrete or linear structures of MAB are fundamentally different from the continuous, black-box optimization setting of BO. As summarized in Table~\ref{tab:bo-assumptions}, a substantial body of theoretical results exists for UCB-type and EI acquisition policies in BO. These methods typically rely on the \emph{RKHS assumption} and (sub-)Gaussian noise, which together allow the use of information-theoretic concentration inequalities to establish no-regret guarantees \citep{srinivas2010gaussian, srinivas2012information, chowdhury2017kernelized, EI1, EI2}.
Theoretical results for the regret of TS acquisition policy remain relatively limited in the BO literature. For the frequentist setting, most analyses build on the seminal work of \cite{chowdhury2017kernelized}, which partitions the decision space into ``saturated" and ``unsaturated" regions. The probability of sampling from a saturated region is shown to be small, while the regret incurred in unsaturated regions is bounded through the posterior variance. However, this approach still fundamentally relies on RKHS assumptions to control the posterior variance via information gain.
For the Bayesian setting, most existing results follow the regret decomposition introduced by \cite{russo2014learning}, which relates the Bayesian regret of TS to that of UCB-type methods. As a result, these analyses inherit the restrictive assumptions required by UCB-type approaches.

In contrast, our work imposes the weakest assumptions in the literature, both on the deterministic component $\mu^\ast(x)$ and the noise term $\epsilon^\ast(x)$. For the objective function $\{\mu^\ast(x): x \in \mathcal{X}\}$, we assume only Lipschitz continuity, which is strictly weaker than the commonly adopted RKHS assumption (since RKHS assumption itself implies Lipschitz continuity; see \cite{de2012exponential}). For the noise, our method accommodates heavy-tailed and heteroscedastic distributions.
We note that while Table~\ref{tab:bo-assumptions} uses the term “arbitrary’’ noise for simplicity, our theory still requires mild regularity conditions on noise tails and moments.
Our analysis builds on a new regret framework for TS that accommodates a general class of reward distributions, and the resulting proof technique is intuitive. Because TS selects actions according to the posterior probability of being optimal under the surrogate model, the regret can be controlled directly through the discrepancy between the surrogate posterior and the true reward distribution. We quantify this discrepancy using the total variation distance. Since the $\infty$-GP can converge to a much broader class of reward distributions, this discrepancy vanishes under mild conditions, leading to no-regret performance.


\subsection{Related Works}
Surrogate modeling, particularly using GPs, plays a central role not only in BO but also in the simulation optimization (SO) literature \citep{kriging_hong}. GPs were initially employed to approximate response surfaces in stochastic simulations, a methodology commonly referred to as stochastic kriging \citep{kriging_ankenman}. Since then, GP-based models have been widely applied to other areas of SO, including random search \citep{wang2025gaussian, sun2014balancing} and R\&S problems \citep{cakmak2024contextual}. To address large-scale SO settings, various GP model variants have been developed, such as those in \cite{l2019gaussian, kriging_ding}.

Related to our work, \citet{bogunovic2021misspecified} also investigate the issue of model misspecification in BO, but their analysis assumes that the true objective function can be uniformly approximated by a well-behaved function that satisfies the RKHS assumption.
While many studies in BO attempt to relax the restrictive assumptions of GP models, each typically addresses a single limitation, such as non-stationarity or heavy-tailed noise.
For instance,
to address non-stationarity issue, prior works have proposed various modifications to the classical GP models. \cite{snoek2014input} warp the input space of standard GP kernels to model non-stationary functions. \cite{li2023study} replace the GP surrogate model with neural networks. \cite{seeger2004gaussian} and \cite{higdon2022non} employ non-stationary kernels in GP model.
To handle heavy-tailed noise, a substantial body of work has been developed in the context of MAB \citep{bubeck2013bandits,medina2016no}. However, for BO with a continuous decision space, available methods are far more limited. A notable exception is \cite{ray2019bayesian}, which proposes a truncation-based modification to the GP-UCB algorithm to handle heavy-tailed noises. 
Finally, similar to our $\infty$-GP model, mixture-of-experts GP models \citep{rasmussen2001infinite, meeds2005alternative} also construct GP mixtures using Dirichlet processes (DPs). However, they rely on a global DP to partition the input space, assigning each input to a single GP expert. In contrast, our model employs a sequential spatial DP prior, allowing each location to mix over infinitely many GP surfaces. This leads to a fundamentally different formulation that enables convergence over a broad class of reward distributions.

The DP is a foundational tool in the Bayesian nonparametric literature, which places a prior over the space of probability measures. A draw from a DP, denoted \( G \sim \text{DP}(\nu G_0) \), admits the  representation:
\(
G = \sum_{l=1}^\infty w_l \delta_{\xi^{(l)}},
\)
where \( \{w_l\} \) are ``stick-breaking" weights and \( \{\xi^{(l)}\} \) are i.i.d. atoms drawn from the base measure \( G_0 \).
The dependent DP (DDP) \citep{maceachern1999dependent,quintana2022dependent} extends the classical DP to model a collection of covariate-dependent random measures \( \{G_x\}_{x \in \mathcal{X}} \), by introducing dependence through the weights and the atoms as functions of the covariate \(x\).
The spatial DP (SDP) \citep{gelfand2005bayesian} is a spatial extension of the DDP. SDP assumes that observations are collected at a fixed, finite set of locations \( \{x_1, x_2, \dots, x_D\} \), with all locations sampled simultaneously in each replication. In each replication, observations across different locations are realized on a single GP surface.
However, SDP does not accommodate the sequential nature of BO, where the next location is selected adaptively based on past observations. Our proposed \(\infty\)-GP model allows observations to be sequentially collected at arbitrary locations and realized on different latent GP surfaces, making it well-suited for BO. Accordingly, both model fitting and predictive distribution in our model differ fundamentally from those in original SDP.

The contributions of this work are organized as follows. Section~\ref{SEC2} formulates the BO problem in its most general form and highlights the limitations of classical GP surrogates.  In Section~\ref{sec-3}, we propose a novel \(\infty\)-GP surrogate model for BO. Section~\ref{sec_4} derives the predictive distribution of the \(\infty\)-GP and introduces the corresponding TS acquisition policy. In Section~\ref{Section_6}, we propose a new TS regret analysis framework for general rewards and show that \(\infty\)-GP with TS achieves no-regret guarantees. Section~\ref{section_7} discusses computational considerations and shows that, with a tailored truncated Gibbs sampler, $\infty$-GP incurs only minimal additional cost compared with a standard GP. Finally, Section~\ref{sec_exp} provides comprehensive empirical evaluations.

\section{Problem Formulation and Background}\label{SEC2}

In its most general form, BO aims to solve the following stochastic optimization problem: \begin{equation}\label{general_formulation} \arg\max_{x\in\mathcal{X}} \mathbb{E}_{y\sim f^\ast(\cdot|x)}(y), \end{equation} where $\mathcal{X}\subset\mathbb{R}^d$ is a compact decision space, and $f^\ast(y|x)$ denotes the \textit{unknown} conditional distribution of the stochastic reward $y(x)\in\mathbb R$ observed at $x$. Our goal is to identify the decision $x$ that maximizes the expected reward.

At the $n$-th iteration, the decision maker observes the reward at the query point $x_n\in\mathcal{X}$ and uses the data collected so far, i.e., $\mathcal{H}_{n}=\{(x_1,y(x_1)),\cdots,(x_{n},y(x_{n}))\}$, to fit a \textit{surrogate model} that predicts the reward distribution over $\mathcal{X}$. Based on this computationally cheap surrogate, an acquisition policy is then applied to select the next promising point $x_{n+1}\in\mathcal{X}$ to evaluate, balancing exploration and exploitation.

    \subsection{Limitations of Classical BO Methods and GP Surrogate Model}\label{sec_limit_of_convential}  
The quality of the surrogate model is critical to the performance of BO.
Classical BO methods predominantly employ Gaussian processes (GPs) as the surrogate models. However, the effectiveness of these GP-based methods relies heavily on restrictive assumptions on the ground-truth reward.
Specifically, we rewrite the observed reward as \begin{equation}\label{eq_old_formulation}
     y(x)=\mu^\ast(x)+\epsilon^\ast(x),
 \end{equation}

     where $\mu^\ast(x):=\mathbb{E}_{y\sim f^\ast(\cdot|x)}(y)$ is the unknown deterministic objective function and $\epsilon^\ast(x):= y(x)-\mu^\ast(x)$ is the zero-mean noise. As listed in Table \ref{tab:bo-assumptions}, the assumptions imposed in classical GP-based BO can be summarized in the following Assumption \ref{ass_0}.
\begin{definition}[RKHS Assumption]
$\mu^\ast(x)$ lies in the RKHS $H_k$ associated with a kernel $k(x,x')$, with a bounded RKHS norm $\|\mu^\ast\|_{H_k}\le B_{H_k}$ for some known constant $B_{H_k}>0$.
\end{definition}

\begin{assumption}[Classical BO Assumptions]\label{ass_0}
    The observed reward is given by Eq.~\eqref{eq_old_formulation}.
$\mu^\ast(x)$ is assumed to be Lipschitz continuous, and is further required to be either a sample path from a GP or to satisfy the \emph{RKHS assumption}. The noise terms $\{\epsilon^\ast(x)\}_{x \in \mathcal{X}}$ are assumed to be independent and identically distributed, zero-mean, (sub-)Gaussian random variables.
\end{assumption}
Assumption~\ref{ass_0} imposes strong assumptions on both the objective $\mu^\ast(x)$ and the noise $\epsilon^\ast(x)$.
On the noise side, the i.i.d. sub-Gaussian requirement is essential for applying concentration inequalities that underpin the theoretical guarantees of GP-based BO.
On the objective side, even in the absence of noise, GP-based BO may still fail unless $\mu^\ast(x)$ satisfies highly restrictive structural conditions.
Two alternative assumptions are: (i) $\mu^\ast(x)$ is a sample path from a GP and is Lipschitz continuous; or (ii) the \textit{RKHS assumption}, which requires $\mu^\ast(x)$ to belong to the RKHS associated with a given kernel and to have a {known} upper bound on its RKHS norm. Notably, the RKHS assumption itself implies Lipschitz continuity \citep[Lemma 1]{de2012exponential}.

Several practically important reward structures violate Assumption~\ref{ass_0} or fall outside the modeling capacity of standard GPs.
 First, \textbf{non-stationary} rewards, whose covariance structure varies across the input domain, contradict the stationarity assumption of classical GP kernels such as the squared exponential, Matérn, and rational quadratic kernels, which require that the covariance between $x$ and $x'$ depends solely on their distance $\|x - x'\|$.
Second, rewards with \textbf{heavy-tailed or heteroscedastic} noise, where the noise distribution either has tails heavier than sub-Gaussian or varies with the point $x$, violate the i.i.d. sub-Gaussian noise assumption.


\subsection{Model Uncertainty vs.\ Value Uncertainty}

A key strength of GPs in BO is their ability to quantify uncertainty through the posterior variance. However, the uncertainty captured by a GP is inherently limited to \textit{value uncertainty},  i.e., uncertainty about the predicted value at a given input point, under a fixed reward distribution class. This overlooks a more fundamental source of uncertainty: whether the GP itself is an appropriate surrogate model for the underlying reward?

In this work, we highlight an additional and conceptually distinct form of uncertainty, which we call \textit{model uncertainty}. Model uncertainty represents the decision maker’s uncertainty about the correct {type} or {complexity} of the true reward distribution, rather than uncertainty in its realized values.
In the next section, we introduce a new surrogate model that is designed to explicitly capture this form of model uncertainty and to accommodate a much broader range of reward distributions than standard GPs.

\section{$\infty$-Gaussian-Process Surrogate Modeling}\label{sec-3}

In this section, we introduce the $\infty$-Gaussian Process ($\infty$-GP), a novel surrogate model for the observed reward $\{y(x): x \in \mathcal{X}\}$. It is constructed as a \textit{sequential spatial DP mixture with a GP baseline}.

\begin{figure}[h]
    \centering
    \includegraphics[width=1\textwidth]{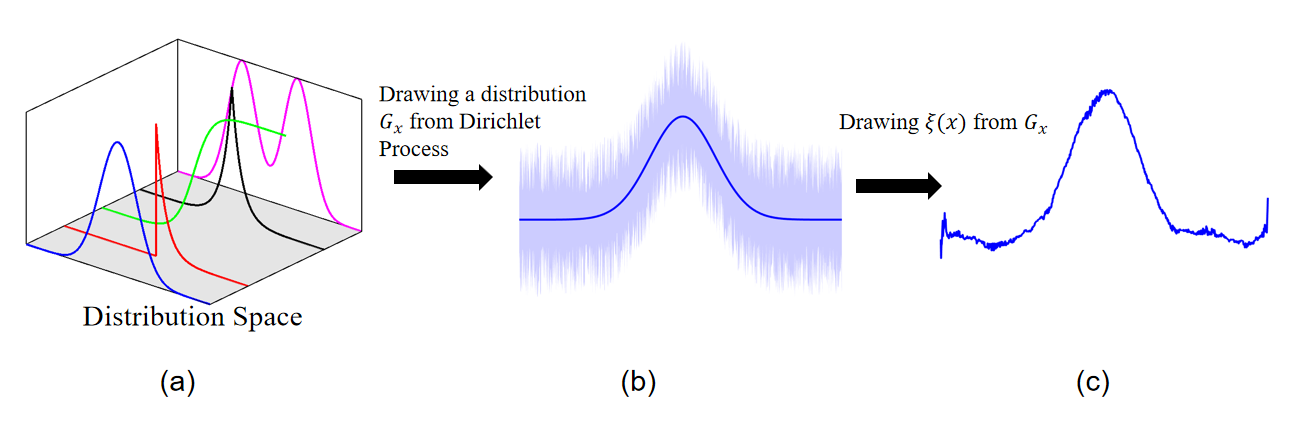} 
    \caption{(a)\textendash(b): Draw a stochastic process from the distribution space, representing model uncertainty. (b)\textendash(c): Draw a realization from the stochastic process, representing value uncertainty. $\infty$-GP model includes (a)\textendash(c), whereas the GP model includes only (b)\textendash(c).}
    \label{hierarchy}
\end{figure}

Suppose that the observations are collected at input points $x_1,\cdots,x_n\in\mathcal{X}$ sequentially. The observed reward is modeled as 
\begin{equation}\label{eq_y=m+xi+e}
    y(x_i)=\beta^\top x_i+\xi (x_i)+\epsilon_i,\ \epsilon_i\sim N(0,\tau^2), 
\end{equation}
\begin{equation}\label{eq_xi-gxi}
    \xi(x_i)\overset{ind}{\sim} G_{x_i},
\end{equation}
$\forall i=1,\cdots,n$, where $\beta^\top x$ represents the deterministic mean trend term with $\beta\in\mathbb{R}^d$. The main modeling component is the stochastic process $\xi:= \{\xi(x)\in\mathbb{R}:x\in\mathcal{X}\}$, which captures the spatial dependence across the decision space and follows distribution $\{G_x:x\in\mathcal{X}\}$.  In classical GP-based BO methods, as shown in Figure \ref{hierarchy}(b) and \ref{hierarchy}(c),
$\xi$ is modeled as a sample path of a GP,  i.e., a \textit{surface} over $\mathcal{X}$.
Because the distributional form is fixed to be a GP, this approach captures only \textit{value uncertainty} within a restrictive model class.

 To capture \textit{model uncertainty}, we do not fix the distribution type of \( \{ G_x :x\in\mathcal{X}\} \). Instead, as illustrated in Figure \ref{hierarchy}(a), we assume that \( \{ G_x :x\in\mathcal{X}\} \) is itself random and follows a (sequential) SDP, which places a prior over \textit{the space of distributions}:
\begin{equation}\label{equation_gx-sdp}
  \{G_{x} : x \in \mathcal{X}\} \sim \text{SDP}(\nu G_0),
\end{equation}
where $\nu>0$ is a scalar and $G_0$ is a specified baseline distribution of $\xi$. In this work, we use a stationary GP $\mathcal{GP}(0,\sigma^2 \rho_{\bm{\phi}}(\cdot,\cdot))$ over $\mathcal{X}$ as the baseline distribution, with kernel $\rho_{\bm{\phi}}(x,x')=\exp\left( -\sum_{k=1}^d \phi_k (x^{(k)} - x'^{(k)})^2 \right)$, where $x^{(k)}$ denotes the $k$-th coordinate of $x$ and the length-scale parameters are $\bm{\phi}=(\phi_1,\cdots,\phi_d)$.
Specifically, a distribution drawn from $\text{SDP}(\nu G_0)$ is almost surely discrete and admits the representation
\begin{equation}\label{equation_infi_surface}
G_{x}=\sum_{l=1}^{\infty}w_l\delta_{\xi^{(l)}(x)}, \forall x\in\mathcal{X},
\end{equation}
 where each surface $\xi^{(l)}:= \{\xi^{(l)}(x):x\in\mathcal{X}\}$ is a sample path independently drawn from the baseline $\mathcal{GP}(0,\sigma^2 \rho_{\bm{\phi}}(\cdot,\cdot))$ and the weights $\{w_l\}_{l=1}^\infty$ admit the well-known ``stick breaking" construction of the DP, i.e.,
\begin{equation}\label{eq_prior_v}
    w_1=V_1,\ \ w_l=V_l\prod_{r=1}^{l-1}(1-V_r), \ \  V_r\overset{iid}{\sim} Beta(1,\nu).
\end{equation}
Therefore, as shown in Figure \ref{fig:surface}, a draw $\{G_x:x\in\mathcal{X}\}$ from $\text{SDP}(\nu G_0)$ can be viewed as an infinite mixture of GP surfaces, and thus is termed as $\infty$-GP. 
It is important to note that, unlike classical BO methods that impose a surrogate model on the objective function \( \mu^\ast(x) \) and assume Gaussian observation noise, $\infty$-GP directly models the full distribution of the observed reward \( y(x) \), thereby accommodating a broader class of noise distributions. 
In this framework, the term \( \epsilon_i \sim N(0, \tau^2) \) in model~(\ref{eq_y=m+xi+e}) should not be interpreted as imposing an i.i.d. Gaussian assumption on the true measurement noise.
Instead, it serves as a modeling component that induces a prior over the space of probability distributions with continuous support.

\begin{figure}[htbp]
    \centering
    \includegraphics[width=0.6\textwidth]{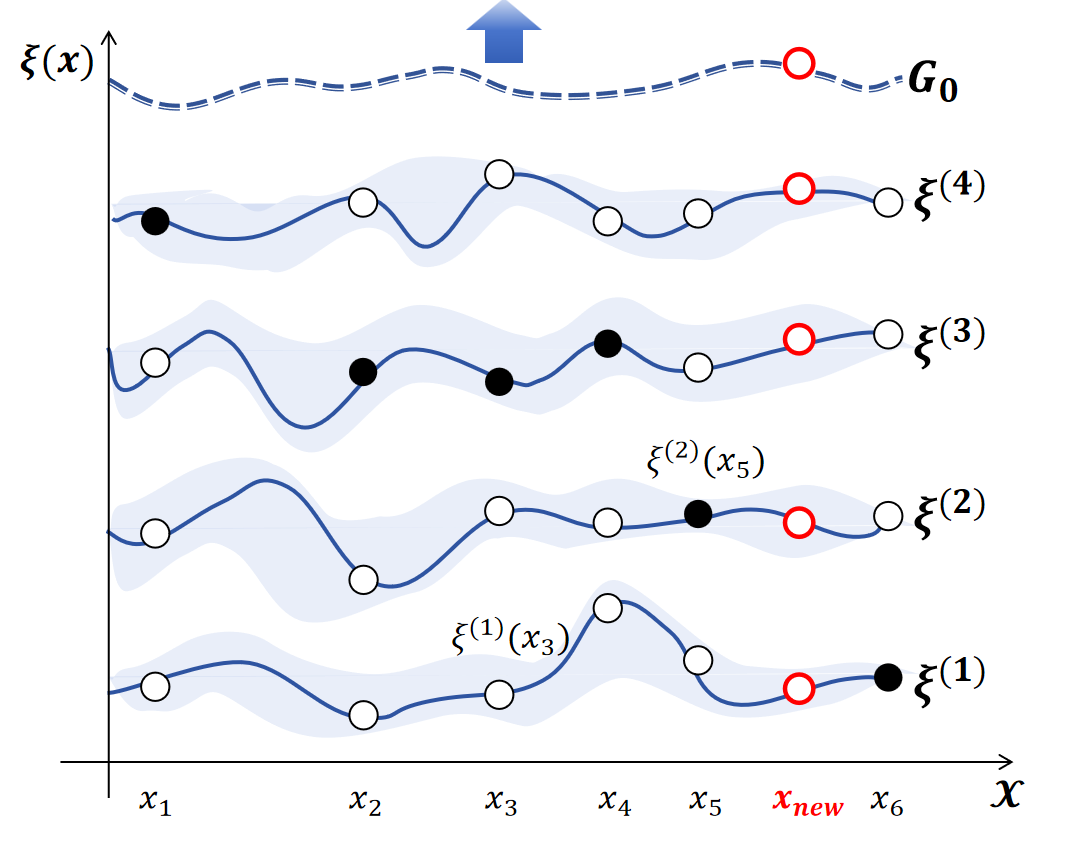} 
    \caption{$\infty$-GP is theoretically an infinite sum of GP surfaces. The solid dots means that the $\xi(x_i)$ is realized on that surface and the hollow ones represent that an unrealized latent variable. Red dots means predicting the new response at an unexplored $x_{new}$. Red dots on solid lines represent new observations being realized on an existing surface, while dashed lines indicate the observation lies on a new surface drawn from the base distribution.}
    \label{fig:surface}
\end{figure}


We now take a closer look at the internal structure of the $\infty$-GP model, which is critical for the posterior inference in the next section.
As defined in Eq.~(\ref{equation_infi_surface}), each distribution \( \{G_x : x \in \mathcal{X} \} \) drawn from the SDP is a discrete measure over an infinite collection of surfaces \( \{\xi^{(l)}\}_{l=1}^\infty \).
For any \( x_i \in\mathcal{X}\), a draw \( \xi(x_i) \sim G_{x_i} \) can be realized on any of the surfaces in the collection. To indicate on which surface the $\xi(x_i)$ is realized,
 we introduce auxiliary variables $\bm{z}_{1:n}=\{z_i\}_{i=1}^n$, where each $z_i\in\mathbb{N}$ denotes the index of the of the surface on which $\xi(x_i)$ is realized; that is,  $\xi(x_i)=\xi^{(z_i)}(x_i)$.
Note that for any surface \( \xi^{(j)} \) with $j\neq z_i$, on which the observation at \( x_i \) is not realized, the corresponding value \( \xi^{(j)}(x_i) \) should be regarded as latent and must also be inferred. As illustrated in Figure~\ref{fig:surface}, solid dots (e.g., \( \xi^{(2)}(x_5) \)) indicate that the observation at \( x_5 \) is realized on surface \( \xi^{(2)} \), while hollow dots (e.g., \( \xi^{(1)}(x_3) \)) indicate that the observation at \( x_3 \) is not realized on surface \( \xi^{(1)} \).
Since multiple observations may be realized on the same surface, the number of realized surfaces, denoted by \( K_n \), typically grows slowly with \( n \) (see Section~\ref{section_7}). Let \( \bm{\xi}_{1:n} = (\xi^{(1)}, \dots, \xi^{(K_n)}) \) denote the collection of realized surfaces up to iteration \( n \), and let \( n_j \) denote the number of points within \( \{x_1, \cdots, x_n\} \) that are realized on surface \( \xi^{(j)} \), i.e., \( n_j = \#\{i : z_i = j\} \), $j=1,\cdots, K_n$.

Similar to the GP model, the hyperparameters of the $\infty$-GP must be inferred during use.
We adopt a fully-Bayesian treatment to estimate the hyperparameters $\Theta:=\{\beta,\nu,\tau,\sigma^2,\phi\}=\{\Theta^{(1)},\Theta^{(2)}\}$, where the first-layer parameters $\Theta^{(1)}:=\{\beta,\tau\}$ are the hyperparameters that appear in Eq.~\eqref{eq_y=m+xi+e} and the second-layer parameters $\Theta^{(2)}:=\{\nu,\sigma^2,\phi\}$ are those that appear in Eq.~\eqref{eq_xi-gxi}. The fully Bayesian treatment involves placing priors on the hyperparameters and inferring them by updating their posterior, rather than estimating them in a frequentist manner using methods such as MLE. As emphasized in the seminal work of \citet{snoek2012practical}, fully Bayesian hyperparameter treatment is often more sensible for practical BO.
Specifically, the priors on hyperparameters $\Theta$ are given by
\begin{equation}\label{eq_prior_tau}
    \beta,\tau^2\sim N_p(\beta_0,\Sigma_\beta)\times IGamma(a_{\tau},b_{\tau}),
\end{equation}
\begin{equation}\label{eq_prior_sigma}
    \sigma^2\sim IGamma(a_\sigma,b_\sigma),
\end{equation}
\begin{equation}
    \bm{\phi}\sim U((0,{b_\phi}]^d),
\end{equation}
\begin{equation}\label{eq_end_of_model}
    \nu\sim Gamma(a_{\nu},b_{\nu}),
\end{equation}
where $\beta$ has a Gaussian prior, $\tau^2$ and $\sigma^2$ have inverse-Gamma priors, $\phi$ has a uniform prior on $(0,b_\phi]$ and $\nu$ has a Gamma prior. Practical guidelines for specifying these priors are provided in the Appendix~\ref{SEC:PRIOR SPECIFICATION}.

 
The advantages of the \( \infty \)-GP surrogate model, which will be explained in detail in the following sections, can be briefly summarized as follows. 
First, the model can adaptively adjust its complexity based on the observed data (e.g., the number of latent surfaces), enabling it to approximate and learn a broad class of reward distribution types.
 In Section~\ref{Section_6}, we show that this flexibility leads to improved regret performance, yielding the first \textit{no-regret} guarantee for general reward distributions under mild conditions.
Moreover, the \( \infty \)-GP is scalable as $n$ grows large, requiring minimal additional computation and memory compared to the classical single GP model, which will be explained in detail in Section~\ref{section_7}.

To conclude this section, it is important to understand the two quantities (or ``parameters") that characterize the SDP: $G_0$ and $\nu$. For any measurable set $B$ in the support of $G_0$ and if a random distribution $G\sim \text{SDP}(\nu G_0)$, then we have $\mathbb{E}(G(B))=G_0(B)$. This is to say, $G_0$ serves as a ``mean" measure for random measures drawn from $\text{SDP}(\nu G_0)$. The parameter $\nu$ can be interpreted as a prior precision parameter, controlling the variability of $G$ around $G_0$. Specifically $\mathrm{Var}(G(B))=\frac{G_0(B)(1-G_0(B))}{1+\nu}$. In practice, with new data becoming available, we update the posterior of $\nu$, which allows $\nu$ to quantify the \textit{model uncertainty} dynamically: the higher the $\nu$, the more confident we are in the assertion that the true reward distribution follows the baseline GP $G_0$. In particular, when $\nu\to\infty$, $\infty$-GP model will degenerate to a standard GP.  Moreover, it is important to note that despite using a stationary exponential kernel, \textit{the random process drawn from the SDP is non-stationary and has heterogeneous variance.}

\section{Predictive Distribution and Thompson Sampling}\label{sec_4}
Having introduced the novel $\infty$-GP surrogate model, we now turn to the next key component of BO: the acquisition policy, which determines the next candidate point $x_{n+1}\in\mathcal{X}$ to evaluate based on the surrogate model's prediction of its potential value.

\subsection{Predictive Distribution}
The first step in designing an acquisition policy is to derive the predictive distribution of the response $y(x_{n+1})$ at any unexplored point $x_{n+1} \in \mathcal{X}$ based on the surrogate model. Specifically, we seek to compute the posterior distribution
\begin{equation}\label{eq_predictive}
    [y(x_{n+1})|\mathcal{H}_n],
\end{equation}
 where $\mathcal{H}_n=\{(x_1,y(x_1)),\cdots,(x_n,y(x_n))\}$ represents the set of observed data collected during the past $n$ iterations. 
For notational convenience, we adopt the bracket notation system used in \cite{gelfand1990sampling}, where $[Y \mid X]$ denotes the conditional density of $Y$ given $X$, and $[Y]$ denotes the marginal density of $Y$. For example, let \( f(\cdot) \) denote a probability density function. Then, expressions like $\int [Y \mid X][X]  $
should be interpreted as shorthand for $\int f(Y \mid X)  f(X)  dX,$
 which yields the marginal density \( f(Y) \) by the law of total probability. That is, 
$\int [Y \mid X][X] = [Y].$
It is important to note that, since \( x_{n+1} \) is not fixed but varies over the decision space \( \mathcal{X} \), the predictive distribution~(\ref{eq_predictive}) should be understood as a predictive stochastic process over \( \mathcal{X} \), rather than a single-point distribution.

Using the surrogate model~\eqref{eq_y=m+xi+e}, the main challenge of computing \eqref{eq_predictive} lies in inferring the posterior distribution of $\xi(x_{n+1})$. Once this component is inferred, the posterior distributions of the remaining terms $\beta^\top x_{n+1}$ and $\epsilon_{n+1}$ can be directly obtained (see Algorithm~\ref{algo_gibbs_sampling}).
Let us begin with the classical BO approach, which places a GP prior on the $\{\xi(x): x \in \mathcal{X}\}$. In that case, the joint distribution of $(\xi(x_1), \dots, \xi(x_n), \xi(x_{n+1}))$ is multivariate Gaussian, i.e., $(\xi(x_1), \dots, \xi(x_{n+1})) \sim \mathcal{N}\left( \bm{0}, \Sigma_0(x_{1:{n+1}}, x_{1:{n+1}}) \right),$
where $\Sigma_0(x_{1:{n+1}}, x_{1:{n+1}}) \in \mathbb{R}^{(n+1)\times(n+1)}$ is the covariance matrix with entries $\Sigma_0(x_i, x_j) = \sigma^2 \rho_{\bm{\phi}}(x_i, x_j)$, $\forall i,j=1,\cdots,n+1$. Conditioning on the latent values at previously evaluated inputs $\xi(x_{1:n}) := (\xi(x_1), \dots, \xi(x_n))$, the posterior distribution of $\xi(x_{n+1})$ remains Gaussian.
Computing this conditional distribution corresponds to the well-known \textit{kriging} technique \citep{kriging_ankenman} and yields a predictive GP over $\mathcal{X}$:
\begin{equation} \label{eq_kriging_BO}
\underbrace{[\xi(x_{n+1})\mid \xi(x_{1:n}),\sigma^2,\bm{\phi}]}_{\text{kriging}} \sim \mathcal{GP}(\mu_n(x_{n+1}), \sigma_n^2(x_{n+1})),
\end{equation}
where the posterior mean and variance functions are given by
\begin{equation}
\mu_n(x_{n+1}) := \Sigma_0(x_{n+1}, x_{1:n}) \Sigma_0(x_{1:n}, x_{1:n})^{-1} \xi(x_{1:n}),
\end{equation}
\begin{equation} \label{kriging_variance}
\sigma_n^2(x_{n+1}) := \sigma^2 - \Sigma_0(x_{n+1}, x_{1:n}) \Sigma_0(x_{1:n}, x_{1:n})^{-1} \Sigma_0(x_{1:n}, x_{n+1}),
\end{equation}
with $\Sigma_0(x_{n+1}, x_{1:n}) := (\Sigma_0(x_{n+1}, x_1), \dots, \Sigma_0(x_{n+1}, x_n))\in\mathbb{R}^{1\times n}$.

In contrast, the $\infty$-GP model can be interpreted as a mixture of infinitely many GP surfaces $\{\xi^{(l)}\}_{l=1}^\infty$. 
Under this more flexible model, instead of assuming that all data lie on a single surface, the new $x_{n+1}$ may be realized on any of these surfaces. With $z_{n+1}$ representing the surface assignment of $x_{n+1}$, as illustrated by the red dots in Figure~\ref{fig:surface}, 
the prediction of $\xi(x_{n+1})=\xi^{(z_{n+1})}(x_{n+1})$ consists of two steps: 
\begin{itemize}
    \item \textbf{Step (a): Predicting the surface $\xi^{(z_{n+1})}$ on which $x_{n+1}$ is realized.}  
Conditioned on the previously realized surfaces $\bm{\xi}_{1:n} = (\xi^{(1)}, \dots, \xi^{(K_n)})$ and their surface assignments $\bm{z}_{1:n}$, and after {marginalizing out} the random measure $G_{x_{n+1}}$, the distribution of the surface assignment index $z_{n+1}$ follows the urn scheme of the DP~\citep{blackwell1973ferguson}. Specifically, the new observation either reuses one of the existing surfaces from $\bm{\xi}_{1:n}$ (solid lines in Figure~\ref{fig:surface}) or initiates a new surface drawn from the base measure $G_0 = \mathcal{GP}(0, \sigma^2 \rho_{\bm{\phi}}(\cdot, \cdot))$ (dashed lines in Figure~\ref{fig:surface}). Formally,
\begin{equation}\label{urn_scheme}
\begin{aligned}
    \underbrace{[\xi^{(z_{n+1})} \mid \bm{\xi}_{1:n}, \bm{z}_{1:n}, \Theta^{(2)}]}_{\text{Urn scheme}} 
&= \int [\xi^{(z_{n+1})} \mid \bm{\xi}_{1:n}, \bm{z}_{1:n}, \Theta^{(2)}, G_{x_{n+1}}]   [G_{x_{n+1}}]\\
&\sim \underbrace{\frac{\nu}{\nu + n} G_0}_{\mathrm{Opening\ a\ new\ surface}} + \sum_{j=1}^{K_n}\underbrace{ \frac{n_j}{\nu + n} \delta_{\xi^{(j)}}}_{\mathrm{Reusing\ old\ surfaces}}.
\end{aligned}
\end{equation}
Recall that $n_j$ denotes the number of previously evaluated inputs whose observations are realized on surface $\xi^{(j)}$. A surface with a larger $n_j$ is correspondingly more likely to be reused (with probability $\frac{n_j}{\nu+n}$). 
\item \textbf{Step (b): Performing kriging on the surface $\xi^{(z_{n+1})}$.}  
Given the surface index $z_{n+1}$ determined in Step (a), we predict $\xi^{(z_{n+1})}(x_{n+1})$ using standard GP kriging conditioned on the previously realized values $\xi^{(z_{n+1})}(x_{1:n})$ on that surface, following Eq.~\eqref{eq_kriging_BO}. 
\end{itemize}

Combining Step (a) and Step (b), the conditional distribution of 
$\xi^{(z_{n+1})}(x_{n+1})$ given the realized values 
$\bm{\xi}_{1:n}(x_{1:n}) := \left\{ \xi^{(j)}(x_i) \right\}_{\substack{i = 1,\dots,n \\ j = 1,\dots,K_n}}\in \mathbb{R}^{K_n \times n}$, the surface assignments 
$\bm{z}_{1:n}$, and the hyperparameters $\Theta^{(2)}$, admits the following closed form: \begin{equation}
\begin{aligned}\label{eq_urm_finite}
&{\textcolor{red}{   [\xi^{(z_{n+1})}(x_{n+1})|\bm{\xi}_{1:n}(x_{1:n}),\bm{z}_{1:n},\Theta^{(2)}]}}\\
&=\int \underbrace{[\xi^{(z_{n+1})}(x_{n+1})|\bm{\xi}_{1:n}(x_{n+1}),\bm{z}_{1:n},\Theta^{(2)}]}_{\text{Step (a): Urn scheme.}\ \text{Eq}.(\ref{urn_scheme})}\underbrace{[\bm{\xi}_{1:n}(x_{n+1})|\bm{\xi}_{1:n}(x_{1:n}),\Theta^{(2)}]}_{\text{Step (b): kriging. Eq.\eqref{eq_kriging_BO}}}\\
&\sim\frac{\nu}{\nu+n}G_0+\sum_{j=1}^{K_n}\frac{n_j}{\nu+n}\mathcal{GP}(\mu_n^{(j)}(x_{n+1}),\sigma_n^2(x_{n+1})),
\end{aligned}
\end{equation}
where $\mu_n^{(j)}(x_{n+1}) = \Sigma_0(x_{n+1}, x_{1:n}) \Sigma_0(x_{1:n}, x_{1:n})^{-1} \xi^{(j)}(x_{1:n})$, and $\sigma_n^2(x_{n+1})$ is defined in Eq.~\eqref{kriging_variance}. 
Then, the final step in obtaining the predictive distribution 
$[y(x_{n+1}) \mid \mathcal{H}_n]$ is to infer the posterior  
$[\bm{\xi}_{1:n}(x_{1:n}),\bm{z}_{1:n},\Theta^{(2)}|\mathcal{H}_n]$ using MCMC.
 We summarize the full predictive distribution decomposition in the following proposition. The proof is provided in Appendix~\ref{sec:ec:prop1}.

\begin{proposition}[Predictive Distribution]\label{prop_core_decompose}
    The predictive reward distribution for any unexplored solution $x_{n+1}\in\mathcal{X}$ is given by  
\begin{equation}
    \begin{aligned}\label{decompose2}
        &[y(x_{n+1})|\mathcal{H}_n]=\int\textcolor{blue}{\underbrace{[y(x_{n+1})|\Theta^{(1)},\xi^{(z_{n+1})}(x_{n+1})]}_{A:\ \text{Gaussian}}}\underbrace{[\Theta^{(1)},\xi^{(z_{n+1})}(x_{n+1})|\mathcal{H}_n]}_{B+C}\\
        &=\int\int\textcolor{blue}{\underbrace{[y(x_{n+1})|\Theta^{(1)},\xi^{(z_{n+1})}(x_{n+1})]}_{A:\ \text{Gaussian}}}\textcolor{red}{\underbrace{[\xi^{(z_{n+1})}(x_{n+1})|\bm{\xi}_{1:n}(x_{1:n}),\bm{z}_{1:n},\Theta^{(2)}]}_{B:\ Eq.\eqref{eq_urm_finite}}}\underbrace{[\Theta,\bm{\xi_{1:n}}(x_{1:n}),\bm{z}_{1:n}|\mathcal{H}_n]}_{C: \text{Simulation via MCMC}}.
    \end{aligned}
\end{equation}
The term A, $\textcolor{blue}{[y(x_{n+1}) \mid \Theta^{(1)}, \xi^{(z_{n+1})}(x_{n+1})]}$, follows $\mathcal{N}(x_{n+1}^\top \beta + \xi^{(z_{n+1})}(x_{n+1}), \tau^2)$.
\end{proposition}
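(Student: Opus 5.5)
The plan is to obtain Eq.~\eqref{decompose2} by two applications of the law of total probability (the bracket identity $\int [Y\mid X][X]=[Y]$), each combined with a conditional-independence statement read off the hierarchical model \eqref{eq_y=m+xi+e}--\eqref{eq_end_of_model}. The first equality comes from introducing and integrating out $(\Theta^{(1)},\xi^{(z_{n+1})}(x_{n+1}))$:
\[
[y(x_{n+1})\mid\mathcal{H}_n]=\int [y(x_{n+1})\mid \Theta^{(1)},\xi^{(z_{n+1})}(x_{n+1}),\mathcal{H}_n]\,[\Theta^{(1)},\xi^{(z_{n+1})}(x_{n+1})\mid\mathcal{H}_n].
\]
By \eqref{eq_y=m+xi+e}, $y(x_{n+1})=\beta^\top x_{n+1}+\xi(x_{n+1})+\epsilon_{n+1}$, where $\xi(x_{n+1})=\xi^{(z_{n+1})}(x_{n+1})$ and $\epsilon_{n+1}\sim N(0,\tau^2)$ is independent of everything else, in particular of $\mathcal{H}_n$. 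Hence, given $\Theta^{(1)}=(\beta,\tau)$ and $\xi^{(z_{n+1})}(x_{n+1})$, the response no longer depends on $\mathcal{H}_n$. This yields term A together with the stated law $\mathcal{N}(x_{n+1}^\top\beta+\xi^{(z_{n+1})}(x_{n+1}),\tau^2)$.

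For the second equality, I would further expand $[\Theta^{(1)},\xi^{(z_{n+1})}(x_{n+1})\mid\mathcal{H}_n]$ by introducing $(\Theta^{(2)},\bm{\xi}_{1:n}(x_{1:n}),\bm{z}_{1:n})$ and integrating them out. The chain rule gives
\[
[\xi^{(z_{n+1})}(x_{n+1}),\Theta,\bm{\xi}_{1:n}(x_{1:n}),\bm{z}_{1:n}\mid\mathcal{H}_n]
=[\xi^{(z_{n+1})}(x_{n+1})\mid\Theta,\bm{\xi}_{1:n}(x_{1:n}),\bm{z}_{1:n},\mathcal{H}_n]\,[\Theta,\bm{\xi}_{1:n}(x_{1:n}),\bm{z}_{1:n}\mid\mathcal{H}_n].
\]
Integrating out $\Theta^{(2)},\bm{\xi}_{1:n}(x_{1:n}),\bm{z}_{1:n}$, and then $\Theta^{(1)}$ in the outer integral, produces the double integral with term C.

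The key step, and the main obstacle, is justifying that the first factor reduces to term B, namely
\[
[\xi^{(z_{n+1})}(x_{n+1})\mid\bm{\xi}_{1:n}(x_{1:n}),\bm{z}_{1:n},\Theta^{(2)}].
\]
This requires that, given the latent values, the assignments and $\Theta^{(2)}$, the new latent value is independent of both $\Theta^{(1)}$ and the observations $\mathcal{H}_n$. I would argue this from the DAG of the model. The data $y(x_{1:n})$ depend on the latent layer only through $\xi(x_i)=\xi^{(z_i)}(x_i)$, which is a function of $(\bm{\xi}_{1:n}(x_{1:n}),\bm{z}_{1:n})$, and on the noises $\epsilon_{1:n}$, which are independent of the latent layer and are governed by $\Theta^{(1)}$. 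Meanwhile $\Theta^{(1)}$ is a priori independent of $\Theta^{(2)}$ and of the SDP draw. So $(\mathcal{H}_n,\Theta^{(1)})$ and $\xi^{(z_{n+1})}(x_{n+1})$ are d-separated by $(\bm{\xi}_{1:n}(x_{1:n}),\bm{z}_{1:n},\Theta^{(2)})$.

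A subtlety is that the inputs $x_{1:n}$ and $x_{n+1}$ are chosen adaptively from past data. I would handle this by noting that the selection rule depends only on $\mathcal{H}_n$ (plus independent randomization). Conditioning on $\mathcal{H}_n$ therefore fixes the locations, and the design factors cancel from the likelihood, so the conditional-independence argument goes through as for a fixed design.

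Once term B is identified, its closed form is exactly Eq.~\eqref{eq_urm_finite}, already derived by combining the urn scheme \eqref{urn_scheme} (Step (a)) with kriging \eqref{eq_kriging_BO} on each realized surface (Step (b)). The proposition then follows by substitution, with term C being the joint posterior that is left to MCMC.
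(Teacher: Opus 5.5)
Your proposal is correct and follows the same route as the paper's proof. You apply total probability to introduce $(\Theta^{(1)},\xi^{(z_{n+1})}(x_{n+1}))$ and read off the Gaussian term A from Eq.~\eqref{eq_y=m+xi+e}, then apply total probability again to split into term B (urn scheme composed with kriging) and term C. The paper asserts the conditional independence that lets term B drop $\Theta^{(1)}$ and $\mathcal{H}_n$, and it ignores the adaptive design; your argument fills that in (the data touch the latent layer only through $\xi^{(z_i)}(x_i)$ plus independent noise, and the design factors cancel), so it tightens the proof rather than changing it.
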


In the first line of Eq.~(\ref{decompose2}), conditioned on the $\xi^{(z_{n+1})}(x_{n+1})$ and the first-layer hyperparameters $\Theta^{(1)} = (\beta, \tau^2)$, the distribution $\textcolor{blue}{[y(x_{n+1}) \mid \Theta^{(1)}, \xi^{(z_{n+1})}(x_{n+1})]}$, i.e., Term A, is Gaussian.
The second line decomposes the posterior $[\Theta^{(1)}, \xi^{(z_{n+1})}(x_{n+1}) \mid \mathcal{H}_n]$ into Term B and Term C. Here, Term B, i.e., $\textcolor{red}{[\xi^{(z_{n+1})}(x_{n+1})|\bm{\xi}_{1:n}(x_{1:n}),\bm{z}_{1:n},\Theta^{(2)}]}$, follows the $(K_n+1)$-GP mixture defined in Eq.~\eqref{eq_urm_finite}. Term C, $[\Theta,\bm{\xi_{1:n}}(x_{1:n}),\bm{z}_{1:n}|\mathcal{H}_n]$, denotes the joint posterior over the hyperparameters $\Theta$ and the latent variables $\bm{\xi}_{1:n}$ and $\bm{z}_{1:n}$, which can be efficiently inferred via a tailored Gibbs sampling procedure (see Section~\ref{section_7}).
We note that, strictly speaking, Term C should be written as
$
[\Theta^{(1)} \mid \bm{\xi}_{1:n}(x_{1:n}), \bm{z}_{1:n}, \Theta^{(2)}, \mathcal{H}_n]  [\Theta^{(2)}, \bm{\xi}_{1:n}(x_{1:n}), \bm{z}_{1:n} \mid \mathcal{H}_n].
$
However, the marginalization over $\Theta^{(1)}$ is implicitly embedded in the Gibbs sampling scheme. Therefore, we adopt the current compact representation of Term C for simplicity.

To compute the predictive distribution in Eq.~\eqref{decompose2}, one would, in principle, approximate the integral by drawing multiple samples from the posterior
$[\Theta,\bm{\xi}_{1:n}(x_{1:n}),\bm{z}_{1:n} \mid \mathcal{H}_n]$
via Gibbs sampling and averaging over them, which can be computationally intensive.
However, when TS is used as the acquisition strategy, we can bypass repeated posterior sampling, as TS requires only a single draw from
\(
[\Theta,\bm{\xi}_{1:n}(x_{1:n}),\bm{z}_{1:n}\mid\mathcal{H}_n].
\)

\subsection{$\infty$-GP Thompson Sampling}\label{sec-TS}
Therefore, we now turn to the TS acquisition policy and provide a detailed description of its implementation under the \(\infty\)-GP surrogate model, referred to as $\infty$-GP-TS.

It is important to emphasize that, although the surrogate modeling and predictive inference in the previous sections are performed directly on the noisy observed reward \( y(x) \), the ultimate goal in BO is to maximize the expected reward \( \mu^\ast(x)=\mathbb{E}(y(x)) \). Accordingly, the TS acquisition policy proceeds by drawing a sample from the posterior distribution of \( \mu^\ast(x) \) given the data \( \mathcal{H}_n \), and selecting its maximizer as the next point to evaluate,  i.e., 
\begin{equation}\label{eq_ts_initial}
    x_{n+1}=\arg\max_{x \in \mathcal{X}} \hat{\mu}^\ast(x), \quad \hat{\mu}^\ast \sim [\mu^\ast \mid \mathcal{H}_n].
\end{equation}

We now describe the posterior $[\mu^\ast \mid \mathcal{H}_n]$ and show how TS policy,  i.e., Eq.~(\ref{eq_ts_initial}), is implemented. We define $\Theta' := \{\xi^{(z_{n+1})}(x_{n+1}), \Theta^{(1)}\}$. According to the decomposition in Proposition~\ref{prop_core_decompose}, the posterior predictive distribution of $y(x_{n+1})$ can be expressed as
$$[y(x_{n+1}) \mid \mathcal{H}_n] 
= \int \textcolor{blue}{[y(x_{n+1}) \mid \Theta']}   [\Theta' \mid \mathcal{H}_n]
= \int \textcolor{blue}{\mathcal{N}(x_{n+1}^\top \beta + \xi^{(z_{n+1})}(x_{n+1}), \tau^2)}   [\Theta' \mid \mathcal{H}_n].$$
Therefore, conditioned on the latent $\Theta'$, the distribution $[y(x_{n+1})|\Theta']$ is Gaussian and the conditional expected reward is $[\mu^\ast(x_{n+1}) \mid \Theta'] = x_{n+1}^\top \beta + \xi^{(z_{n+1})}(x_{n+1}).$ The uncertainty is captured by the posterior distribution $[\Theta'|\mathcal{H}_n]$.
This allows TS under the $\infty$-GP model to be implemented as follows: First, draw a posterior sample $\hat{\Theta}' \sim [\Theta' \mid \mathcal{H}_n]$; Second, select the next evaluation point by $$x_{n+1} = \arg\max_{x \in \mathcal{X}}   x^\top \hat{\beta} + \hat{\xi}^{(z_{n+1})}(x).$$
Here, the hat notation (e.g., $\hat{\nu}, \hat{\beta}, \hat{K}_n, \{\hat{n}_j\}_{j=1}^{\hat{K}_n}$) denotes posterior samples.

The remaining question is how to generate samples from $[\Theta' \mid \mathcal{H}_n]$, particularly the trajectory $\hat{\xi}^{(z_{n+1})} = \{ \hat{\xi}^{(z_{n+1})}(x_{n+1}) : x_{n+1} \in \mathcal{X} \}$.
 Based on Proposition~\ref{prop_core_decompose}, we decompose
\begin{equation}
[\Theta' \mid \mathcal{H}_n] = \int 
\textcolor{red}{\underbrace{[\xi^{(z_{n+1})}(x_{n+1}) \mid \bm{\xi}_{1:n}(x_{1:n}), \bm{z}_{1:n}, \Theta^{(2)}]}_{\text{Term B}}}  
\underbrace{[\Theta, \bm{\xi}_{1:n}(x_{1:n}), \bm{z}_{1:n} \mid \mathcal{H}_n]}_{\text{Term C}}.
\end{equation}

Sampling from this hierarchical distribution can be carried out in two steps:
\begin{enumerate}
    \item[$\bullet$] Sample $(\hat{\Theta}, \bm{\hat{\xi}}_{1:n}(x_{1:n}), \bm{\hat{z}}_{1:n}) \sim [\Theta, \bm{\xi}_{1:n}(x_{1:n}), \bm{z}_{1:n} \mid \mathcal{H}_n]$ via a Gibbs sampler (see Algorithm \ref{algo_gibbs_sampling});
    \item[$\bullet$] Draw a sample path $\hat{\xi}^{(z_{n+1})}$ from $\textcolor{red}{[\xi^{(z_{n+1})} \mid \hat{\Theta}^{(2)}, \bm{\hat{\xi}}_{1:n}(x_{1:n}), \bm{\hat{z}}_{1:n}]}$,
    which follows the finite GP mixture specified in Eq.~\eqref{eq_urm_finite}: \begin{equation}\label{eq_urm_finite_3}
\begin{aligned}
\textcolor{red}{[\xi^{(z_{n+1})}(x_{n+1}) \mid \hat{\Theta}^{(2)}, \bm{\hat{\xi}}_{1:n}(x_{1:n}), \bm{\hat{z}}_{1:n}]
}\sim 
\underbrace{\frac{\hat{\nu}}{\hat{\nu} + n} G_0}_{\mathrm{exploration}}
+ 
\sum_{j=1}^{\hat{K}_n} \underbrace{\frac{\hat{n}_j}{\hat{\nu} + n}  
\mathcal{GP}(\hat{\mu}_n^{(j)}(x_{n+1}), \hat{\sigma}_n^2(x_{n+1}))}_{\mathrm{exploitation}},
\end{aligned}
\end{equation}
\end{enumerate}
where the first term on the RHS corresponds to ``exploration" by opening a new surface from the base $G_0$, while the second term corresponds to ``exploitation" by reusing empirical distributions $\mathcal{GP}(\hat{\mu}_n^{(j)}(\cdot), \hat{\sigma}_n^2(\cdot))$ constructed from the observed data.
Efficient sampling from posterior GP surfaces 
$\mathcal{GP}(\hat{\mu}_n^{(j)}(\cdot), \hat{\sigma}_n^2(\cdot))$, 
as required here, has been extensively studied; see 
\cite{wilson2020efficiently, lin2023sampling, zhou2025accelerating}. 
This step constitutes a primary source of computational complexity, typically scaling as $\mathcal{O}(n^3)$.
Another computational bottleneck arises from the Gibbs sampling procedure.
To mitigate this, we develop a tailored fast Gibbs sampler. We defer the discussion of computational issues to Section~\ref{section_7}.

To conclude, the pseudocode of the proposed $\infty$-GP-TS algorithm is provided in Algorithm~\ref{alg:thompson_sampling}. Rather than using the vanilla TS policy,  we adopt a $\zeta^n$-greedy variant, a commonly used strategy in TS implementations to avoid getting stuck in a suboptimal region. Specifically, at the $n$-th iteration, with probability \(1 - \zeta^n\), the original TS policy is executed, while with probability \( \zeta^n \), an action is selected uniformly at random from \( \mathcal{X} \).
To avoid incurring a constant additional regret, we adopt a decaying $\zeta^n = C_1 n^{-\lambda_1}$, where $C_1 > 0$ and $0 < \lambda_1 < 1$, following the recommendation in \citet{auer2002finite}. In the remainder of the paper, TS refers by default to this $\zeta^n$-greedy version. 

\begin{algorithm}[ht]
\caption{Thompson Sampling with $\infty$-GP Surrogate Model }
\label{alg:thompson_sampling}
\begin{algorithmic}[1]
\State \textbf{Input:}  \( \mathcal{H}_0 \), total budget \(N\), decay schedule \( \zeta^n = C_1 n^{-\lambda_1} \).
\For{\(n = 1\) to \(N\)}
    \State \textbf{Step 1:} Sample \( (\hat{\Theta}, \bm{\hat{\xi}}_{1:n}(x_{1:n}), \bm{\hat{z}}_{1:n}) \sim [\Theta, \bm{\xi}_{1:n}(x_{1:n}), \bm{z}_{1:n} \mid \mathcal{H}_n] \) via Gibbs sampling (see Algorithm~\ref{algo_gibbs_sampling}).
    \State \textbf{Step 2:} Sample \( \hat{\xi}^{(z_{n+1})} \sim [\xi^{(z_{n+1})} \mid \hat{\Theta}^{(2)}, \bm{\hat{\xi}}_{1:n}(x_{1:n}), \bm{\hat{z}}_{1:n}] \) using Eq.~\eqref{eq_urm_finite_3}.
    \State \textbf{Step 3:} With probability \( 1 - \zeta^n \), let 
        $$
        x_{n+1} = \arg\max_{x \in \mathcal{X}}   x^\top \hat{\beta} + \hat{\xi}^{(z_{n+1})}(x),
        $$
        otherwise, choose \( x_{n+1} \sim \text{Unif}(\mathcal{X}) \).
    \State \textbf{Step 4:} Evaluate $y(x_{n+1})$, update history: \( \mathcal{H}_{n+1} = \mathcal{H}_n \cup \{(x_{n+1}, y(x_{n+1}))\} \).
\EndFor
\end{algorithmic}
\end{algorithm}

\section{Theoretical Analysis: Understanding the Superiority of $\infty$-GP over GP}\label{Section_6}


In this section, we analyze the regret of TS when using either a classical GP or the proposed $\infty$-GP surrogate model, only assuming that the objective $\mu^\ast(x)$ is Lipschitz continuous and that the noise satisfies mild tail conditions.
To the best of our knowledge, no existing regret analysis framework extends to such general reward distributions, thus necessitating a new analytical approach.

Our key idea is intuitive. TS selects actions according to the posterior probability of being optimal under the surrogate model. Consequently, the regret can be controlled by the discrepancy between the surrogate posterior and the true reward distribution. We quantify this discrepancy using the total variation distance (TVD), defined for any two probability measures $p$ and $q$ on a measurable space $(\Omega, \mathcal{F})$ as
$$
\delta_{\mathrm{TV}}(p, q) := \sup_{A \in \mathcal{F}} |p(A) - q(A)|.
$$
Figure~\ref{fig:tvd_gap} provides an illustrative example of our idea. Suppose a classical GP surrogate is used. At a given input $x_0$, the posterior predictive distribution is Gaussian. But if the true distribution has heavier tails than a Gaussian (e.g., $t$-distribution) or is multi-modal, a Gaussian model cannot represent it.
As the figure shows, even with infinitely many data, the best Gaussian fit to 
the true distribution still exhibits a non-vanishing TVD gap (the blue shaded region). 
Under our TVD-based regret bound, this implies an irreducible regret. In contrast, by adaptively adjusting its complexity based on the observed data,
the $\infty$-GP surrogate can flexibly represent a wide range of reward distributions, thereby achieving vanishing regret.

\begin{figure}
    \centering
    \includegraphics[width=1\linewidth]{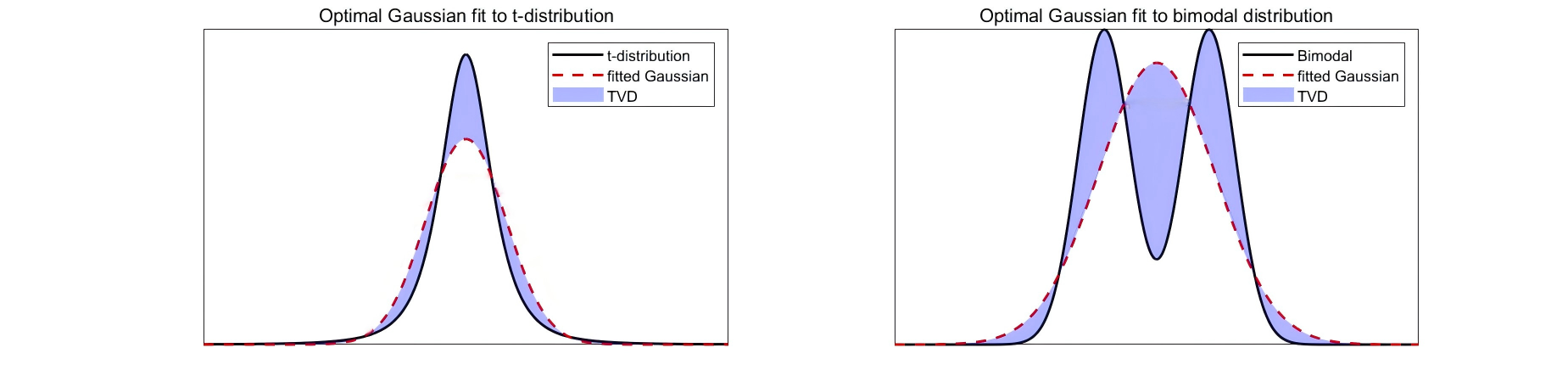}
    \caption{Illustration of the modeling-capacity limitation of GP surrogates. Even with infinite data, 
the best Gaussian approximation cannot match a $t$-distribution or a bimodal distribution, leaving a persistent TVD gap 
(blue shaded region).}
    \label{fig:tvd_gap}
\end{figure}

Therefore, in this section, we establish the no-regret property of the \( \infty \)-GP-TS algorithm in two steps. 
First, in Section~\ref{section_uni_conver}, we focus on the $\infty$-GP model itself and establish its broad convergence, showing that it can approximate a wide range of reward distributions, in contrast to GP models.
Second, in Section~\ref{section_new_regret_frame}, we introduce a new regret analysis framework for TS that links the regret directly to the TVD between the surrogate posterior and the true reward distribution.

Before proceeding, we briefly review and introduce several notations. Let
 $f^\ast(y|x)$ denote the true reward density at decision $x$ and let $f_n(y|x):= f(y|x,\mathcal{H}_n)$ denote the surrogate model’s posterior predictive distribution given $\mathcal{H}_n$. Let $\mu^\ast(x):=\mathbb{E}_{y\sim f^\ast(\cdot|x)}{y}$ be the true expected reward and $\mu^\ast_{\mathrm{max}}:= \max_{x\in\mathcal{X}}\mu^\ast(x)<\infty$.
Suppose that at the $i$-th iteration, the adopted acquisition action is $\pi^i\in\mathcal{X}$, then the instantaneous regret is $r_i:=\mu^\ast_{\mathrm{max}}-\mathbb{E}(y(\pi^i))$ and the cumulative regret is 
$\mathcal{R}_n:= \sum_{i=1}^nr_i$. 
The expectation is over the stochastic reward and the randomness of the acquisition policy. 
A sublinear growth of cumulative regret is referred to as $\textit{no-regret}$, indicating that the BO algorithm is asymptotically effective. $L_1(\mathbb{R}) $ denotes the space of all integrable functions on $\mathbb{R}$.

\subsection{Broad Convergence of $\infty$-GP Model}\label{section_uni_conver}

If we repeatedly sample from a fixed input \( x_0 \), the predictive distribution of \( \infty \)-GP surrogate model \( f_n(\cdot | x_0) \) can be shown to converge almost surely to the true conditional distribution \( f^\ast(\cdot | x_0) \), i.e.,
$$
\delta_{\mathrm{TV}}(f_n(\cdot | x_0), f^\ast(\cdot | x_0)) \to 0 \quad \text{as } n \to \infty.
$$
However, in BO, different inputs \( x \in \mathcal{X} \) are visited according to a distribution induced by the acquisition policy. As a result, the \( \delta_{\mathrm{TV}} \)-convergence holds in a visitation-weighted sense over \( \mathcal{X} \). Let \( \pi(x) \) denote the (discounted) visitation distribution induced by the sequence of acquisition policies \( (\pi^1, \pi^2, \dots) \), representing the frequency of sampling each \( x \). This distribution is used solely for theoretical analysis and does not need to be computed in practice. 
\begin{assumption}\label{ass_guangzhi}
The true reward density \( f^\ast(y|x) \) satisfies the following:
\begin{itemize}
    \item[$\mathrm{(a)}$]  There exists \( \eta > 0 \) such that
    \(
    \int_{\mathcal{X}} \int_{\mathbb{R}} |y|^{2+\eta} f^\ast(y|x)  dy  \pi(x)   dx < \infty.
    \)
    
    \item[$\mathrm{(b)}$] For all \( x \in \mathcal{X} \), \( 0 < f^\ast(y|x) < \infty \).
    
    \item[$\mathrm{(c)}$] 
    \(
    \left| \int_{\mathcal{X}} \int_{\mathbb{R}} f^\ast(y|x) \log f^\ast(y|x)   dy   \pi(x)   dx \right| < \infty,
    \)
    \(
    \left| \int_{\mathcal{X}} \int_{\mathbb{R}} f^\ast(y|x) \log \frac{f^\ast(y|x)}{\inf_{t \in [y-1, y+1]} f^\ast(t|x)}   dy   \pi(x)   dx \right| < \infty.
    \)
    
    \item[$\mathrm{(d)}$]  The mapping \( (x, y) \mapsto f^\ast(y|x) \) is jointly continuous on \( \mathcal{X} \times \mathbb{R} \), and the \( x \mapsto f^\ast(\cdot|x) \) is continuous as a function from \( \mathcal{X} \) to \( L_1(\mathbb{R}) \).
\end{itemize}
\end{assumption}
\begin{theorem}[Convergence of the \( \infty \)-GP Model]\label{theo_uni_converge}
Suppose Assumption~\ref{ass_guangzhi} holds. Then, as \( n \to \infty \),
\begin{equation}\label{eq_theo_uni_conve}
    \int_{\mathcal{X}} \delta_{\mathrm{TV}}(f_n(\cdot | x), f^\ast(\cdot | x))   \pi(x)   dx \to 0 \quad \text{almost surely}.
\end{equation}
\end{theorem}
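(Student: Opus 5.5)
The plan is to treat the $\infty$-GP as a prior $\Pi$ on conditional densities and apply Schwartz-type posterior consistency theory for conditional density estimation, specifically the framework of \citet{ghosal2017fundamentals} and its covariate-dependent extensions (Pati, Dunson and Tokdar; Norets and Pati). The covariates are drawn from the visitation distribution $\pi$, so the relevant joint density is $f^\ast(y|x)\pi(x)$. The target metric $\int \delta_{\mathrm{TV}}(f_n(\cdot|x),f^\ast(\cdot|x))\pi(x)\,dx$ is then, up to a factor $1/2$, the $L_1$ distance between the joint densities $f_n(y|x)\pi(x)$ and $f^\ast(y|x)\pi(x)$.

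I would argue in three steps.

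\emph{Step 1 (strong posterior consistency).} I will show that for every $\epsilon>0$,
\[
\Pi\Big(f:\textstyle\int d_{\mathrm{TV}}(f(\cdot|x),f^\ast(\cdot|x))\pi(x)dx>\epsilon \,\Big|\, \mathcal{H}_n\Big)\to 0 \quad \text{a.s.}
\]

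\emph{Step 2 (passing to the predictive).} Since $f_n(\cdot|x)=\int f(\cdot|x)\,d\Pi(f|\mathcal{H}_n)$ and TV distance is convex, Jensen gives
\[
\int \delta_{\mathrm{TV}}(f_n,f^\ast)\pi \le \epsilon + \Pi(\text{complement}\mid\mathcal{H}_n).
\]
The second term vanishes a.s. by Step 1, which yields \eqref{eq_theo_uni_conve}.

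\emph{Step 3 (proving Step 1).} Here I verify the two Schwartz conditions.

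(i) \emph{KL support.} For every $\epsilon>0$,
\[
\Pi\Big(f:\textstyle\int\int f^\ast\log(f^\ast/f)\,dy\,\pi(x)dx<\epsilon\Big)>0.
\]

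(ii) \emph{Sieve conditions.} There is a sieve $\mathcal{F}_n$ with $\Pi(\mathcal{F}_n^c)\le e^{-cn}$ and $L_1$-metric entropy $\log N(\epsilon,\mathcal{F}_n)\le \delta n$ for small $\delta$.

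For (i), I follow the classical Ghosal–Ghosh–Ramamoorthi / Tokdar argument for DP location mixtures of normals. First, truncate $f^\ast$ in $y$ using the moment condition (a) and the entropy conditions (c). These are exactly the Tokdar-type conditions that make the local-infimum log ratio integrable. Next, approximate $f^\ast(\cdot|x)$ by a convolution $\int N(y;x^\top\beta+\theta,\tau^2)P_x(d\theta)$ with small $\tau$ and a compactly supported mixing measure $P_x$, where (b) ensures positivity so that the log ratio is controlled. Then use continuity in $x$ from (d), together with compactness of $\mathcal{X}$, to take $P_x$ piecewise close to a finite mixture $\sum_{l\le L}w_l\delta_{\theta_l(x)}$ with continuous $\theta_l(\cdot)$. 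Finally, use that the squared-exponential GP $G_0$ has full sup-norm support on $C(\mathcal{X})$ and that the stick-breaking weights have full support on the simplex. Together these give positive prior probability to the event that the first $L$ surfaces are uniformly close to $\theta_l(\cdot)$, the weights are close to $w_l$, the tail mass is small, and $\tau,\beta$ are near their targets. The hyperparameter priors have full support on their ranges. Dominated convergence, controlled by (a) and (c), then shows the KL divergence is small on this event.

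For (ii), I take sieves in which $\tau\in[\underline\tau_n,\bar\tau_n]$, the first $m_n$ surfaces lie in a GP RKHS ball plus a sup-norm ball with bounded $\|\cdot\|_\infty\le M_n$, and the tail stick mass beyond $m_n$ is less than $\epsilon$. The GP concentration bounds of van der Vaart–van Zanten, the Gamma/inverse-Gamma tails, and the stick-breaking tails give the exponentially small complement. Covering numbers of such mixtures are polynomial in $M_n/\underline\tau_n$ and sub-linear in $n$.

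One subtlety is that the data are not i.i.d. from $\pi$, because the $x_i$ are chosen adaptively. I would handle this by a martingale version of Schwartz's theorem: the test and KL arguments only require that conditional on the past the $y_i$ are drawn from $f^\ast(\cdot|x_i)$, and the likelihood-ratio martingale arguments go through with $\pi$ as the averaged design.

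I expect the main obstacle to be the KL-support step (i). It requires an approximation that holds uniformly over $x$ by finitely many continuous GP surfaces with common weights, while controlling the log-ratio tails through condition (c).
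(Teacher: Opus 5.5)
The architecture you propose is the paper's. You view the $\infty$-GP as a prior on conditional densities, get strong consistency in the $\pi$-integrated $L_1$ metric from KL support plus a sieve (the Ghosal--Ghosh--Ramamoorthi theorem in the conditional form of Pati, Dunson and Tokdar), and then pass to the predictive. Your convexity argument in Step 2 is a cleaner substitute for the paper's appeal to Theorem 2.5 of Ghosal, Ghosh and van der Vaart. Your KL-support sketch is the content of Pati et al.'s Theorem 6.1, which the paper simply cites; the common weights with continuous atoms come from the conditional quantile function $\theta_l(x)=F^{\ast-1}(u_l\mid x)$.

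The step that fails is 3(ii). For the untruncated prior, the sieve you describe cannot meet both conditions at once. Write $\sum_{l>m}w_l=\prod_{r\le m}(1-V_r)=\exp(-S_m)$ with $S_m\sim\mathrm{Gamma}(m,\nu)$. Then
\[
\Pi\Big(\textstyle\sum_{l>m}w_l>\epsilon \,\Big|\, \nu\Big)=P\big(\mathrm{Gamma}(m,1)<\nu\log(1/\epsilon)\big)=e^{-m\log m\,(1+o(1))}.
\]
So an $e^{-cn}$ bound on the weight-tail event forces $m_n\gtrsim n/\log n$ for fixed $\nu$. Once the Gamma prior on $\nu$ is integrated out it forces $m_n\gtrsim n$, because $\Pi(\nu>m/\log(1/\epsilon))$ is only $e^{-O(m)}$; at that point even the simplex alone has entropy of order $n\log n$. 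Meanwhile, an $e^{-cn}$ bound on $\Pi(\xi\notin B_n)$ forces an RKHS radius of order $\sqrt n$. Each of the $m_n$ surfaces then costs sup-norm entropy growing like a power of $\log n$. The total entropy is therefore $n$ times a diverging factor: not $o(n)$, and not even $\le\beta n$. Your claim that the covering numbers are ``polynomial in $M_n/\underline\tau_n$ and sub-linear in $n$'' drops the factor $m_n$.

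The paper gets past this only by invoking the truncation $w_l=0$ for $l>L_n=O(\log n)$. The weight-tail event is then empty once $m_n\ge L_n$, and only $O(\log n)$ surfaces need to be covered. Without truncation you would need a stronger consistency theorem. One option is a partitioned sieve $\mathcal{F}_n=\bigcup_j\mathcal{F}_{n,j}$, indexed by the number of non-negligible surfaces, with a summability condition $\sum_j\sqrt{N(\epsilon,\mathcal{F}_{n,j})\,\Pi(\mathcal{F}_{n,j})}=e^{o(n)}$. The classical device of covering the mixing measure instead of the atoms does not help, because the relevant ball of GP paths has covering numbers super-polynomial in $n$.

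Separately, your martingale treatment of the adaptive design is not a valid shortcut. With history-dependent $x_i$, the Schwartz denominator bound needs KL control along the realized design points (for instance uniformly in $x$), not $\pi$-averaged KL. The tests control an empirical-design average such as $n^{-1}\sum_i\delta_{\mathrm{TV}}(f(\cdot\mid x_i),f^\ast(\cdot\mid x_i))$, not the $\pi$-integral. You could reach the $\pi$-integral only if the empirical design converged to a fixed $\pi$. The paper does not attempt this: it proves the statement for $(x_i,y_i)$ drawn i.i.d.\ from $\pi(x)f^\ast(y\mid x)$, and you should adopt that reading.
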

The proof is provided in Appendix~\ref{sec_ec_proof_theo1}.
The convergence is almost sure with respect to data \( \mathcal{H}_\infty = \{(x_i, y(x_i))\}_{i=1}^\infty \) drawn i.i.d. from \( \pi(x) f^\ast(y|x) \). However, the visitation distribution $\pi(x)$ induced by vanilla TS may concentrate in certain regions of \( \mathcal{X} \) \citep{basu2017adaptive}. Consequently, some regions of $\mathcal{X}$ may remain inadequately explored, and the surrogate model over these underexplored regions may fail to converge to the true model. Such insufficient exploration may lead to a failure of convergence in BO, especially under general reward settings. This motivates our earlier introduction of the $\zeta^n$-greedy mechanism in TS, which solves this issue by enforcing exploration over $\mathcal{X}$.

The following Remark~\ref{remark_1} shows that Assumption~\ref{ass_guangzhi} is mild, and that the $\infty$-GP can approximate and converge to a significantly broader class of reward distributions than those allowed by the classical GP Assumption~\ref{ass_0}.
This flexibility provides the theoretical basis for the no-regret guarantees of the $\infty$-GP-TS developed in the next subsection.

\begin{remark}[Mildness of the Assumption~\ref{ass_guangzhi}]\label{remark_1}
    Assumption~\ref{ass_guangzhi} is very mild and strictly weaker than the classical Assumption \ref{ass_0} commonly imposed in GP-based BO. To facilitate a clear comparison with Assumption~\ref{ass_0}, consider the following broad class of reward distributions that satisfy Assumption~\ref{ass_guangzhi}:
\begin{equation}
    y(x) = \underbrace{\mu^\ast(x)}_{\text{Continuous}} + \underbrace{\epsilon^\ast(x)}_{\sum_{j=1}^{J} \omega_j(x) \psi_j},
\end{equation}
where \( \mu^\ast(x) \) is merely continuous without any additional smoothness assumptions, and \( \epsilon^\ast(x) \) is a mixture of distributions:
\(
\epsilon^\ast(x) \sim \sum_{j=1}^{J} \omega_j(x) \psi_j,
\)
where \( J \geq 1 \) and \( \omega_j(x) \in [0,1] \) are continuous input-dependent weights with \( \sum_{j=1}^{J} \omega_j(x) = 1 \). \( \psi_j \) may represent a wide variety of distribution types, including (sub-)Gaussian distributions, \( t \)-distributions with degrees of freedom \( > 2 \), as well as exponential or sub-Weibull distributions~\citep{subweibull} when the reward is non-negative. 
 Moreover, Assumption~\ref{ass_guangzhi} allows for heteroscedastic noise, i.e., noise whose distribution varies with input. 

\end{remark}

\subsection{Regret Analysis for General Reward Distributions}\label{section_new_regret_frame}
In this subsection, we establish a new regret analysis framework for TS, which links the regret directly to the TVD between the true reward distribution and the surrogate model used by the decision maker.

\subsubsection{A Central Lemma.}\label{sec_central_lemma} In this paper, we focus on asymptotic decay rates as $n\to\infty$, disregarding constant factors. Let $\precsim$ denote that the decay rate of the left-hand side is asymptotically smaller than or equal to that of the right-hand side as $n\to\infty$.

\begin{assumption}\label{ass_tail0}
    $\forall x\in\mathcal X$, there exist constants $a_1,a_2,a_3,\gamma>0$ such that for all $|y|>a_3$, $\max\big\{f^\ast(y\mid x),f_n(y\mid x)\big\}\le\ a_1\exp\big(-a_2|y|^{\gamma}\big).$
\end{assumption}

\begin{lemma}[Regret Bound via TVD]\label{lemma_central}

At iteration $n+1$, suppose that $\mathcal{X}$ is finite and Assumption \ref{ass_tail0} holds. 
Then, up to a logarithmic factor, the instantaneous regret is upper bounded by
\begin{equation}\label{lemma_1_1}
r_{n+1}\precsim \mu^\ast_{\max}  \delta_{\mathrm{TV}}\big(f_n(\cdot \mid \mathcal{X}), f^\ast(\cdot \mid \mathcal{X})\big),
\end{equation}
where $f_n(\cdot \mid \mathcal{X}) := \{f(\cdot \mid x, \mathcal{H}_n): x \in \mathcal{X}\}$ denotes the joint predictive distribution over rewards at each $x\in\mathcal{X}$ under surrogate model $f$, and $f^\ast(\cdot \mid \mathcal{X}) := \{f^\ast(\cdot \mid x): x \in \mathcal{X}\}$ denotes the joint distribution under the true reward model. The bound holds almost surely under the distribution of $\mathcal{H}_n$.
\end{lemma}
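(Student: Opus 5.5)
We compare the TS action under the surrogate with a benchmark in which the reward vector is drawn from the true joint law. Fix $\mathcal{H}_n$ and write $Y=(Y_x)_{x\in\mathcal{X}}\sim f_n(\cdot\mid\mathcal{X})$ and $Y^\ast=(Y^\ast_x)_{x\in\mathcal{X}}\sim f^\ast(\cdot\mid\mathcal{X})$. By the coupling characterization of total variation, there is a coupling $(Y,Y^\ast)$ with $\mathbb{P}(Y\neq Y^\ast)=\delta:=\delta_{\mathrm{TV}}(f_n(\cdot\mid\mathcal{X}),f^\ast(\cdot\mid\mathcal{X}))$.

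TS selects $\pi^{n+1}=\arg\max_x \hat\mu(x)$, where $\hat\mu$ is a posterior draw of the mean function. We treat $\hat\mu$ as a measurable functional of a posterior reward draw, with $\hat\mu(x)=\mathbb{E}[Y_x\mid\text{latent}]$. Coupling at the latent level, which is also controlled by TV via data processing, we obtain the same action under the "ideal" sampler on the event $\{Y=Y^\ast\}$. Under the ideal sampler the conditional means equal $\mu^\ast$, so the chosen action is optimal and its regret is zero (or $\zeta^n$-small, absorbed into lower-order terms).

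The regret then splits into two pieces:
\[
r_{n+1}=\mathbb{E}\big[(\mu^\ast_{\max}-\mu^\ast(\pi^{n+1}))\mathbf 1\{Y=Y^\ast\}\big]+\mathbb{E}\big[(\mu^\ast_{\max}-\mu^\ast(\pi^{n+1}))\mathbf 1\{Y\neq Y^\ast\}\big].
\]
The first term vanishes. For the second, the gap is bounded by $2\max_x|\mu^\ast(x)|$. That gives $\precsim\mu^\ast_{\max}\delta$ directly if $\mu^\ast$ is bounded, which holds since $\mathcal{X}$ is finite.

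If instead the argument must pass through realized rewards (e.g. bounding $\mathbb{E}|Y_x-Y^\ast_x|\mathbf 1\{Y\ne Y^\ast\}$), we would truncate at level $M$. Below $M$ the contribution is at most $M\delta$ times the finite number of arms. Above $M$, Assumption~\ref{ass_tail0} gives tail mass at most $a_1\int_{M}^\infty |y|e^{-a_2|y|^\gamma}dy\approx e^{-a_2M^\gamma}M^{2-\gamma}$. Choosing $M=(a_2^{-1}\log(1/\delta))^{1/\gamma}$ makes the tail $O(\delta\,\mathrm{polylog}(1/\delta))$, and this is the source of the logarithmic factor. The finite-$\mathcal{X}$ assumption lets us union-bound over arms with only a $|\mathcal{X}|$ constant.

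The main obstacle is the first step: justifying that TS under the true model makes no error on the coupling event. The TS action depends on the sampled mean function rather than on sampled rewards, so we need the TV distance between the joint predictive laws to control the mismatch in the law of the argmax. We would try to define both samplers as the same measurable map of a reward-level draw, namely the argmax of an estimator of the mean built from the draw. We would then use the fact that TV contracts under measurable maps, $\delta_{\mathrm{TV}}(\mathrm{Law}(\pi^{n+1}),\mathrm{Law}(\pi^\ast))\le\delta$, together with the truncation above to handle unbounded rewards. Since the almost-sure statement is conditional on $\mathcal{H}_n$, every step is carried out pointwise in $\mathcal{H}_n$.
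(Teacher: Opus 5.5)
\textbf{There is a genuine gap at the step you flag as the main obstacle.} It cannot be closed while the controlling quantity is the TV distance between \emph{predictive} laws.

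Your first term $\mathbb{E}[(\mu^\ast_{\max}-\mu^\ast(\pi^{n+1}))\mathbf{1}\{Y=Y^\ast\}]$ does not vanish. TS acts on a posterior draw $\hat\mu$ of the mean function, and $\hat\mu$ is not a measurable function of a predictive reward draw $Y$. The dependence runs the other way: $Y$ is a noisy function of the latent.

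Data processing therefore runs in the wrong direction. It gives $\delta_{\mathrm{TV}}(\mathrm{Law}(Y),\mathrm{Law}(Y^\ast))\le$ (latent-level TV), not the converse you need. Moreover, under the truth the ``latent'' is the point mass at $f^\ast$, so the latent-level TV equals $1-\Pi_{\mathcal X}(\{f^\ast\}\mid\mathcal H_n)$, which is typically $1$.

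The implication you need is in fact false. Take two arms with independent true laws $N(1,2)$ and $N(0,2)$. Take a surrogate whose posterior gives $\hat\mu(x_1)\sim N(1,1)$ and $\hat\mu(x_2)\sim N(0,1)$ independently, with likelihood $N(\hat\mu(x),1)$. The joint predictive then coincides exactly with $f^\ast(\cdot\mid\mathcal X)$, and Assumption~\ref{ass_tail0} holds. Yet TS picks $x_2$ with probability $\Phi(-1/\sqrt{2})\approx 0.24$.

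Your fallback also fails. Making both actions the argmax of an estimator computed from a single reward draw changes the algorithm. Its benchmark, the argmax of that estimator at $Y^\ast\sim f^\ast$, is not $x^\ast$ almost surely, so the ``ideal'' regret is not zero.

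This is exactly why Remark~\ref{remark2} says the bound is really in terms of the posterior contraction rate $\mathcal{E}_n$. That rate controls $\mathbb{E}[\delta_{\mathrm{TV}}(\hat f^n,f^\ast)\mid\mathcal H_n]$ for a posterior draw $\hat f^n\sim\Pi_{\mathcal X}(\cdot\mid\mathcal H_n)$. This quantity dominates $\delta_{\mathrm{TV}}(f_n,f^\ast)$ by convexity, since $f_n$ is the posterior mean of $\hat f^n$.

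The paper never couples reward vectors. Its route is:
\begin{enumerate}
\item Compare TS with the Dirac policy at $x^\ast$, giving $\mathbb{E}r_{n+1}\le 2\mu^\ast_{\max}\mathbb{E}[1-\pi^{n+1}_f(x^\ast)]$.
\item Bound the misselection probability by $\mathbb{P}(\sup_x|\hat\mu(x)-\mu^\ast(x)|\ge\Delta/2\mid\mathcal H_n)$ and apply Markov's inequality.
\item Write $\hat\mu=T(\hat f^n)$, with $T$ the mean operator, and truncate at $B$: $|\hat\mu(x)-\mu^\ast(x)|\le 2B\,\delta_{\mathrm{TV}}(\hat f^n(\cdot\mid x),f^\ast(\cdot\mid x))+T^n_B(x)+T^\ast_B(x)$.
\item Assumption~\ref{ass_tail0}, applied to $f_n$ and $f^\ast$, makes the expected remainders $O(e^{-cB^\gamma})$. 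The TV term is $O(B\mathcal{E}_n)$.
\item Choosing $B\asymp(\log(1/\mathcal{E}_n))^{1/\gamma}$ produces the logarithmic factor.
\end{enumerate}

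Your truncation paragraph has the right mechanics. However, it must be applied to the means of posterior \emph{density} draws, not to realized rewards coupled through the predictive TV. You could also bypass $\Delta$ and Markov's inequality with the deterministic bound $\mu^\ast(x^\ast)-\mu^\ast(\arg\max_x\hat\mu(x))\le 2\sup_x|\hat\mu(x)-\mu^\ast(x)|$.
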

This lemma requires Assumption~\ref{ass_tail0} on the tail behavior.
In Remark~\ref{remark_3}, we show that this requirement is mild and remains strictly weaker than Assumption~\ref{ass_0} commonly imposed in classical BO.
Moreover, by applying the triangle inequality, the lemma also enables us to bound the regret gap induced by using different surrogate models, which is utilized in Corollary~\ref{theo_truncate}. The proof of Lemma~\ref{lemma_central} is provided in Appendix~\ref{sec_ec_proof_lemma1}.

\begin{remark}\label{remark2}
Strictly speaking, Eq.~\eqref{lemma_1_1} in Lemma~\ref{lemma_central} should be stated as $r_{n+1}\precsim \mu^\ast_{\max}  \mathcal{E}_n$, where $\mathcal{E}_n$ is the
\emph{posterior contraction rate} defined in Appendix~\ref{sec_ec_proof_theo1},
satisfying $\mathcal{E}_n\ge\delta_{\mathrm{TV}}\big(f_n(\cdot|\mathcal{X}), f^\ast(\cdot|\mathcal{X})\big)$ almost surely.
Thus, using $\delta_{\mathrm{TV}}(f_n(\cdot|\mathcal{X}), f^\ast(\cdot|\mathcal{X}))$ in this lemma slightly overstates the bound. 
Nevertheless, since all provable bounds on $\delta_{\mathrm{TV}}(f_n(\cdot|\mathcal{X}), f^\ast(\cdot|\mathcal{X}))$ in this work are ultimately derived using the same rate $\mathcal{E}_n$,  we unify the notation and express the results directly in terms of $\delta_{\mathrm{TV}}(f_n(\cdot|\mathcal{X}), f^\ast(\cdot|\mathcal{X}))$ throughout the main text for simplicity. A rigorous treatment is provided in the appendix.
\end{remark}

\subsubsection{A Discretization-then-Decoupling Proof Technique}

\paragraph{\textbf{Step 1: Decision Discretization.}}
The central Lemma~\ref{lemma_central} applies only when the decision set $\mathcal{X}$ is finite. Therefore, we discretize the compact decision space $\mathcal{X} \subset \mathbb{R}^d$ into a finite grid $\mathcal{X}_n = \{x^n_1, \ldots, x^n_{|\mathcal{X}_n|}\}$, where $|\mathcal{X}_n| = C_2 n^{\lambda_2}$ for some constants $C_2 > 0$ and $0 < \lambda_2 < 1$. As $n$ increases, the grid becomes increasingly dense, with points placed uniformly along each coordinate axis.
For analytical purposes, we consider a simplified variant of TS, referred to as TS(D) (\textit{discrete-TS}), in which the algorithm selects the closest point in $\mathcal{X}_n$ instead of the exact maximizer over $\mathcal{X}$. This transforms the BO problem into a special case of MAB problem with an increasing number of arms. Note that, the surrogate model’s posterior is still updated over the full domain $\mathcal{X}$ in TS(D).
TS(D) is a conceptual tool introduced solely for theoretical analysis.  Moreover, since TS(D) restricts the decision space, it may incur additional regret relative to the original TS. To ensure that this discretization does not alter the no-regret property, we impose a Lipschitz continuity condition on the objective function. 
\begin{assumption}\label{ass_lip_conti}
$\mu^\ast(x)$ is Lipschitz continuous over $\mathcal{X}$.
\end{assumption}
 Under Assumption~\ref{ass_lip_conti}, it is easy to show that the regret gap between TS and TS(D) is bounded by $\mathcal{O}(n^{1 - \lambda_2/d}) = o(n)$ (see Lemma~\ref{prop_TS-TS(D)} in the Appendix~\ref{sec:ec:prepare}). Therefore, to establish the no-regret property of the original TS, it suffices to prove that TS(D) achieves no-regret. This discretization argument is standard in the BO literature for handling continuous decision spaces \citep{srinivas2010gaussian,srinivas2012information,chowdhury2017kernelized}.

\paragraph{\textbf{Step 2: Sampling Decoupling.}}

After discretization, according to Lemma~\ref{lemma_central}, the remaining task is to develop a bound for $\delta_{\mathrm{TV}}\left( f_n(\cdot|\mathcal{X}_n), f^\ast(\cdot|\mathcal{X}_n) \right)$,
i.e., the TVD between the surrogate and true joint reward distributions over decision set $\mathcal{X}_n$.
While Theorem~\ref{theo_uni_converge} ensures posterior consistency, characterizing the concrete convergence rate of this joint distribution remains challenging, as the size $|\mathcal{X}_n| $ increases with $n$. This becomes a \textit{high-dimensional statistical inference problem} \citep{wainwright2019high}, where the dimension of the random variable grows with the sample size. Due to the curse of dimensionality, existing posterior contraction results, which is developed under fixed-dimensional settings, are no longer applicable.

To address this barrier, we consider a further simplified variant of TS(D) using {sampling decoupling}, termed as \textit{independent-discrete-TS} (TS(D)-I). In TS(D)-I, instead of drawing samples from the joint posterior distribution over \(\mathcal{X}_n\), we independently draw samples from the marginal posterior at each \(x_i^n \in \mathcal{X}_n\), \(i = 1, \dots, |\mathcal{X}_n|\).  
Notably, this decoupling affects only the sampling stage, and posterior updates remain based on the full joint model.
Let $\tilde{f}$ denote the decoupled $\infty$-GP model used in TS(D)-I, and let $\tilde{f}_n$ be its corresponding posterior predictive distribution, i.e., 
$\tilde{f}_n(\cdot|\mathcal{X}_n)=\prod_{i=1}^{|\mathcal{X}_n|}f_n(\cdot|x_i^n).$

This decoupled model can be seen as a degenerate approximation to the original, as it ignores correlations across inputs. Consequently, its posterior convergence is slower, i.e., 
$\delta_{\mathrm{TV}}\left({f}_n(\cdot|\mathcal{X}_n), {f}^\ast(\cdot|\mathcal{X}_n)\right)\precsim\delta_{\mathrm{TV}}\left(\tilde{f}_n(\cdot|\mathcal{X}_n), {f}^\ast(\cdot|\mathcal{X}_n)\right)$, implying that TS(D)-I provides a conservative upper bound on the regret of TS(D) (see the first inequality of Eq.~\eqref{eq:lemma2} in Lemma~\ref{lemma_indepelization}).
More importantly,
 using TS(D)-I enables us to bypass the curse of dimensionality:
after sampling decoupling, as shown in the second inequality of Eq.~\eqref{eq:lemma2} in Lemma \ref{lemma_indepelization}, the TVD between the joint distributions over \( \mathcal{X}_n \) can be bounded by a sum of marginal TVDs at each point \( x_i^n \in \mathcal{X}_n \). This reduces the analysis to univariate posterior convergence at each point, allowing us to directly apply well-established convergence rate results from the literature. The proof of Lemma~\ref{lemma_indepelization} is provided in Appendix~\ref{sec:ec:prepare}.
\begin{lemma}[Regret bound via Sampling Decoupling] \label{lemma_indepelization}
  Under the same assumption as in Lemma~\ref{lemma_central}, the instantaneous regret of using TS(D) at $(n+1)$-th iteration is given by
\begin{equation}\label{eq:lemma2}
    r_{n+1}\precsim \underbrace{\mu_{\mathrm{max}}^\ast\delta_{\mathrm{TV}}\left(\tilde{f}_n(\cdot|\mathcal{X}_n), {f}^\ast(\cdot|\mathcal{X}_n)\right)}_{\mathrm{regret\ of\ TS(D)-I}} \precsim \mu_{\mathrm{max}}^\ast\sum_{i=1}^{|\mathcal{X}_n|} \delta_{\mathrm{TV}}\left(\tilde{f}_n(\cdot|x_i^n), {f}^\ast(\cdot|x_i^n)\right).
\end{equation}
\end{lemma}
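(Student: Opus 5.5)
The plan has two parts, one for each inequality in Eq.~\eqref{eq:lemma2}.

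For the first inequality, we apply Lemma~\ref{lemma_central} on the finite set $\mathcal{X}_n$, with the decoupled surrogate $\tilde f$ in place of $f$. In TS(D)-I the action is the argmax over $\mathcal{X}_n$ of independent draws from the marginal posteriors $f_n(\cdot\mid x_i^n)$. This is exactly Thompson sampling under the joint predictive law $\tilde f_n(\cdot\mid\mathcal{X}_n)=\prod_i f_n(\cdot\mid x_i^n)$.

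Assumption~\ref{ass_tail0} concerns only marginal densities. Since $\tilde f_n$ and $f_n$ share the same marginals, it transfers to $\tilde f_n$ unchanged. Lemma~\ref{lemma_central} therefore gives
\[
r^{\mathrm{I}}_{n+1}\precsim \mu^\ast_{\max}\,\delta_{\mathrm{TV}}\big(\tilde f_n(\cdot\mid\mathcal{X}_n),f^\ast(\cdot\mid\mathcal{X}_n)\big).
\]

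Next we must relate the regret of TS(D) to this quantity. Lemma~\ref{lemma_central} applied directly to TS(D) bounds $r_{n+1}$ by $\mu^\ast_{\max}\,\delta_{\mathrm{TV}}(f_n(\cdot\mid\mathcal{X}_n),f^\ast(\cdot\mid\mathcal{X}_n))$. What remains is the comparison
\[
\delta_{\mathrm{TV}}(f_n,f^\ast)\precsim\delta_{\mathrm{TV}}(\tilde f_n,f^\ast).
\]
Following Remark~\ref{remark2}, I would argue at the level of contraction rates $\mathcal{E}_n$ rather than the TVD itself. The posterior contraction rate of the full joint model at $f^\ast$ is no slower than that of the product of marginals. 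The joint posterior uses the whole correlated model, while the decoupled product discards dependence, so any rate certified for $\tilde f_n$ on the product space is also certified for $f_n$.

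This comparison is the step I expect to be the main obstacle. A literal TVD inequality between an arbitrary joint law and its product of marginals is false in general. My first attempt is to define both bounds through the same rate: bound each $\delta_{\mathrm{TV}}$ by a sum over points of the marginal contraction rates. For $f_n$ this goes via the same coupling argument used in the second inequality, applied to $f_n$ coupled against $f^\ast$. In that form the first $\precsim$ holds by construction, in the same sense as in Remark~\ref{remark2}.

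For the second inequality, I would use subadditivity of total variation over product measures. This requires treating the true rewards at distinct grid points as independent across $x$ given $x$, which is the standard noise model $f^\ast(\cdot\mid\mathcal{X}_n)=\prod_i f^\ast(\cdot\mid x_i^n)$. I would make this explicit.

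The proof is by telescoping hybrids. Let $P_k$ be the product with the first $k$ factors taken from $f^\ast$ and the rest from $\tilde f_n$. Then
\[
\delta_{\mathrm{TV}}(P_0,P_m)\le\sum_k\delta_{\mathrm{TV}}(P_{k-1},P_k),
\]
and each term equals the marginal TVD at $x_k^n$, since the factors common to $P_{k-1}$ and $P_k$ cancel. Equivalently, one can couple coordinatewise with maximal couplings and apply a union bound. This yields $\delta_{\mathrm{TV}}(\tilde f_n,f^\ast)\le\sum_i\delta_{\mathrm{TV}}(f_n(\cdot\mid x_i^n),f^\ast(\cdot\mid x_i^n))$. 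Multiplying by $\mu^\ast_{\max}$ completes the chain.
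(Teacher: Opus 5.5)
Your second inequality is correct and is the paper's argument: subadditivity of $\delta_{\mathrm{TV}}$ over product measures. The paper obtains the product form of $f^\ast$ from Remark~\ref{remark_EC} instead of assuming independence, but that difference is harmless.

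\textbf{The gap is in the first inequality}, i.e.\ the comparison $\delta_{\mathrm{TV}}(f_n,f^\ast)\precsim\delta_{\mathrm{TV}}(\tilde f_n,f^\ast)$. Your primary justification (``the joint posterior uses the whole correlated model, so any rate certified for $\tilde f_n$ is certified for $f_n$'') is an assertion, not an argument. Your fallback fails as stated. The hybrid/maximal-coupling argument requires \emph{both} laws to factorize over $\mathcal{X}_n$. But $f_n(\cdot\mid\mathcal{X}_n)$ is a posterior mixture over the shared $(\beta,\tau,\{w_l\},\{\xi^{(l)}\})$, so it is correlated across grid points. Coordinatewise maximal couplings then do not reproduce its joint law, and the ``common factors cancel'' step has nothing to cancel. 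The obstruction you yourself noted is a counterexample to exactly this step: if $f^\ast$ were the product of the marginals of $f_n$, every marginal TVD would vanish while the joint TVD stays positive.

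\textbf{How the paper closes this step.} It uses a sieve comparison inside Lemma~\ref{lemma2}. The decoupled sieve is $\mathcal{F}_n^{(\mathrm{dec})}=\prod_{x\in\mathcal{X}}\mathcal{F}_{n,x}^{(\mathrm{orig})}$, so its log-covering number is $|\mathcal{X}|$ times the original one. Conditions \eqref{9}--\eqref{11} can then only hold with $\mathcal{E}_n^{(\mathrm{dec})}\ge\mathcal{E}_n^{(\mathrm{orig})}$, so the certified contraction of the joint model is no slower.

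\textbf{How to repair your route.} If you want to keep your ``same rate for both'' approach, insert a convexity step before subadditivity. Write $f_n(\cdot\mid\mathcal{X}_n)=\int\prod_i f(\cdot\mid x_i^n)\,d\Pi_{\mathcal{X}}(f\mid\mathcal{H}_n)$; this holds because, given the parameters, rewards at distinct points are independent. Then
\[
\delta_{\mathrm{TV}}\big(f_n(\cdot\mid\mathcal{X}_n),f^\ast(\cdot\mid\mathcal{X}_n)\big)\le\int\sum_{i=1}^{|\mathcal{X}_n|}\delta_{\mathrm{TV}}\big(f(\cdot\mid x_i^n),f^\ast(\cdot\mid x_i^n)\big)\,d\Pi_{\mathcal{X}}(f\mid\mathcal{H}_n).
\]
The right side is a posterior average of marginal TVDs. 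It is controlled by the marginal contraction rates, but it is larger than the predictive marginal TVDs in your final sum. So the comparison holds only at the level of rates, in the sense of Remark~\ref{remark2}.

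Alternatively, go back into the proof of Lemma~\ref{lemma_central}. There the joint TVD enters only through the data-processing bound on $\sup_x\delta_{\mathrm{TV}}(\hat f^n(\cdot\mid x),f^\ast(\cdot\mid x))$. Bounding that supremum by the sum gives the outer bound for TS(D) directly, with no joint-versus-product comparison needed.
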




Finally, leveraging Lemma \ref{lemma_central} and Lemma \ref{lemma_indepelization}, we present the regret bounds in Theorem \ref{theo_3}. The proof is provided in Appendix~\ref{sec:ec:maintheorem}.
For any $\alpha>0$, $\lambda_0\geq 0$ and any non-negative function $L$ on $\mathbb{R}$, we define the locally H$\ddot{o}$lder class $\mathcal{C}^{\alpha,L,\lambda_0}(\mathbb{R})$ as the set of all functions $f:\mathbb{R}\mapsto \mathbb{R}$ that have finite $k$-th derivative $f^{(k)}$ for all $k\leq\lfloor\alpha\rfloor$, and satisfy $f^{(\lfloor \alpha \rfloor)}(y+h)-f^{(\lfloor \alpha \rfloor)}(y)\leq L(y)e^{\lambda_0h^2}|h|^{\alpha-\lfloor \alpha \rfloor},\ y,h\in\mathbb{R}.$

\begin{assumption}\label{ass_double_choice}
 $\forall x\in\mathcal{X}$, the density $f^\ast(\cdot|x)\in\mathcal{C}^{\alpha,L,\lambda_0}$ for some $\alpha>0$, $\lambda_0\geq 0$ and a non-negative function $L$ on $\mathbb{R}$ and satisfy 
    $\mathbb{E}_{Y\sim f^\ast(\cdot|x)}\big(\frac{|f^{\ast(k)}(Y|x)|}{f^\ast(Y|x)}\big)^{\frac{2\alpha+\eta}{k}}<\infty$ for any $k\leq\lfloor\alpha\rfloor$ and
    $\mathbb{E}_{Y\sim f^\ast(\cdot|x)}\big(\frac{L(Y)}{f^\ast(Y|x)}\big)^{\frac{2\alpha+\eta}{\alpha}} <\infty$ for some $\eta>0$. 
\end{assumption}

\begin{theorem}\label{theo_3}
  Suppose Assumptions \ref{ass_tail0} and \ref{ass_lip_conti} hold. Set  $\zeta^n=C_1n^{-\lambda_1}$ and \( |\mathcal{X}_n| =C_2 n^{\lambda_2} \), with $0<\lambda_1,\lambda_2<1$ and $\lambda_2 < \frac{\alpha(1 - \lambda_1)}{3\alpha + 2}$. Under different surrogate models, the following regret bounds hold:
  \begin{itemize}
\item[(a)] If TS(D) is used with the \(\infty\)-GP model as the surrogate and if Assumption \ref{ass_double_choice} holds, then
    \begin{equation}\label{eq_regret_1}
           \mathcal{R}^{\infty-\mathrm{GP}-\mathrm{TS(D)}}_{n+1}=\mathcal{O}\Big(\max\{n^{\frac{\alpha\lambda_1+(3\alpha+2)\lambda_2+\alpha+2}{2(1+\alpha)}}(\mathrm{log}\ n)^\psi,n^{1-\lambda_1}\}\Big)= o(n)
       \end{equation}
as $n\to\infty$, where $\psi>\frac{2(1+\frac{1}{\lambda}+\frac{1}{\alpha})+1}{2+\frac{2}{\alpha}}$.

 \item[(b)] If TS(D) is used with a GP model as the surrogate, the regret satisfies $\mathcal{R}^{\mathrm{GP-TS(D)}}_n = \mathcal{O}(n).$
\end{itemize}
\end{theorem}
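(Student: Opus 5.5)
Part (a) follows by summing the instantaneous bounds of Lemma~\ref{lemma_indepelization} over iterations, after plugging in a pointwise posterior contraction rate at each grid point. Part (b) follows from the persistent TVD gap of Gaussian surrogates.

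\textbf{Step 1: per-iteration split.} At iteration $n+1$, TS(D) explores uniformly with probability $\zeta^n=C_1n^{-\lambda_1}$, paying at most a bounded regret. Otherwise it follows the posterior sample. Summing the exploration part gives $\sum_{i\le n} i^{-\lambda_1}=\mathcal{O}(n^{1-\lambda_1})$, which is the second term in the maximum in Eq.~\eqref{eq_regret_1}.

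\textbf{Step 2: pointwise contraction rate.} For the TS part, Lemma~\ref{lemma_indepelization} gives
\[
r_{n+1}\precsim \mu^\ast_{\max}\sum_{i=1}^{|\mathcal{X}_n|}\delta_{\mathrm{TV}}\big(f_n(\cdot|x_i^n),f^\ast(\cdot|x_i^n)\big).
\]
I would bound each marginal TVD with the known adaptive posterior contraction rate for Dirichlet-process location mixtures of Gaussians over the locally H\"older class. Under Assumption~\ref{ass_double_choice} and the exponential tails in Assumption~\ref{ass_tail0}, this rate is $m^{-\alpha/(2\alpha+1)}$ up to $(\log m)^{\psi}$ factors, where $m$ is the effective sample size. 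Each marginal $f_n(\cdot|x)$ of the $\infty$-GP, given the surface assignments, is such a mixture, so the rate should transfer.

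\textbf{Step 3: effective sample size near each grid point.} Forced exploration guarantees that each of the $|\mathcal{X}_n|=C_2n^{\lambda_2}$ cells receives about $m\asymp n^{1-\lambda_1-\lambda_2}$ uniform samples. Observations within a cell of diameter $n^{-\lambda_2/d}$ are informative about $f^\ast(\cdot|x_i^n)$ through the continuity of $x\mapsto f^\ast(\cdot|x)$ in $L_1$ (Assumption~\ref{ass_guangzhi}(d)) and the GP correlation across surfaces. The result is a per-point rate $n^{-(1-\lambda_1-\lambda_2)\alpha/(2\alpha+2)}$, or something of this form with the bias term absorbed.

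\textbf{Step 4: balancing and summing.} Multiplying the per-point rate by $n^{\lambda_2}$ grid points and summing over iterations gives the exponent stated in Eq.~\eqref{eq_regret_1}. The constraint $\lambda_2<\alpha(1-\lambda_1)/(3\alpha+2)$ is exactly the condition making this exponent smaller than $1$, so that $\mathcal{R}_{n+1}=o(n)$. I expect the real obstacle to be matching the stated exponent exactly. Within-cell dependence and adaptively chosen (non-i.i.d.) samples break the i.i.d. contraction theorems. I would handle this by conditioning only on the exploration subsample, which is i.i.d. uniform, and treating the adaptively chosen TS points as extra data that a fractional-posterior or testing argument tolerates. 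The factor $(3\alpha+2)/(2(1+\alpha))$ suggests the cell-size bias is balanced against the variance term.

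\textbf{Part (b).} Take a reward, allowed by Assumption~\ref{ass_tail0}, whose true marginal at some arms is non-Gaussian, such as a bimodal or $t$ distribution. Any GP posterior predictive is Gaussian, so
\[
\inf_{\text{Gaussian } g}\delta_{\mathrm{TV}}\big(g,f^\ast(\cdot|x)\big)\ge c>0.
\]
I would build an instance where the GP posterior mean concentrates on a wrong ordering of arms; for example, heavy-tailed noise pulls the sample mean at the optimal arm below its truth with positive probability that does not vanish under the misspecified likelihood. Then TS picks a suboptimal arm with probability bounded below at every iteration, so $r_n\ge c'$ and $\mathcal{R}_n=\Omega(n)$. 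This, together with the trivial upper bound, gives $\mathcal{O}(n)$ as stated; the statement as written only needs the trivial bound $r_n\le 2\mu^\ast_{\max}$.
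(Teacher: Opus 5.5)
Your skeleton matches the paper's: exploration costs $\mathcal{O}(n^{1-\lambda_1})$, Lemma~\ref{lemma_indepelization} reduces the instantaneous regret to $|\mathcal{X}_n|$ marginal TVDs, a per-point Dirichlet-process-mixture contraction rate is driven by the $\gtrsim n^{1-\lambda_1-\lambda_2}$ exploratory samples, and the bounds are summed over iterations. In (b) the paper likewise only notes that a Gaussian predictive keeps a non-vanishing TVD gap, so the TVD bound gives nothing better than the trivial linear bound.

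The gap is in Steps 2--3, which contain the one quantitative input. The rate you quote, $m^{-\alpha/(2\alpha+1)}$, is not what the cited result gives for this prior. The paper uses Lemma~\ref{thm:posterior_convergence}, i.e.\ Theorem~1 of Shen, Tokdar and Ghosal (2013) with $d=1$ and $\kappa=2$. Their rate is $m^{-\alpha/(2\alpha+d^\ast)}(\log m)^t$ with $d^\ast=\max(d,\kappa)=2$, where $\kappa=2$ corresponds to the inverse-gamma prior on the kernel variance $\tau^2$. The threshold on $\psi$ in the statement is exactly their log-exponent condition with $d^\ast=2$. Run through your bookkeeping, $2\alpha+1$ gives a different exponent and the condition $(3\alpha+1)\lambda_2<\alpha(1-\lambda_1)$, so the stated numbers cannot come from it. In Step 3 you switch to $2\alpha+2$ without deriving it.

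The mechanism you offer for that switch would also fail. A within-cell bias controlled through Assumption~\ref{ass_guangzhi}(d) relies on a hypothesis that is not part of this theorem. Even if it were, bare $L_1$-continuity of $x\mapsto f^\ast(\cdot|x)$ has no modulus and so yields no rate at all. Any cell-size bias would also enter through $n^{-\lambda_2/d}$, whereas the target exponent does not involve $d$.

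No spatial borrowing is needed. In TS(D) every query, the exploratory one included, is a grid point; the paper's count $\mathbb{E}N^n_x\ge\sum_{i\le n}\zeta^i/|\mathcal{X}_n|$ for $x\in\mathcal{X}_n$ presupposes this. So $x^n_i$ receives exact draws from $f^\ast(\cdot|x^n_i)$, and Lemma~\ref{thm:posterior_convergence} is applied at each grid point to its own $N^n_x$ observations; licensing this is the purpose of the decoupled model behind Lemma~\ref{lemma_indepelization}. The exponent is then plain arithmetic: $n^{\lambda_2}\cdot\bigl(n^{1-\lambda_1-\lambda_2}\bigr)^{-\alpha/(2(1+\alpha))}=n^{\frac{\alpha\lambda_1+(3\alpha+2)\lambda_2-\alpha}{2(1+\alpha)}}$. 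The coefficient $\frac{3\alpha+2}{2(1+\alpha)}=1+\frac{\alpha}{2(1+\alpha)}$ on $\lambda_2$ is the $|\mathcal{X}_n|$ factor plus the loss in per-point sample size, not a bias--variance balance.

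Your idea of working with the exploration subsample is worth keeping. Its independent Bernoulli counts give the lower-tail concentration of $N^n_x$ needed to bound $\mathbb{E}\bigl[(N^n_x)^{-\alpha/(2(1+\alpha))}\bigr]$ by the rate evaluated at $\mathbb{E}N^n_x$. The paper takes this step silently, and Jensen's inequality points the wrong way there.

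Finally, drop the $\Omega(n)$ construction in (b). Forced exploration makes every grid point's sample mean converge when the noise has a finite mean, so the misordering mechanism you posit is doubtful. The statement, and the paper's TVD argument, only need the trivial bound.
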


Note that the discretization step is introduced solely as a conceptual tool for theoretical analysis. Thus, the discretization parameter $\lambda_2$ neither needs to be specified nor implemented in practice, and it may be taken to be arbitrarily small for analytical purposes.
Therefore, according to Theorem~\ref{theo_3}, when using the $\infty$-GP surrogate model and choosing $\lambda_2 < \frac{\alpha(1 - \lambda_1)}{3\alpha + 2}$, the regret of TS(D) is $o(n)$. 
In contrast, when using a classical GP surrogate, TS can only guarantee an $\mathcal{O}(n)$ regret bound under the general reward setting considered in this paper. As well documented in the literature \citep{snoek2014input,ray2019bayesian}, linear regret indeed arises for GP-based BO when the reward exhibits heavy-tailed noise or non-stationarity. Our empirical results in Section~\ref{sec_exp} further confirm this phenomenon. In conclusion, Theorem~2 demonstrates the superiority of the $\infty$-GP surrogate over the standard GP model.


\begin{remark}[Mildness of Assumption]\label{remark_3}
Thanks to the decoupling argument, we impose no structural assumptions on the objective function $\mu^\ast(x)$ beyond Lipschitz continuity (i.e., Assumption \ref{ass_lip_conti}), which is substantially weaker than the classical Assumption~\ref{ass_0}. In contrast, some regularity conditions are required on the stochastic noise $\epsilon^\ast(x)$. Specifically, Assumption \ref{ass_tail0} imposes a sub-Weibull \citep{subweibull} tail condition, while Assumption~\ref{ass_double_choice} imposes a local smoothness condition on the conditional density $f^\ast(\cdot \mid x)$.
Despite these requirements, the assumptions on the noises remain weaker than those in Assumption~\ref{ass_0}. 
To illustrate, consider the following flexible class of rewards that satisfy Assumption~\ref{ass_double_choice} (see Appendix~\ref{app:verify_assumptions} for verification):
$$
    y(x) = \underbrace{\mu^\ast(x)}_{\text{Lipschitz}} + \underbrace{\epsilon^\ast(x)}_{\sum_{j=1}^{J} \omega_j(x)  \psi_j},
$$
where $\omega_j(x)$ are continuous, input-dependent mixture weights, and each $\psi_j$ is a base distribution, including Gaussian distributions; when the rewards are restricted to be positive, $\psi_j$ also includes distributions such as Gamma, Weibull (shape parameter $>1$), and Exponential distributions.
Moreover, Assumption~\ref{ass_tail0} is satisfied by the predictive distribution $f_n$ induced by the $\infty$-GP model, since the tails of Dirichlet process mixtures of Gaussians are known to exhibit exponential decay \citep{doss1982tails}.

\end{remark}


\textcolor{blue}{}

\section{Computational Considerations and Scalability}\label{section_7}

One of the primary concerns in the recent BO literature is the computational scalability of algorithms as \(n\) increases. In this section, we discuss the memory and computational complexity of our method, and show that, thanks to the structural properties of the 
$\infty$-GP model and the use of a customized fast Gibbs sampling algorithm, only minimal additional memory and computational cost are required compared to classical GP models, specifically, increasing the complexity from $\mathcal{O}(n^3)$ to at most $\mathcal{O}(n^3 \log n)$.

\subsection{Scalable Memory Usage: Logarithmic Growth in the Number of Surfaces}

Although the $\infty$-GP model is theoretically a mixture of infinitely many GP surfaces, only a finite number (i.e., $K_n + 1$) of surfaces are realized up to the $n$-th iteration. Consequently, as shown in Algorithm~\ref{alg:thompson_sampling} and Eq.~\eqref{eq_urm_finite_3}, the primary memory cost in the $\infty$-GP-TS algorithm arises from storing the $K_n \times n$ matrix $\bm{\xi}_{1:n}(x_{1:n})$, which records the values of the realized surfaces at the evaluated input points.
A key advantage of the $\infty$-GP model is that the number $K_n$ of realized surfaces grows slowly with $n$.

\begin{proposition}[Slow Growth in the Number of Realized Surfaces]\label{prop_sparse}
As $n \to \infty$, the expected number of realized surfaces satisfies
$$\mathbb{E}(K_n) \sim \nu \log\left(\frac{n}{\nu}\right).$$
\end{proposition}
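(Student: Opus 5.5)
The plan is to reduce the statement to the classical Chinese restaurant process computation. Under the urn scheme in Eq.~\eqref{urn_scheme}, with $\nu$ held fixed, the $i$-th observation opens a new surface with probability $\nu/(\nu+i-1)$. This probability does not depend on the previous assignments $\bm{z}_{1:i-1}$, the realized surfaces, or the locations $x_{1:i}$. The reason is that the spatial structure enters only through the atoms (the GP surfaces), while the assignment process is governed entirely by the stick-breaking weights in Eq.~\eqref{eq_prior_v}.

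The first step is therefore to write
\[
K_n=\sum_{i=1}^n B_i,
\]
where $B_i$ is the indicator that observation $i$ opens a new surface. From the prior predictive (Pólya urn) rule obtained by marginalizing $G$, each $B_i$ is Bernoulli$\big(\nu/(\nu+i-1)\big)$, and in fact the $B_i$ are independent, although independence is not needed for the expectation. Linearity then gives
\[
\mathbb{E}(K_n)=\sum_{i=1}^{n}\frac{\nu}{\nu+i-1}=\nu\big(\psi(\nu+n)-\psi(\nu)\big),
\]
where $\psi$ is the digamma function.

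The second step is the asymptotics. I would compare the sum with the integral $\int_0^{n}\nu/(\nu+t)\,dt=\nu\log\big((\nu+n)/\nu\big)$. Because the summand is monotone decreasing, the difference between the sum and this integral is bounded by a constant $\le 1$. Hence
\[
\mathbb{E}(K_n)=\nu\log\Big(1+\frac{n}{\nu}\Big)+O(1),
\]
and since $\log(1+n/\nu)=\log(n/\nu)+o(1)$, the ratio $\mathbb{E}(K_n)/\big(\nu\log(n/\nu)\big)$ tends to $1$. This is exactly the meaning of $\sim$ in the statement.

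The main obstacle is interpretive rather than computational. The statement should be read under the prior, or conditionally on $\nu$, with $\nu$ fixed. If $\nu$ is random with the Gamma prior in Eq.~\eqref{eq_end_of_model}, I would condition on $\nu$ and state the result conditionally. If a marginal version is wanted, one could integrate, using $\mathbb{E}\big[\nu\log(1+n/\nu)\big]\sim\mathbb{E}(\nu)\log n$ by dominated convergence, but I would present the conditional version as the proposition.

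I would also stress that the adaptive choice of $x_i$ by the acquisition policy does not affect the argument, since the urn probabilities are location-free. Under the posterior given $\mathcal{H}_n$ the assignments are no longer exchangeable CRP draws, so I would explicitly confine the claim to the prior urn dynamics described in Eq.~\eqref{urn_scheme}.
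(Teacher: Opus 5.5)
Your proof is correct and follows the same route as the paper: you decompose $K_n$ into new-surface indicators under the P\'olya urn, use linearity to get $\mathbb{E}(K_n)=\sum_{i=1}^n \nu/(\nu+i-1)$, and then pass to the logarithmic asymptotics. Your integral comparison, giving $\mathbb{E}(K_n)=\nu\log(1+n/\nu)+O(1)$, makes explicit the asymptotic step that the paper only asserts, and your caveat that the statement concerns the prior urn dynamics with $\nu$ fixed is a sensible clarification.
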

Proposition~\ref{prop_sparse} implies that the memory requirement grows only logarithmically with $n$. Thus, the total memory cost is only $\mathcal{O}(\log n)$ times that of a standard GP model. 
For example, when $\nu = 1$, the model can accommodate up to $10^5$ observations while maintaining, on average, fewer than five distinct surfaces. 
The proof of Proposition~\ref{prop_sparse} is provided in Appendix~\ref{sec:ec:proof_prop2+co1}.

\subsection{Truncated Gibbs Sampling}

The computational complexity of TS under both the $\infty$-GP and classical GP surrogate models is determined by two main components.

(i) \textbf{The complexity of kriging}, that is, computing the posterior distribution in Eq.~\eqref{eq_kriging_BO}. The computational bottleneck lies in computing the inverse of the kernel matrix $\Sigma_0(x_{1:n}, x_{1:n})^{-1}$, which incurs a cost of $\mathcal{O}(n^3)$. In the $\infty$-GP model, kriging must be performed separately for each realized surface.  
 However, a notable benefit of the $\infty$-GP model is that all surfaces share the same $\Sigma_0(x_{1:n}, x_{1:n})^{-1}$ matrix, allowing this expensive matrix inversion to be computed only once. As a result, the overall kriging complexity of $\infty$-GP remains identical to that of a standard GP model.

(ii) \textbf{The complexity of hyperparameter learning.}
In the classical GP model, hyperparameters (e.g., the length-scale and noise variance) are inferred using maximum likelihood estimation (MLE) or fully Bayesian methods such as slice sampling \citep{murray2010slice}, both of which incur an $\mathcal{O}(n^{3})$ computational cost \citep{snoek2012practical}.
In the $\infty$-GP model, the posterior distribution of the hyperparameters 
$[\Theta, \bm{\xi}_{1:n}(x_{1:n}), \bm{z}_{1:n} \mid \mathcal{H}_n]$
is inferred via Gibbs sampling, which   involves iteratively sampling subsets of the variables $\{ \Theta, \bm{\xi}_{1:n}(x_{1:n}), \bm{z}_{1:n} \}$ conditioned on the others.
 In particular, vanilla Gibbs sampling requires sequentially drawing from $\big[ \xi^{(z_i)} \mid \{ \xi^{(z_k)} : k \neq i \} \big]$ for each $i = 1, \dots, n$. This results in a computational complexity of $\mathcal{O}(n^4)$.
 
To alleviate this cost, we adopt a truncation-based approximation inspired by \cite{ishwaran2002approximate}. Specifically, we approximate the distribution $G_x = \sum_{l=1}^\infty w_l \delta_{\xi^{(l)}(x)}$ by truncating it to $L_n$ components:
\begin{equation}\label{eq_trucate}
    G_x = \sum_{l=1}^{\infty} w_l \xi^{(l)}(x) \approx \sum_{l=1}^{L_n} w_l \xi^{(l)}(x),
\end{equation}
where $L_n$ is an upper bound on the number of surfaces.
After truncation, the entire table $\bm{\xi}_{1:n}$ can be sampled jointly, which reduces the computational complexity of the first step to $\mathcal{O}(L_n n^3)$. The pseudocode is provided in Algorithm~\ref{algo_gibbs_sampling} of Appendix~\ref{SEC:EC:GIBBS}.

However, the truncation approximation might introduce additional regret.
A natural question arises: how should one select $L_n$ to ensure that the no-regret guarantee of $\infty$-GP-TS remains valid? 
The following Corollary~\ref{theo_truncate} answers this question by leveraging Lemma~\ref{lemma_central}, which bounds the additional regret through the TVD between the predictive distributions of the truncated model and the full model.

\begin{corollary}[Additional Regret Due to Truncation]\label{theo_truncate}
Under the same setting as in Theorem \ref{theo_3}, let $\mathcal{R}_{n}^{\infty-\mathrm{GP}-\mathrm{TS(D)}}$ denote the regret bound of the original $\infty$-GP-TS(D) established in Eq.~\eqref{eq_regret_1}, and let $\mathcal{R}^{L_n-\mathrm{GP}-\mathrm{TS(D)}}_{n}$ denote the regret when using the $L_n$-truncation approximation, as defined in Eq.~\eqref{eq_trucate}. Then the additional regret incurred by truncation is bounded by
$$
\mathcal{R}^{L_n-\mathrm{GP}-\mathrm{TS(D)}}_{n}-\mathcal{R}_{n}^{\infty-\mathrm{GP}-\mathrm{TS(D)}} \precsim {\mu^\ast_{\max} \exp\big( -\frac{L_n-1}{\nu} \big) n^2}
$$
\end{corollary}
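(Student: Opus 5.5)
The plan is to compare the two policies iteration by iteration, using Lemma~\ref{lemma_central} together with the triangle inequality for the total variation distance, and then to sum over iterations.

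\textbf{Step 1: reduce to the truncation error.} Fix iteration $i+1$. Let $f_i^{L}$ denote the predictive distribution of the $L_i$-truncated model given $\mathcal{H}_i$, and $f_i$ that of the full $\infty$-GP. Both are evaluated on the discretized set $\mathcal{X}_i$ used by TS(D). Since TS selects actions through posterior probabilities of optimality, the argument behind Lemma~\ref{lemma_central} (with either model playing the role of the surrogate) gives
\begin{equation*}
r_{i+1}^{L} - r_{i+1}^{\infty} \precsim \mu^\ast_{\max}\, \delta_{\mathrm{TV}}\big(f_i^{L}(\cdot\mid\mathcal{X}_i), f_i(\cdot\mid\mathcal{X}_i)\big).
\end{equation*}
Equivalently, by the triangle inequality,
\begin{equation*}
\delta_{\mathrm{TV}}(f_i^L,f^\ast)\le \delta_{\mathrm{TV}}(f_i,f^\ast)+\delta_{\mathrm{TV}}(f_i^L,f_i).
\end{equation*}
The first term reproduces the bound of Theorem~\ref{theo_3}(a), and the second is the extra cost of truncating.

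\textbf{Step 2: bound the truncation TVD.} I would use the Ishwaran--James bound for truncated stick-breaking priors. The truncated and untruncated random measures can be coupled so that they agree unless one of the $i$ latent assignments $z_1,\dots,z_i$, or the new one $z_{i+1}$, falls beyond index $L_i-1$. The marginal laws of the data then differ in TVD by at most
\begin{equation*}
4\,i\,\mathbb{E}\Big[\textstyle\sum_{l\ge L_i} w_l\Big] = 4\,i\Big(\tfrac{\nu}{1+\nu}\Big)^{L_i-1}\approx 4\,i\,\exp\!\Big(-\tfrac{L_i-1}{\nu}\Big).
\end{equation*}
This is because $\mathbb{E}\prod_{r<L}(1-V_r)=(\nu/(1+\nu))^{L-1}$, and $\log(1+1/\nu)\approx 1/\nu$ for $\nu$ large; in general one simply replaces the rate by $\log(1+1/\nu)$. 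The joint predictive over $\mathcal{X}_i$ comes from the same latent surfaces, so its TVD from the full model is controlled by the same coupling event. I would restrict to $\nu$ in a fixed range, or condition on $\hat\nu$, so that the hyperprior does not spoil the rate.

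\textbf{Step 3: sum over iterations.} Summing over $i\le n$ gives
\begin{equation*}
\sum_{i\le n}\mu^\ast_{\max}\, i\,\exp\!\Big(-\tfrac{L_n-1}{\nu}\Big)\precsim \mu^\ast_{\max}\exp\!\Big(-\tfrac{L_n-1}{\nu}\Big)n^2,
\end{equation*}
using $L_i\ge L_n$ or a fixed truncation level, which yields the claim.

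\textbf{Main obstacle.} The hardest step is Step 2 applied to the posterior rather than the prior. The Ishwaran--James bound controls the TVD between prior marginals of the data, whereas we need the TVD between the two posterior predictives. My approach is to note that
\begin{equation*}
\delta_{\mathrm{TV}}\big(\text{posterior}_1,\text{posterior}_2\big)\le 2\,\delta_{\mathrm{TV}}\big(\text{joint}_1,\text{joint}_2\big)\big/\text{normalizer},
\end{equation*}
where $\text{joint}_k$ is the joint law of data and prediction under model $k$. I would apply the coupling to this joint law and absorb the data likelihood into the logarithmic and constant factors that $\precsim$ already allows.
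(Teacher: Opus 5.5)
Your Steps 1--3 follow the paper's proof: Lemma~\ref{lemma_central} with the triangle inequality for $\delta_{\mathrm{TV}}$, an Ishwaran--James coupling bound on the truncation error, and summation over iterations. The only bookkeeping difference is what the coupling counts. The paper couples the assignments $z=(z_1,\dots,z_{|\mathcal{X}|})$ of the joint predictive at the grid points. That gives a per-iteration error of order $|\mathcal{X}_n|e^{-(L-1)/\nu}$, and then $n|\mathcal{X}_n|\le n^2$ because $\lambda_2<1$. You instead couple the $i$ data assignments plus the new one, and get $\sum_{i\le n} i\asymp n^2$. Your remark that the exact tail is $(\nu/(1+\nu))^{L-1}=e^{-(L-1)\log(1+1/\nu)}$ is correct. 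The paper's $e^{-(L-1)/\nu}$ is the same large-$\nu$ approximation taken from Ishwaran--James, so strictly either argument gives the bound with $\log(1+1/\nu)$ in place of $1/\nu$. Two of your steps, however, do not work as written.

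\textbf{Summation.} For a nondecreasing schedule, $L_i\le L_n$ when $i\le n$, so $e^{-(L_i-1)/\nu}\ge e^{-(L_n-1)/\nu}$ and the factor cannot be pulled out of the sum. Take $L_i=(\nu+\kappa)\log i+1$ with $\kappa>\nu$. Your per-iteration bounds then sum to $\sum_{i\le n} i^{-\kappa/\nu}=\Theta(1)$, while $n^2e^{-(L_n-1)/\nu}=n^{1-\kappa/\nu}\to 0$. The displayed bound needs a single level $L_n$ used over the whole (known) horizon, which is what the paper tacitly assumes. Drop the ``$L_i\ge L_n$'' alternative.

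\textbf{Prior versus posterior.} Your proposed inequality divides a quantity integrated over all data sets, $\delta_{\mathrm{TV}}(\text{joint}_1,\text{joint}_2)$, by a density evaluated at the observed $\mathcal{H}_i$. It is not valid as stated. Two correct versions exist: the pointwise one, using the joint densities at the observed data; and the averaged one, $\mathbb{E}_{\mathcal{H}_i\sim p}\,\delta_{\mathrm{TV}}(\text{post}_1,\text{post}_2)\le 2\,\delta_{\mathrm{TV}}(\text{joint}_1,\text{joint}_2)$, where $\mathcal{H}_i$ is drawn from the model's own prior predictive $p$.

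Your data come from $f^\ast$, so transferring the averaged bound costs the likelihood ratio $f^\ast(\mathcal{H}_i)/p(\mathcal{H}_i)$. The evidence bound behind Lemma~\ref{lemma2} only gives, with high probability, $p(\mathcal{H}_i)/f^\ast(\mathcal{H}_i)\ge e^{-c\,i\mathcal{E}_i^2}$, where $i\mathcal{E}_i^2\to\infty$ polynomially. That factor is exponential in a power of $i$, not constant or logarithmic, and it swamps $i\,e^{-(L-1)/\nu}$ for any $L=O(\log i)$.

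The paper does not resolve this either. In \eqref{eq_ec_9} it applies the prior Ishwaran--James bound directly to the predictive assignment laws $Q^{(L)},Q^{(\infty)}$, treating the surface values as identically distributed under both models. So your proof reaches the corollary only by making the same assertion, and the normalizer paragraph should not be presented as closing the gap. A genuine argument would have to work under the posterior itself. The truncated and full joint laws coincide on the event that every assignment is at most $L-1$. Hence the posterior TVD is bounded, up to a factor $2$, by the posterior probabilities of the complement under each model, and those cannot be read off from the prior tail.
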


The proof is provided in Appendix~\ref{sec:ec:proof_prop2+co1}.
Corollary~\ref{theo_truncate} implies that the regret induced by truncation decays exponentially in $L_n$, providing clear guidance on how to choose the truncation level. Specifically, choosing
\begin{equation}\label{eq_chocie_of_L}
    L_n = (\nu + \kappa) \log n+1
\end{equation}
for any $\kappa > 0$ ensures that the additional regret caused by truncation is $o(n)$, thereby preserving the sublinear regret growth. If $\kappa > \nu$, the additional regret becomes $o(1)$, meaning that truncation incurs no impact asymptotically. In practice, as the optimization proceeds, we infer and update the posterior of $\nu$, and set the $L_n$ accordingly.
Under this choice, the overall complexity of the Gibbs sampling algorithm (Algorithm~\ref{algo_gibbs_sampling}) becomes $$\mathcal{O}(L_nn^3)=\mathcal{O}( n^3\log n).$$

In summary, with the truncated Gibbs sampling algorithm, the total computational complexity of using the $\infty$-GP surrogate model is $\mathcal{O}(n^3\log n )$, introducing only a logarithmic overhead relative to the standard GP complexity of $\mathcal{O}(n^{3})$. This additional cost is often negligible in applications such as the neural network hyperparameter optimization task considered in Section~\ref{sec_exp}, where the cost of each function evaluation far exceeds the computational overhead of the BO procedure itself.

\begin{remark}
In this subsection, we focus on scalability with respect to the number of observations $n$, rather than the ambient dimension $d$ of the decision space. Since the $\infty$-GP model is defined as a mixture over infinitely many GPs, each individual surface inherits the structural properties of a standard GP. As a result, well-established large-scale GP techniques (“large-scale” refers to both large $n$ and large $d$) such as sparse GPs \citep{ranganathan2010online}, Gaussian Markov random fields \citep{l2019gaussian}, random feature approximation \citep{randomfeature} and methods for high-dimensional Bayesian optimization \citep{kriging_ding, snoek2015scalable}, remain applicable within the $\infty$-GP framework. 
\end{remark}

.


 \section{Experiments}\label{sec_exp}
In this section, we evaluate the proposed $\infty$-GP-TS algorithm on a set of benchmark tasks, with a particular focus on BO problems involving ill-conditioned reward distributions. We consider two challenging settings: (i) \textbf{non-stationary} rewards, which are widely recognized as a major challenge when applying BO in machine learning tasks, where classical GP-based methods often fail~\citep{snoek2014input}; and (ii) rewards with \textbf{heavy-tailed} observation noise, which violates the sub-Gaussian noise assumption in classical BO. 
Section~\ref{sec_EXP_TEST} presents results on several synthetic test functions. 
Section~\ref{SEC_EXP_HT} investigates the heavy-tailed setting through a portfolio optimization task. Section~\ref{SEC_EXP_NS} evaluates the algorithm under non-stationary conditions using hyperparameter tuning and neural architecture search tasks.

We compare our method against the following baselines:
\begin{itemize}
    \item BO methods built upon the standard GP surrogate model, including \textbf{GP-TS}~\citep{chowdhury2017kernelized}, \textbf{GP-UCB}~\citep{srinivas2009gaussian}, \textbf{GP-EI}~\citep{ament2023unexpected} and \textbf{GP-KG}~\citep{wu2016parallel}.
    \item \textbf{Warped GP-UCB/TS/EI/KG}:  
    In the non-stationary tasks, all GP-based baselines are augmented with the input warping technique proposed by~\citet{snoek2014input}, which transforms the input space of GP using a Beta cumulative distribution function to better model non-stationarity.
    \item \textbf{Truncated GP-UCB}:
In the heavy-tailed tasks, the classical GP-UCB is modified by applying the truncation strategy from~\citet{ray2019bayesian}, a method shown to be effective for BO under heavy-tailed observation noise.
\end{itemize}

  We set the exploration coefficient in Algorithm~\ref{alg:thompson_sampling} to $\zeta^n = n^{-1/2}$. For a fair comparison, all baseline algorithms are implemented with the same $\zeta^n$-greedy strategy. 
  In Algorithm~\ref{algo_gibbs_sampling}, we run the Gibbs sampler for $B = 500$ iterations, with initialization details provided in Appendix~\ref{SEC:EC:GIBBS}.
For the GP hyperparameters estimation in the baseline methods, we follow the fully Bayesian treatment recommended by~\citet{snoek2012practical} and~\citet{murray2010slice}. Specifically, we perform multiple MCMC runs and use the average of the sampled values as the final hyperparameter estimates. Each MCMC run is also executed for $B = 500$ iterations.

\subsection{Synthetic Test Functions}\label{sec_EXP_TEST}

We evaluate our method on three widely used synthetic benchmark functions: Ackley, Rosenbrock, and StybTang~\citep{xu2024standard}. To construct more challenging scenarios, we consider heavy-tailed and non-stationary variants of each function. 
In the heavy-tailed (HT) setting,  all the functions are corrupted by Weibull-distributed noises. In the non-stationary (NS) setting, each function is modulated by a trigonometric-exponential factor of the form $f_{\mathrm{NS}}(x) = \left(1 + \alpha \sin(x) e^x\right) \cdot f(x),$
which introduces non-stationarity across the domain. Gaussian observation noises are added to all evaluations.

Figure~\ref{fig:six_images} presents the cumulative regret over 100 optimization iterations for different algorithms on the synthetic benchmark functions.
Overall, \(\infty\)-GP-TS consistently outperforms classical GP-based baselines in both HT and NS settings. As predicted by Theorem~\ref{theo_3}~(a), the cumulative regret of \(\infty\)-GP-TS exhibits a sublinear trend and nearly plateaus after 60 iterations on the Rosenbrock and StybTang functions, suggesting convergence to a near-optimal solution. 
In contrast, standard GP-based methods (e.g., GP-EI and GP-KG) display nearly linear cumulative regret on ill-conditioned and complex objectives such as StybTang, Rosenbrock, and Ackley-NS, aligning with the theoretical insights provided in Section \ref{Section_6}.
Additionally, TS with either GP or $\infty$-GP surrogates generally performs well across all acquisition strategies in the majority of the experiments.
 This may be attributed to the exploration behavior inherent in TS, which prevents the search process from getting stuck in suboptimal regions and facilitates global exploration in those complex reward landscapes, as discussed in Section~\ref{section_uni_conver}. 
 However, this benefit comes at the cost of higher variance in regret, particularly when using the $\infty$-GP surrogate. Moreover, as shown in the third row of Figure~\ref{fig:six_images}, when the reward landscapes are relatively simple (without HT or NS effects), $\infty$-GP-TS and GP-TS exhibit nearly identical performance. This is because the $\infty$-GP adaptively adjusts its model complexity based on the observed data, and in simple settings, a single surface is sufficient to model the underlying function.
 
\begin{figure}[h]
    \centering

    \begin{subfigure}[b]{0.3\textwidth}
        \includegraphics[width=\linewidth]{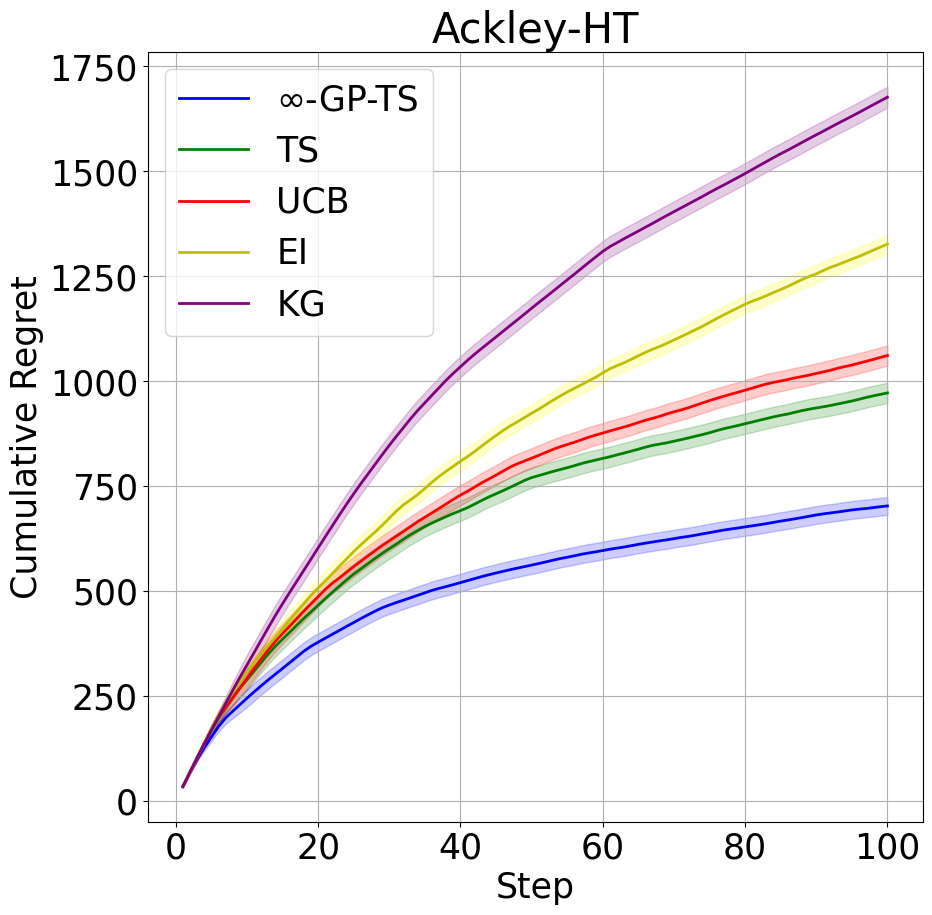}
    \end{subfigure}
    \hfill
    \begin{subfigure}[b]{0.3\textwidth}
        \includegraphics[width=\linewidth]{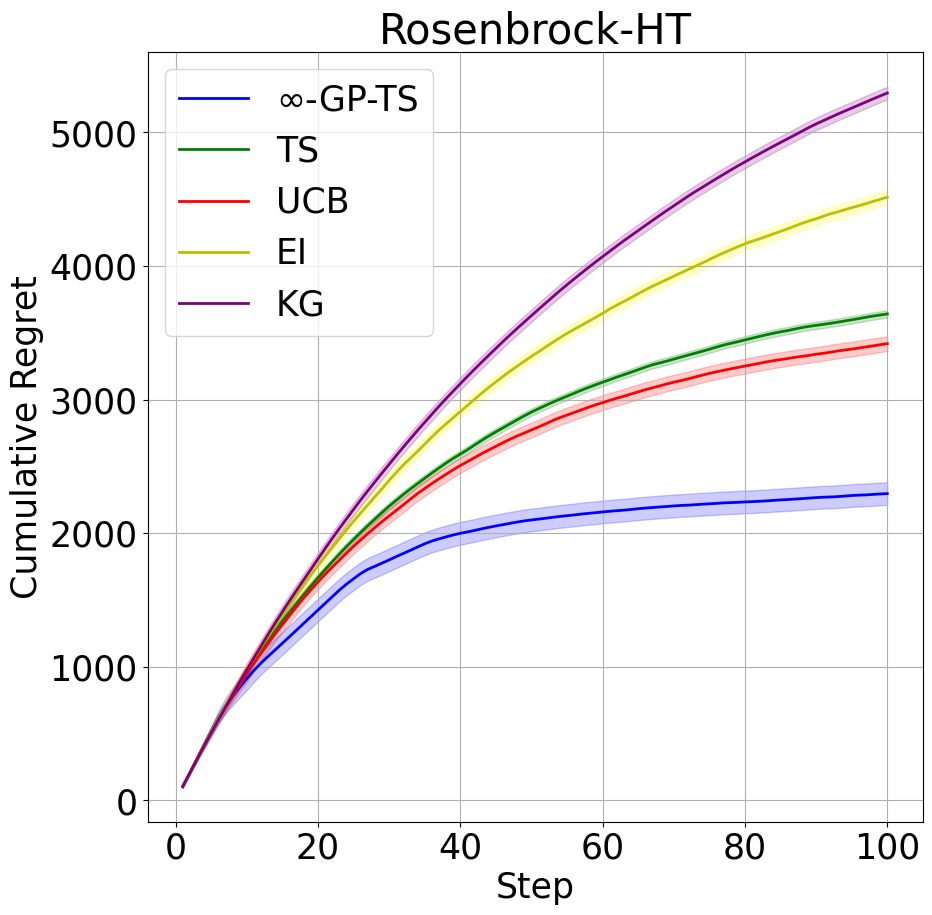}
    \end{subfigure}
    \hfill
    \begin{subfigure}[b]{0.3\textwidth}
        \includegraphics[width=\linewidth]{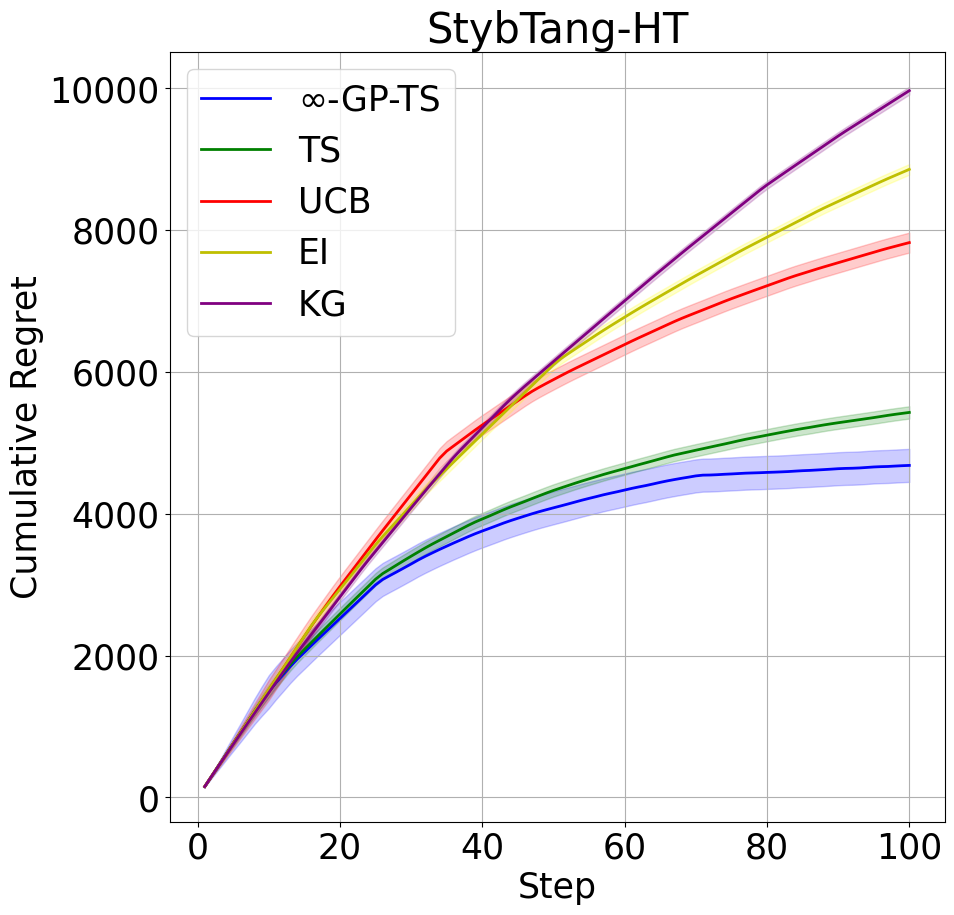}
    \end{subfigure}

    \vspace{0.5cm} 

    \begin{subfigure}[b]{0.3\textwidth}
        \includegraphics[width=\linewidth]{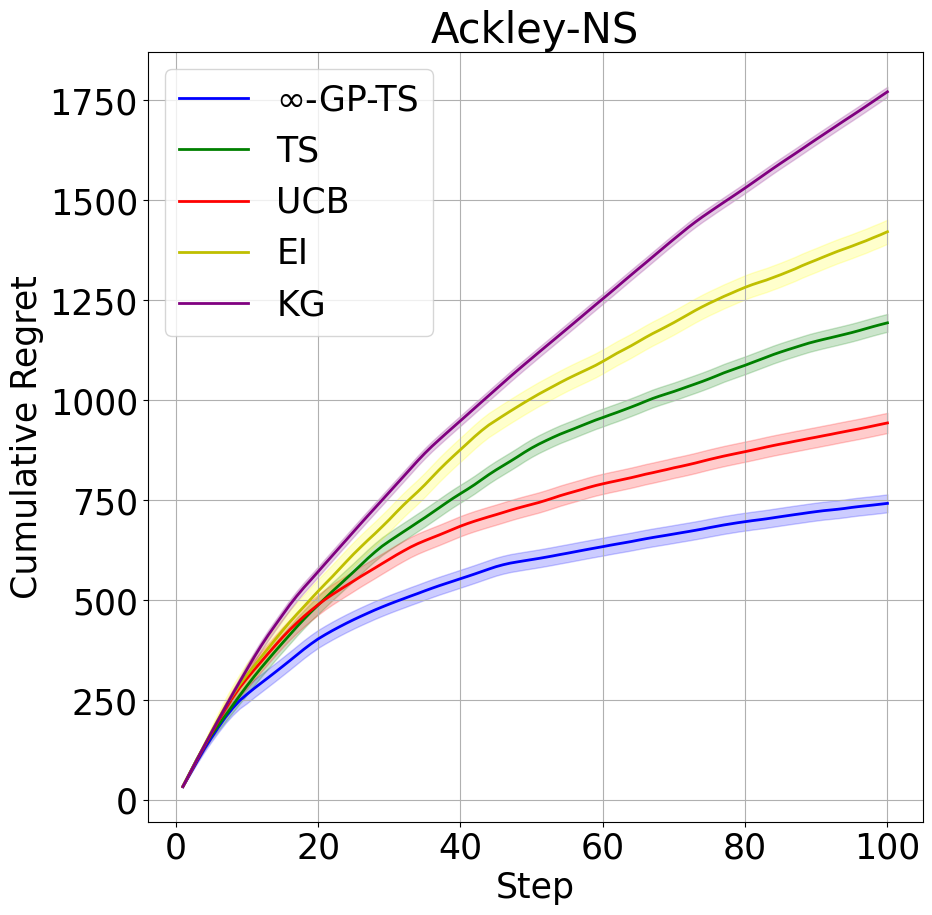}
    \end{subfigure}
    \hfill
    \begin{subfigure}[b]{0.3\textwidth}
        \includegraphics[width=\linewidth]{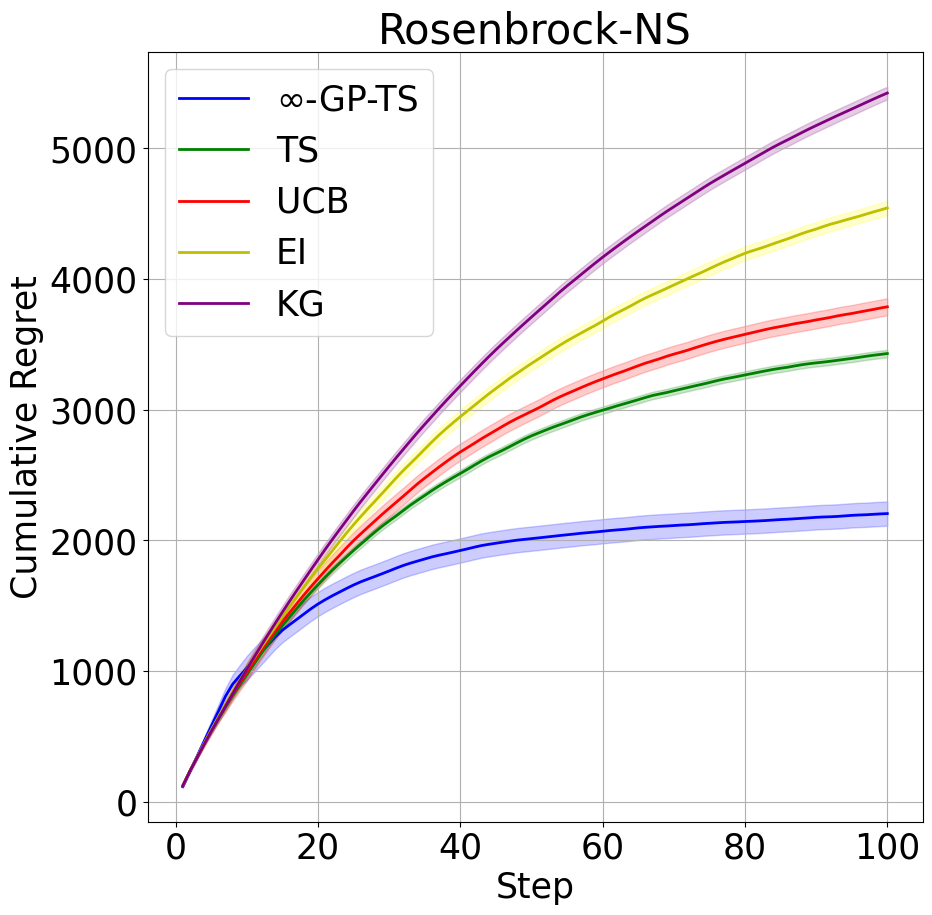}
    \end{subfigure}
    \hfill
    \begin{subfigure}[b]{0.3\textwidth}
        \includegraphics[width=\linewidth]{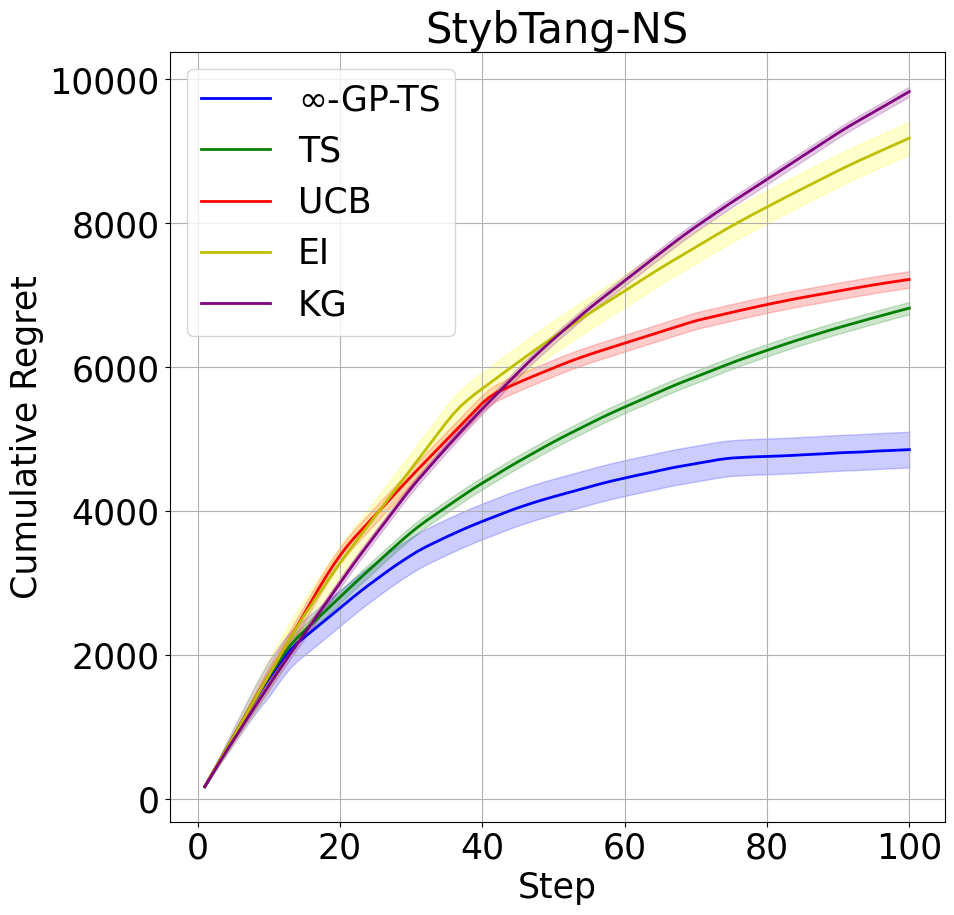}
    \end{subfigure}
    \begin{subfigure}[b]{0.3\textwidth}
        \includegraphics[width=\linewidth]{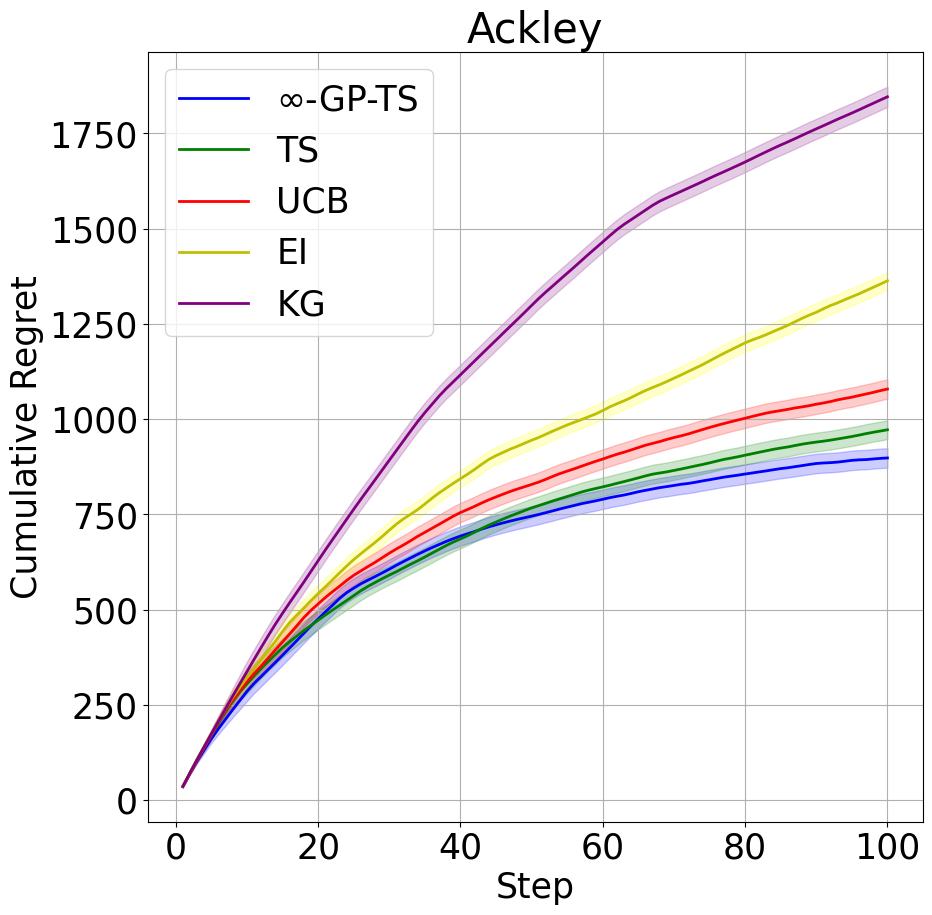}
    \end{subfigure}
    \hfill
    \begin{subfigure}[b]{0.3\textwidth}
        \includegraphics[width=\linewidth]{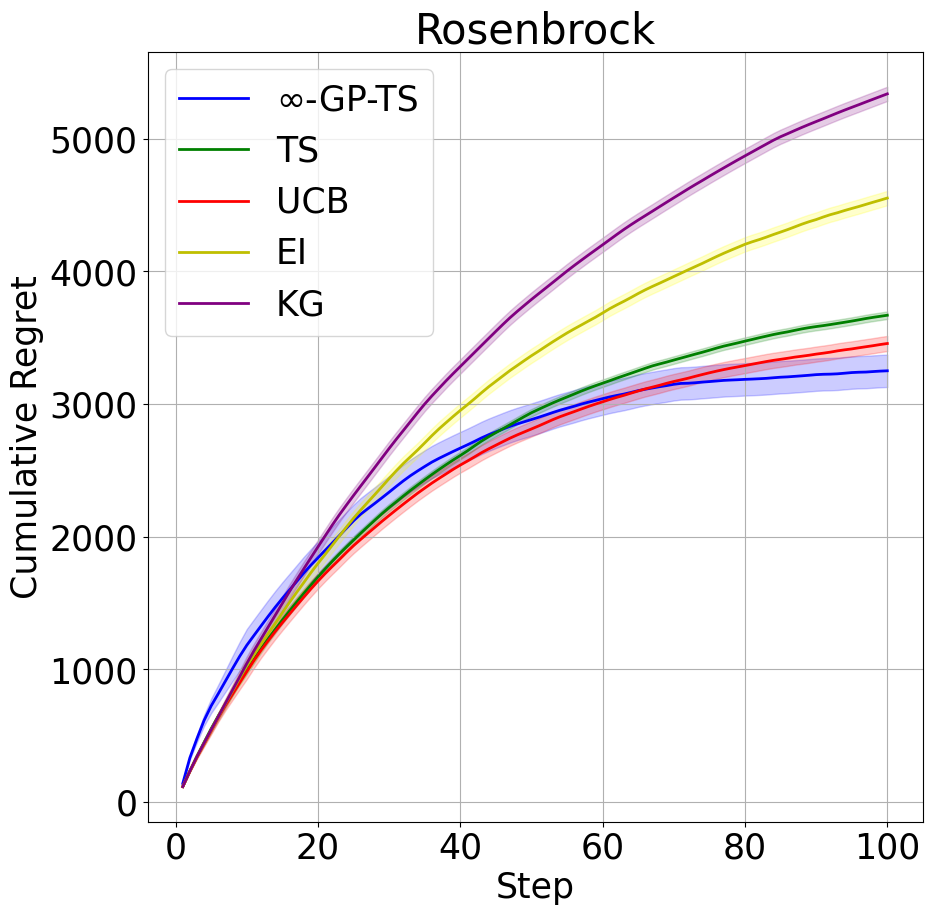}
    \end{subfigure}
    \hfill
    \begin{subfigure}[b]{0.3\textwidth}
        \includegraphics[width=\linewidth]{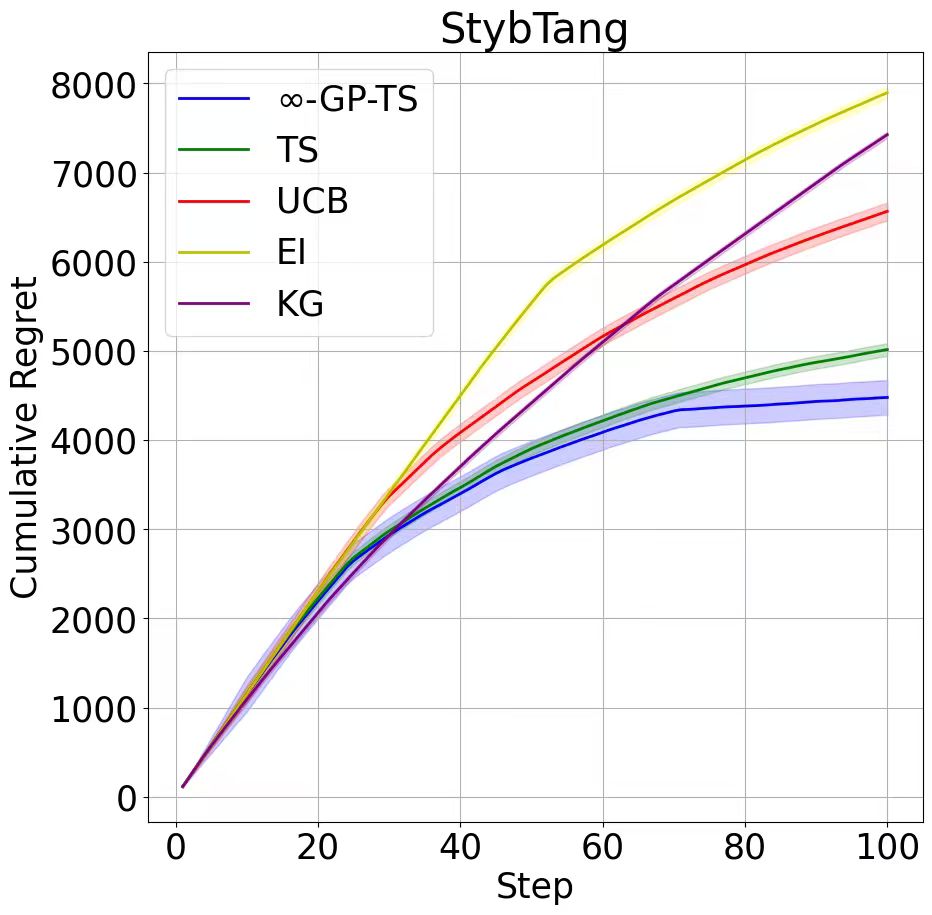}
    \end{subfigure}

    \vspace{0.5cm} 
    \caption{Cumulative regret comparison across BO algorithms}
    \label{fig:six_images}
\end{figure}

Table~\ref{table1:time} reports the wall-clock time of different algorithms, including only the time spent by the BO algorithm itself and excluding the time consumed by function evaluations.
The computational time of $\infty$-GP-TS is comparable to that of the GP-based baselines. Its runtime is often on par with GP-EI and is significantly lower than that of GP-KG, as the latter relies on simulation to estimate the acquisition function. Compared with GP-TS and GP-UCB, $\infty$-GP-TS incurs slightly more computation time, but the difference is not substantial.
There are two main reasons why $\infty$-GP-TS remains computationally efficient.
{First}, the use of the truncated Gibbs sampling Algorithm \ref{algo_gibbs_sampling} introduces minimal additional computational overhead, as analyzed in Section~\ref{section_7}. Compared to GP, the extra cost of using the $\infty$-GP mainly arises from maintaining $L_n$ latent surfaces. However, $L_n$ grows very slowly with $n$, as empirically confirmed in the next paragraph.
{Second}, although $\infty$-GP involves Gibbs sampling, the baseline methods also require hyperparameter estimation through a fully Bayesian approach, which relies on repeated MCMC. Even when MLE is used as an alternative, the associated gradient-based optimization still incurs comparable computational complexity~\citep{snoek2012practical}.
Notably, $\infty$-GP-TS requires only a single Gibbs sampling run per optimization iteration, further reducing its computational burden.

\begin{table}[h]
\centering
\small
\renewcommand{\arraystretch}{0.85}
\setlength{\tabcolsep}{4pt}
\begin{tabular}{l|ccccc}
    \toprule 
    \diagbox[height=3em,width=6em]{Obj.}{Alg.} & GP-TS & GP-UCB & GP-EI & GP-KG & $\infty$-GP-TS \\
    \midrule
    Ackley-HT       & 2773 & 2649 & 3017 & 3388 & 2813 \\
    Ackley-NS       & 2798 & 2535 & 3001 & 3450 & 2912 \\
    Rosenbrock-HT   & 2782 & 2737 & 3210 & 3309 & 2858 \\
    Rosenbrock-NS   & 2689 & 2712 & 2803 & 3383 & 2853 \\
    StybTang-HT     & 2790 & 2619 & 3107 & 3007 & 2901 \\
    StybTang-NS     & 2805 & 2720 & 3030 & 3378 & 3098 \\
    \bottomrule 
\end{tabular}
\caption{Computation time (seconds) for different BO algorithms across test functions.}\label{table1:time}
\end{table}

To better understand the dynamics of the $\infty$-GP model and how it adapts its complexity during the optimization process, Figure~\ref{fig:surface_over_time} illustrates the evolution of the weights $w_l$ associated with each latent surface $\xi^{(l)}$ (see Eq.~\eqref{equation_infi_surface}) in the Ackley-NS task. The optimization is run for 100 iterations, and the weights are updated every 10 iterations. We set the truncation level to $L_n=4$ throughout the optimization. Darker blocks in Figure~\ref{fig:surface_over_time} indicate higher weights.
As predicted in Proposition~\ref{prop_sparse}, the number of surfaces with non-negligible weights increases slowly. During the first 60 iterations, only Surface 1 and Surface 4 carry significant weight, with Surface 1 dominating with a weight close to 1. Weights of surfaces 2 and 3 remain nearly zero. Between iterations 60 and 100, Surface 4 becomes more active, and Surface 2 starts to accumulate some weight, whereas Surface 3 remains negligible throughout.
By iteration 100, the optimization has nearly converged, and only three surfaces (1, 2, and 4) exhibit non-trivial weights. This suggests that two or three surfaces are sufficient to effectively approximate the $\infty$-GP model. This observation supports the use of the truncation technique and demonstrates the scalability of the $\infty$-GP model. In practice, the required number of surfaces \( L_n \) is much smaller than the conservative theoretical choice in Eq.~(\ref{eq_chocie_of_L}), which is designed to ensure worst-case regret guarantees.

\begin{figure}
    \centering
    \includegraphics[width=1\linewidth]{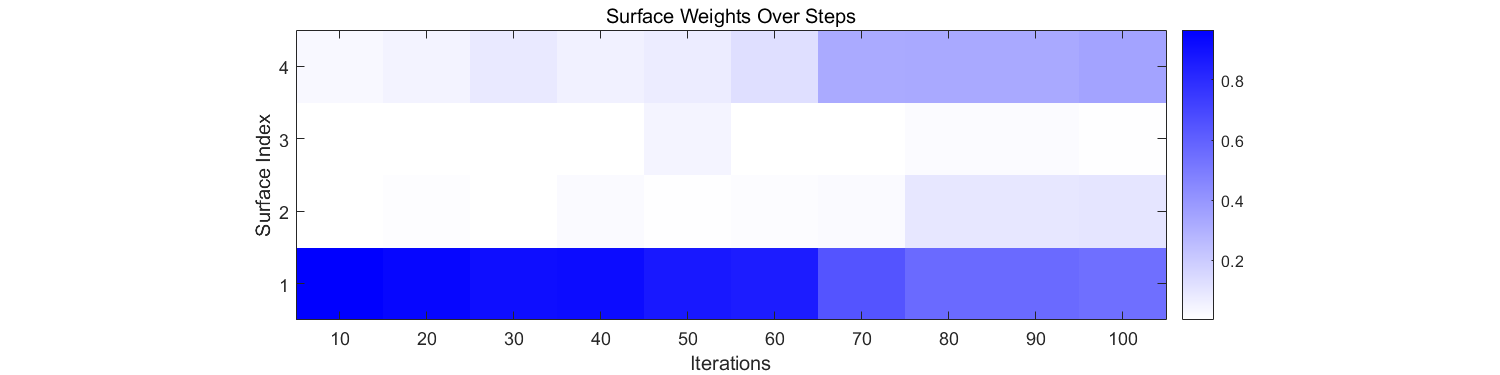}
    \caption{Weights of each surfaces over optimization of Ackley-NS.}
    \label{fig:surface_over_time}
\end{figure}

\subsection{Heavy-Tail Reward: Portfolio Optimization}\label{SEC_EXP_HT}

Heavy-tailed rewards are common in financial applications. We evaluate the performance of our algorithm on the \textit{Portfolio Optimization} task introduced by~\citet{cakmak2020bayesian}, which aims to optimize the hyperparameters of a trading strategy, including risk aversion, trade aversion, and holding cost multipliers. Unlike~\citet{cakmak2020bayesian}, where the objective is to minimize a risk measure, our goal is to maximize the expected reward.
The reward signals are generated using CVXPortfolio~\citep{CVXPort}, a realistic and computationally expensive financial market simulator.

The cumulative regret is shown in Figure~\ref{fig:three_images_row}(a). All methods exhibit sublinear cumulative regret on the Portfolio task. Among them, $\infty$-GP-TS performs significantly better than the others after 60 iterations. However, it does not outperform all baselines during the early iterations. This may be because $\infty$-GP-TS encourages more exploration in the initial phase: it explores not only the uncertainty in function values but also the uncertainty in the reward model itself.
Furthermore, the truncation technique enables Truncated-GP-UCB to achieve better performance than GP-TS and GP-KG. The wall-clock time of the BO algorithm (excluding function evaluation times) is reported in Table~\ref{table3:time}.

\subsection{Nonstationary Rewards: BO in Machine Learning}\label{SEC_EXP_NS}

To evaluate the performance of our method under nonstationary rewards, we consider two representative BO tasks in machine learning: hyperparameter tuning and neural architecture search.

First, we consider the multilayer perceptron (MLP) task from the \textbf{HPOBench} benchmark~\citep{eggensperger2021hpobench}, a widely used collection of hyperparameter optimization problems for machine learning models. Our goal is to optimize the MLP's hyperparameters to maximize its performance.
The problem involves five decision variables: two hyperparameters that determine the depth and width of the network, and three others that control the batch size, regularization strength, and the initial learning rate for the Adam optimizer. The MLP model is trained on the MNIST dataset.

The second task is \textbf{NASBench201}~\citep{nasbench201}, a neural architecture search (NAS) task on the CIFAR-100 dataset. The goal is to identify the best-performing neural network architecture for the given task.
We formulate the NAS-Bench-201 task as a 30-dimensional continuous optimization problem. The search space is defined as a cell structure with 6 directed edges, each associated with 5 candidate operations: skip-connect, zeroize, $1{\times}1$ convolution, $3{\times}3$ convolution, or $3{\times}3$ average pooling.
The decision variable is a continuous vector $x \in [0,1]^{30}$, divided into 6 groups of 5 elements, each group corresponding to one edge. Each group of the continuous decision variable represents soft logits over the 5 operations, and the final architecture is determined by taking the \texttt{argmax} within each group to select the operation.
The accuracy of the resulting architecture is then evaluated on the validation dataset.

\begin{figure}[H]
    \centering
    \begin{subfigure}[b]{0.3\textwidth}
        \centering
        \includegraphics[width=\linewidth]{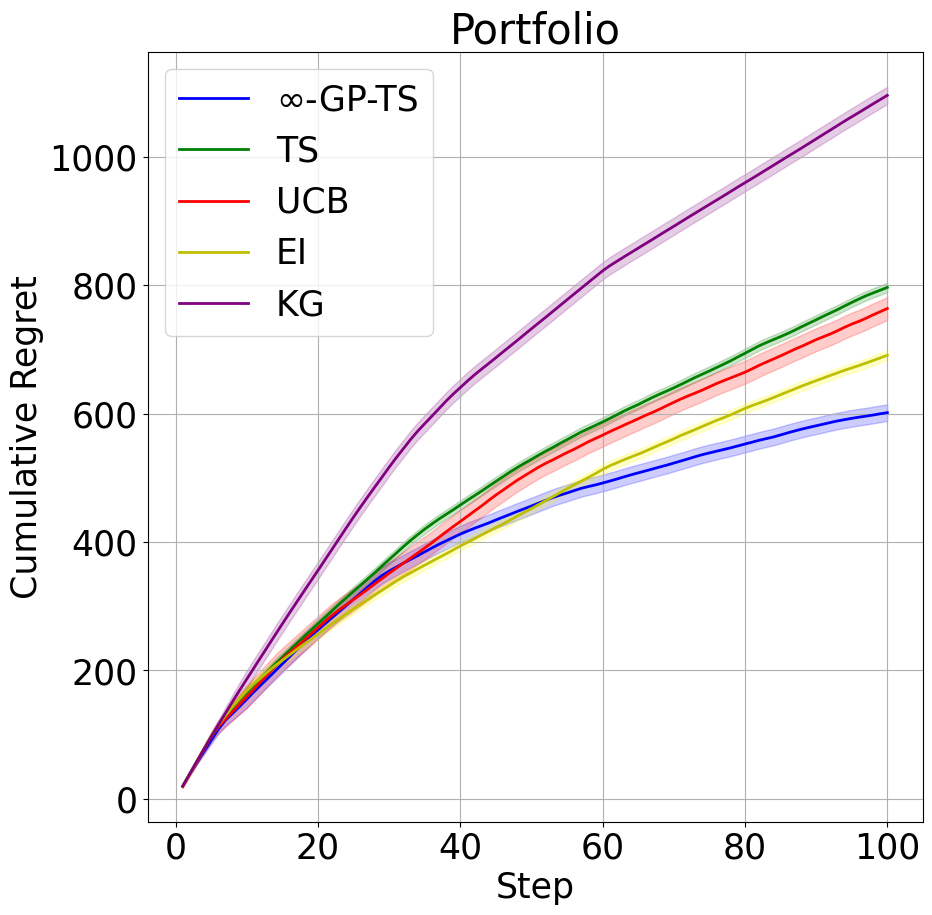}
        \caption{}
    \end{subfigure}
    \hfill
    \begin{subfigure}[b]{0.3\textwidth}
        \centering
        \includegraphics[width=\linewidth]{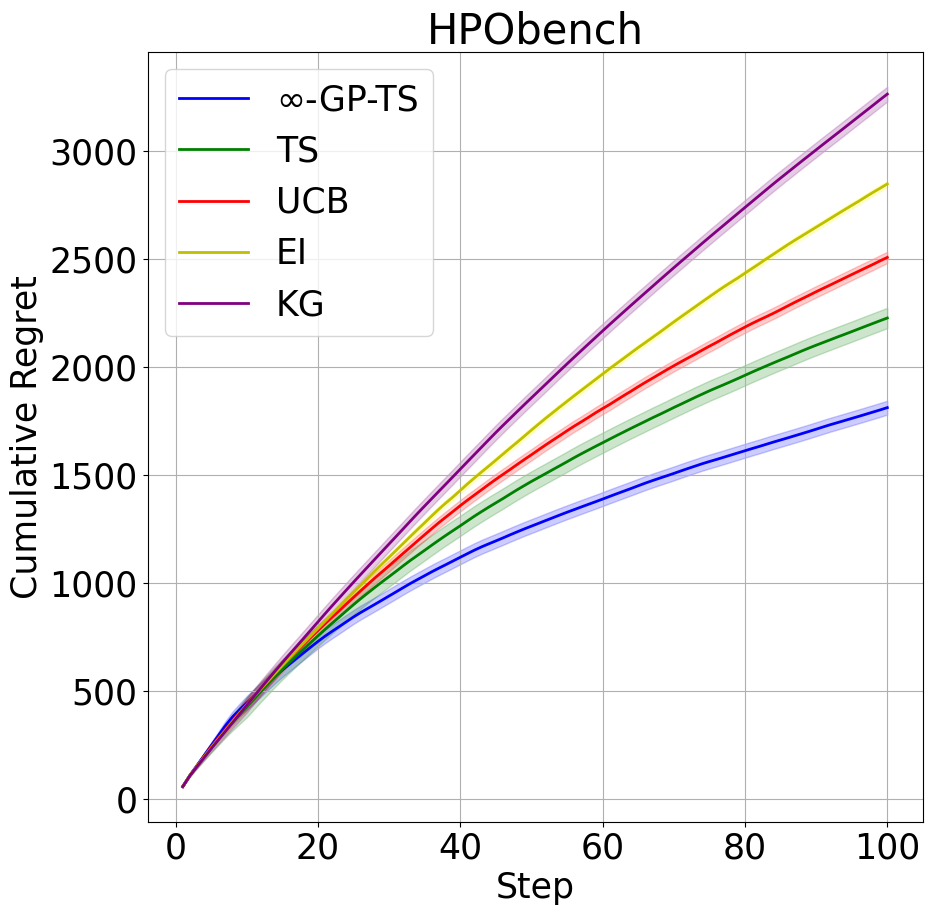}
        \caption{}
    \end{subfigure}
    \hfill
    \begin{subfigure}[b]{0.3\textwidth}
        \centering
        \includegraphics[width=\linewidth]{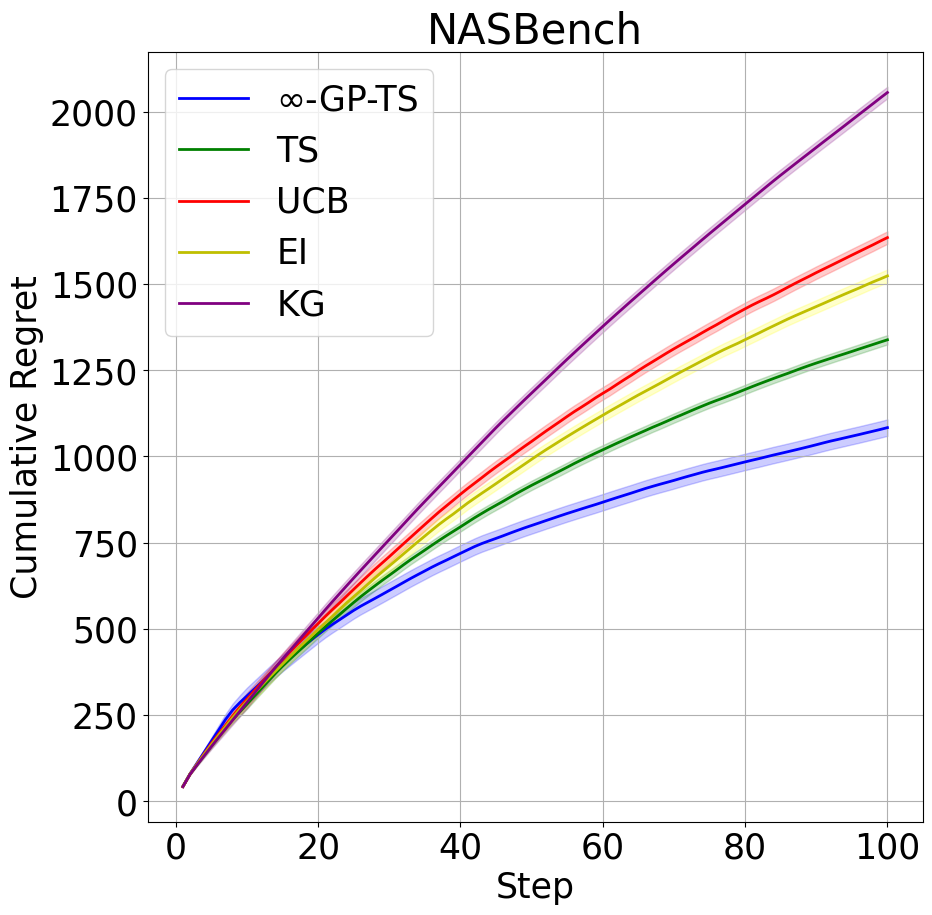}
        \caption{}
    \end{subfigure}

  \caption{Cumulative regret over time. (a) Portfolio Optimization, (b) HPOBench, and (c) NAS-Bench-201.}
    \label{fig:three_images_row}
\end{figure}

The results of \textbf{HPOBench} and \textbf{NASBench201} are shown in Figure~\ref{fig:three_images_row}(b) and Figure~\ref{fig:three_images_row}(c), respectively. Despite incorporating the input warping technique, traditional GP-based methods still perform poorly under nonstationary and complex reward landscapes in these machine learning tasks. In particular, {GP-KG} and {GP-EI} exhibit nearly linear growth in cumulative regret, indicating that GP fails to effectively model and learn these ill-conditioned reward distributions, further confirming the limitations of GPs discussed in Section~\ref{sec_limit_of_convential}.
In contrast, $\infty$-GP-TS can accommodate a wide range of reward distributions, as analyzed in Theorem~\ref{theo_uni_converge}, and achieves sublinear cumulative regret. Moreover, the wall-clock time is reported in Table~\ref{table3:time}.

\begin{table}[h]
\centering
\small
\renewcommand{\arraystretch}{0.85}
\setlength{\tabcolsep}{4pt}
\begin{tabular}{l|ccccc}
    \toprule
    \diagbox[height=3em,width=6em]{Obj.}{Alg.} & GP-TS & GP-UCB & GP-EI & GP-KG & $\infty$-GP-TS \\
    \midrule
    Portfolio       & 3510 & 3309 & 3807 & 3790 & 3411 \\
    HPOBench       & 3337 & 3300 & 3492 &3703 & 3502 \\
    NASBench   & 3920 & 3689 & 3802 & 4202 & 4009 \\
    \bottomrule
\end{tabular}
\caption{Computation time (seconds) for different algorithms across test functions.}\label{table3:time}
\end{table}

\section{Concluding Remarks}

In this paper, we propose a new surrogate model, the $\infty$-GP. Unlike classical GP models that capture only \emph{value uncertainty}, the $\infty$-GP additionally captures \emph{model uncertainty} by placing a prior over the distributional space. This enables $\infty$-GP-TS to explore and exploit not only within the function-value space but also within the space of reward models. As a result, the $\infty$-GP enjoys broad posterior convergence guarantees and achieves, for the first time, no-regret guarantee under a general reward setting.

One limitation of this work is that we focus on the scalability of $\infty$-GP with respect to the number of observations~$n$, rather than the ambient dimension~$d$ of the decision space. An interesting direction for future research is to investigate whether recent large-scale GP techniques can be incorporated into the $\infty$-GP framework to further improve scalability in high-dimensional settings.

\bibliographystyle{informs2014} 
\bibliography{ref} 
\ECSwitch


\ECHead{Appendix}
\renewcommand{\thesection}{EC.\arabic{section}}
\setcounter{section}{0}

\makeatletter
\renewcommand{\theHsection}{EC.\arabic{section}}
\makeatother

\section{Details of Gibbs sampling}\label{SEC:EC:GIBBS}
\begin{algorithm}
\caption{Truncated Gibbs Sampling for $\{\Theta,\bm{\xi_{1:n}}(x_{1:n}),\bm{z}_{1:n}\}$}\label{algo_gibbs_sampling}
\small
\begin{algorithmic}[1]
\State \textbf{Input:} Initialization of $\{\Theta,\bm{\xi_{1:n}}(x_{1:n}),\bm{z}_{1:n}\}$, $\mathcal{H}_n$, iteration number $B$, truncation number $L$
\State \textbf{Output:} Samples $[\hat{\Theta},\bm{\hat{\xi}_{1:n}}(x_{1:n}),\bm{\hat{z}}_{1:n}|\mathcal{H}_n]$, where \(\Theta = \{\beta, \nu, \tau, \sigma^2, \phi\}\).
\For{each iteration \(b = 1, \dots, B\)}
\State \textbf{Step 1: Sampling \(\bm{\xi_{1:n}}(x_{1:n})\):}
\For{each \(l = 1, \dots, L\)}
    \State Compute \(\mathcal{I}^l(\bm{z}_{1:n}) = \text{diag}(\mathbb{I}_{\{z_1 = l\}}, \dots, \mathbb{I}_{\{z_n = l\}})\).
     Sample \(\xi^{(l)}(x_{1:n})\) from the distribution $\xi^{(l)}(x_{1:n}) \sim \mathcal{N}\left( \frac{1}{\tau^2} \Lambda^l \mathcal{I}^l \left( y(x_{1:n}) - x_{1:n}^\top \beta \right), \Lambda^l \right)$,
    where \(\Lambda^l = \left(\Sigma_0^{-1} + \tau^{-2}\mathcal{I}^l\right)^{-1}\) and \(\Sigma_0 = \sigma^2 \rho\).
\EndFor

\State \textbf{Step 2: Sampling  Weights \((w_1, \dots, w_L)\):}
\For{each \(l = 1, \dots, L\)}
    \State Draw \(V_l \sim \text{Beta}(1 + M_l, \nu + \sum_{j=l+1}^{L} M_j)\), where \(M_j = \#\{i : z_i = j\}\). Compute weights by $w_1 = V_1,\ w_l = (1 - V_1)(1 - V_2) \dots (1 - V_{l-1}) V_l,\ w_L = 1 - \sum_{l=1}^{L-1} w_l$
\EndFor

\State \textbf{Step 3: Sampling Latent Labels \((z_1, \dots, z_n)\):}
\For{each observation \(x_i\)}
    \State Sample \(z_i\) from \(\{1, \dots, L\}\) with probabilities proportional to $P(z_i = j) \propto w_j \exp \left\{ -\frac{1}{2\tau^2} \left( y(x_i) - x_i^\top \beta - \xi_j(x_i) \right)^2 \right\}.$
\EndFor

\State \textbf{Step 4: Sampling Hyper-Parameters \(\Theta = \{\beta, \nu, \tau, \sigma^2, \phi\}\):}
\State \textbf{(4a) Sampling \(\nu\):}  \(\nu \sim \mathrm{Gamma}(a_\nu + n-1, b_\nu - \log(w_L))\)
\State \textbf{(4b) Sampling \(\beta\):}  \(\beta \sim \mathcal{N}(\beta_0^n, \Sigma_\beta^n)\), where
$$\Sigma_\beta^n = \left( \Sigma_\beta^{-1} + \tau^{-2} x_{1:n}^\top x_{1:n} \right)^{-1},\  \beta_0^n = \Sigma_\beta^n \left( \Sigma_\beta^{-1} \beta_0 + \tau^{-2} X_{1:n}^\top \bar{y}(x_{1:n}) \right),\ \bar{y}(x_i) = y(x_i) - \xi^{(z_i)}(x_i).$$ 

\State \textbf{(4c) Sampling \(\tau^2\):}  \(\tau^2 \sim \mathrm{InvGamma}(a_\tau^n, b_\tau^n)\), where $a_\tau^n = a_\tau + \frac{n}{2}, \quad b_\tau^n = b_\tau + \frac{1}{2} \sum_{i=1}^{n} \left( \bar{y}(x_i) - x_i^\top \beta \right)^2$.

\State \textbf{(4d) Sampling \(\sigma^2\):} \(\sigma^2 \sim \mathrm{IGamma}(a_\sigma^n, b_\sigma^n)\), where $a_\sigma^n = a_\sigma + \frac{nL}{2}, \quad b_\sigma^n = b_\sigma + \frac{1}{2} \sum_{l=1}^{L} \left( \xi^{(l)}(x_{1:n}) \right)^\top \rho_{\bm{\phi}}^{-1}(x_{1:n}, x_{1:n}) \xi^{(l)}(x_{1:n}).$
\State \textbf{(4e) Sampling \(\phi\):} If the kernel is anisotropic, see (4e) in Section \ref{sec_ec_gibbs}. If the kernel is isotropic: \(\phi\) is sampled from a grid of values \(\{\phi_1, \phi_2, \dots, \phi_M\}\) with probabilities proportional to $$P(\phi_m) \propto \frac{1}{\left[\text{det}(\rho_{\phi_m}(x_{1:n}, x_{1:n}))\right]^{L/2}} \exp \left( -\frac{1}{2 \sigma^2} \sum_{l=1}^{L} \left( \xi^{(l)}(x_{1:n}) \right)^\top \rho_{\phi_m}^{-1}(x_{1:n}, x_{1:n}) \xi^{(l)}(x_{1:n}) \right).$$
\EndFor
\end{algorithmic}
\end{algorithm}
\subsection{Truncated Gibbs Sampling}\label{sec_ec_gibbs}
The pseudocode of our proposed truncated Gibbs sampling method is summarized in Algorithm~\ref{algo_gibbs_sampling}. We provide additional details and explanations below.

\textbf{Step 1: Given $\Theta$, $\bm{z}_{1:n}$ and data $\{x_i,y(x_i)\}_{i=1,\dots,n}$, drawing the table 
$\bm{\xi_{1:n}}(x_{1:n})$.}

 $\mathcal{I}^l(\bm{z}_{1:n})=\text{diag}(\mathbb{I}_{\{z_1=l\}},\cdots,\mathbb{I}_{\{z_n=l\}})$ is the diagonal matrix whose $i$-th entry is equal to 1 if observation $x_i$ lies on the $l$-th surface $\xi^{(l)}$ ($l=1,\cdots,L$). Let $\Sigma_0$ be the covariance matrix induced by the kernel $\sigma^2\rho_{\bm{\phi}}(x,x')=\sigma^2\exp\left( -\sum_{k=1}^d \phi_k (x^{(k)} - x'^{(k)})^2 \right)$ evaluated at $\{x_1,\cdots,x_n\}$.
Then the distribution of $\{\xi^{(l)}(x_{1:n})\}_{l=1}^L$ is given by
\begin{equation}
    \begin{aligned}
        \xi^{(l)}(x_{1:n})&\propto \text{exp}\big\{-\frac{1}{2}\xi^{(l)}(x_{1:n})^T\Sigma^{-1}_0(x_{1:n},x_{1:n})\xi^{(l)}(x_{1:n})\big\}\\
        &\times\text{exp} \big\{ -\frac{1}{2\tau^2}\big(y(x_{1:n})-x_{1:n}^T\beta-\xi^{(l)}(x_{1:n})\big)^T\mathcal{I}^l(\bm{z}_{1:n})\big(y(x_{1:n})-x_{1:n}^T\beta-\xi^{(l)}(x_{1:n})\big)\big\},
    \end{aligned}
\end{equation}
where the first term in the RHS is the prior for $\{\xi^{(l)}(x_{1:n})\}_{l=1}^L$ and the second term is the likelihood.
Then, with $\Lambda^l=\big(\Sigma_0^{-1}+\frac{\mathcal{I}^l}{\tau^2}\big)^{-1}$, we have 
\begin{equation}
    \begin{aligned}
       \xi^{(l)}(x_{1:n})\sim N(\frac{1}{\tau^2}\Lambda^l\mathcal{I}^l(y(x_{1:n})-x_{1:n}^T\beta),\Lambda^l)
    \end{aligned}
\end{equation}


\textbf{Step 2: Drawing $(w_1,\cdots,w_L)$}: according to \citeEC{ishwaran2001gibbs}, $\forall l=1,\cdots,L$, 
first drawing $V_l\sim {Beta}(1+M_l,\nu+\sum_{j=l+1}^{L}M_j)$, where $M_j=\#\{i:z_i=j\}$ is the number of observations lies on the $j$-th surface. $w_1=V_1$, and $w_k=(1-V_1)(1-V_2)\cdots (1-V_{l-1})V_l$, for $l=2,\cdots,L-1$ and $w_L=1-\sum_{l=1}^{L-1}w_l$.

\textbf{Step 3: Drawing $(z_1,\cdots,z_n)$}: 
drawing $z_i$ from $\{1,\cdots,L\}$ with probabilities proportional to 
$$P(z_i=j)=w_j\text{exp}\{-\frac{1}{2\tau^2}\big(y(x_i)-x_i\beta-\xi^{(j)}(x_i))^2\}.$$

\textbf{Step 4: Drawing $\Theta=\{\beta,\nu,\tau^2,\sigma^2,\phi\}$ from $p(\Theta)\prod_{i=1}^n f(\mathcal{H}_n|\Theta)$}

(4a): repeat Step 2 and then draw $\nu$ from its posterior ${Gamma}(a_\nu+n-1,b_\nu-log(w_L))$.

(4b): draw $\beta$ from its posterior, given by $N(\beta_0^n,\Sigma_\beta^n)$, where
$$\Sigma_\beta^n=(\Sigma_\beta^{-1}+\tau^{-2}x_{1:n}^Tx_{1:n})^{-1},$$
$$\beta_0^n=\Sigma_\beta^n(\Sigma_\beta^{-1}\beta_0+\tau^{-2}x^T_{1:n}\bar{y}(x_{1:n}))$$
$$\bar{y}(x_i)=y(x_i)-\xi^{(z_i)}(x_i).$$


(4c) draw 
$\tau^2$ from its posterior distribution, given by $IGamma(a_\tau^n,b_\tau^n)$, where 
$a_\tau^n=a_\tau+\frac{n}{2}$ and $b_\tau^n=b_\tau+\frac{1}{2}(\bar{y}(x_{1:n})-x_{1:n}\beta)^T(\bar{y}(x_{1:n})-x_{1:n}\beta)$ 

(4d) draw $\sigma^2$ from its posterior distribution, given by $IG(a_\sigma^n,b_\sigma^n)$, where $a_\sigma^n=a_\sigma+\frac{nL}{2}$ and $b^n_\sigma=b_\sigma+\frac{1}{2}\sum_{l=1}^L(\xi^{(l)}(x_{1:n}))^T\rho^{-1}_\phi(x_{1:n},x_{1:n})\xi^{(l)}(x_{1:n})$.

(4e) Updating the kernel hyperparameters.
Let $[\phi]$ be the prior for $\phi$. Then the posterior is given by
$$
\frac{[\phi]}{\bigl[\det\bigl(\rho_{\bm{\phi}}(x_{1:n},x_{1:n})\bigr)\bigr]^{L/2}}
\exp \left(
-\frac{\sum_{l=1}^L \bigl(\xi^{(l)}(x_{1:n})\bigr)^{\top}
\rho_{\bm{\phi}}^{-1}(x_{1:n},x_{1:n}) \xi^{(l)}(x_{1:n})}{2\sigma^2}
\right).
$$
If the kernel is isotropic, that is, all components in $\bm{\phi}=(\phi_1,\dots,\phi_d)$ are identical and equal to a common value $\phi$. To sample from the posterior of $\phi$, \citeEC{gelfand2005bayesian} propose an efficient and numerically stable approach: specify a discrete uniform prior $P(\phi=\phi_m)=1/M$, where $\phi_m$, $m=1,\dots,M$, form a grid over $(0,b_\phi]$. Then we sample $\phi_m$ with probabilities proportional to
$$
\frac{1}{\bigl[\det\bigl(\rho_{\phi_m}(x_{1:n},x_{1:n})\bigr)\bigr]^{L/2}}
\exp \left(
-\frac{\sum_{l=1}^L \bigl(\xi^{(l)}(x_{1:n})\bigr)^{\top}
\rho_{\phi_m}^{-1}(x_{1:n},x_{1:n}) \xi^{(l)}(x_{1:n})}{2\sigma^2}
\right).
$$
If the anisotropic kernel is adopted, to estimate the dimension-specific kernel hyperparameters $\bm{\phi}=(\phi_1,\dots,\phi_d)$, we adopt a maximum likelihood approach conditional on the current posterior samples of the latent GP surfaces. Given the posterior realizations $\xi^{(1)}(x),\dots,\xi^{(L)}(x)$ at the evaluated inputs $x_1,\dots,x_n$, we define
$$
[\rho_{\bm{\phi}}]_{ij} = \exp \left(-\sum_{k=1}^d \phi_k (x_i^{(k)} - x_j^{(k)})^2\right).
$$
Then the negative log-likelihood to be minimized is
$$
\mathcal{L}(\bm{\phi}) =
\frac{L}{2}\log\det \rho_{\bm{\phi}} +
\frac{1}{2\sigma^2} \sum_{l=1}^L \xi^{(l)\top} \rho_{\bm{\phi}}^{-1} \xi^{(l)}.
$$
The minimizer $\bm{\phi}^\ast$ (obtained via gradient-based methods) of $\mathcal{L}(\bm{\phi})$ is used as the plug-in estimate for the kernel parameters in subsequent sampling steps.

\subsection{Prior Specification Detail}\label{SEC:PRIOR SPECIFICATION}
\begin{enumerate} 
    \item[$\bullet$] $\tau^2$ and $\sigma^2$: we set $a_\tau=a_\sigma=2$ and thus the mean of the prior is $b_\tau$ and $b_\sigma$. Therefore, prior
information about the variance (for example, a rough estimation of the variance) can be incorporated.
    \item[$\bullet$] $\phi$ has a uniform prior on $(0,b_\phi]$. If the distance between $x_1$ and $x_2$, $\|x_1-x_2\|>\frac{3}{\phi}$, their correlation will decrease to less than 0.05. Therefore, we set $\frac{3}{b_\phi}=d\ \mathrm{max}\|x_1-x_2\|$ with a small $d$ (e.g. 0.01).
\end{enumerate}

\section{Proof of Eq.~\eqref{eq_urm_finite} and Proposition \ref{prop_core_decompose}}\label{sec:ec:prop1}


Generally, the predictive reward distribution for any unexplored solution $x_{n+1}$ is given by  
\begin{equation}
    \begin{aligned}\label{decompose}
        &[y(x_{n+1})|\mathcal{H}_n]=\int\textcolor{blue}{\underbrace{[y(x_{n+1})|\Theta^{(1)},\xi^{(z_{n+1})}]}_{A}}\textcolor{red}{[\Theta^{(1)},\xi^{(z_{n+1})}|\mathcal{H}_n]}\\
        &=\int\int\textcolor{blue}{\underbrace{[y(x_{n+1})|\Theta^{(1)},\xi^{(z_{n+1})}]}_{A}}\textcolor{red}{\underbrace{[\xi^{(z_{n+1})}|\bm{\xi}_{1:n},\bm{z}_{1:n},\Theta^{(2)}]}_{B}\underbrace{[\Theta,\bm{\xi_{1:n}},\bm{z}_{1:n}|\mathcal{H}_n]}_{C}}.
    \end{aligned}
\end{equation}
In the first line of (\ref{decompose}), conditioned on the new surface \( \xi^{(z_{n+1})} (x_{n+1})\) and first-layer hyper parameter $\Theta^{(1)}=(\beta,\tau^2)$, the distribution \( \textcolor{blue}{[y(x_{n+1})|\Theta^{(1)},\xi^{(z_{n+1})}]} \) is $\mathcal{N}(x_{n+1}^\top \beta + \xi^{(z_{n+1})}(x_{n+1}), \tau^2)$. 
In the second line, leveraging the urn scheme, the posterior distribution of the surface \( \textcolor{red}{[\Theta^{(1)}, \xi^{(z_{n+1})} | \mathcal{H}_n]} \) is decomposed into Term B and Term C. Term B represents the prediction of the new surface \( \xi^{(z_{n+1})} \) based on the previously realized surfaces, as shown in (\ref{urn_scheme}). Term C represents the posterior distribution of the hyperparameters and realized surfaces, which requires inference through MCMC.

Next we derive the specific expression of Term B in (\ref{decompose}),  i.e., the Eq.~\eqref{eq_urm_finite}.
Notice that, when performing computations involving stochastic processes over $\mathcal{X}$, in practice, we are working with their finite-dimensional distributions, i.e., the joint distribution over a finite subset of $\mathcal{X}$. 
 Specifically, Term B, \( \textcolor{red}{[\xi^{(z_{n+1})}|\bm{\xi}_{1:n},\bm{z}_{1:n},\Theta^{(2)}]} \), becomes \( \textcolor{red}{[\xi^{(z_{n+1})}(x_{n+1})|\bm{\xi}_{1:n}(x_{1:n}),\bm{z}_{1:n},\Theta^{(2)}]} \), where $\bm{\xi}_{1:n}(x_{1:n})=\{\xi^{(j)}(x_i)\}_{i=1,\cdots,n \ j=1,\cdots,K_n}$ represents the \(K_n \times n\) table of values on the realized surfaces corresponding to the evaluated inputs \(x_{1:n}\).
Then, as discussed earlier, the Term  B can be decomposed as 
\begin{equation}
\begin{aligned}
&\textcolor{red}{   [\xi^{(z_{n+1})}(x_{n+1})|\bm{\xi}_{1:n}(x_{1:n}),\bm{z}_{1:n},\Theta^{(2)}]}\\
&=\int \underbrace{[\xi^{(z_{n+1})}(x_{n+1})|\bm{\xi}_{1:n}(x_{n+1}),\bm{z}_{1:n},\Theta^{(2)}]}_{\text{Step (a): Urn scheme}\ (\ref{urn_scheme})}\underbrace{[\bm{\xi}_{1:n}(x_{n+1})|\bm{\xi}_{1:n}(x_{1:n}),\Theta^{(2)}]}_{\text{Step (b): kriging}},
\end{aligned}
\end{equation}

where the Urn scheme term follows (\ref{urn_scheme}) by evaluating $\xi^{(z_{n+1})}$ and $\bm{\xi}_{1:n}$ at $x_{n+1}$. The kriging term is given by (recall that $\bm{\xi}_{1:n}=\{\xi^{(j)}\}_{j=1}^{K_n}$)
\begin{equation}\label{eq_kriging}
    \underbrace{[{\xi}^{(j)}(x_{n+1})|\bm{\xi}_{1:n}(x_{1:n}),\Theta^{(2)}]}_{\text{kriging}}\sim \mathcal{GP}(\mu_n^{(j)}(x_{n+1}),\sigma_n^2(x_{n+1})),
\end{equation}
 where $ \mu_n^{(j)}(x_{n+1})=\Sigma_0(x_{n+1},x_{1:n})\Sigma_0(x_{1:n},x_{1:n})^{-1}\xi^{(j)}(x_{1:n}),$ and $\sigma_n^2(x_{n+1})$ is given by (\ref{kriging_variance}). 
    Therefore, the term B is given by \begin{equation}
\begin{aligned}\label{eq_urm_finite_2}
  \textcolor{red}{ [\xi^{(z_{n+1})}(x_{n+1})|\bm{\xi}_{1:n}(x_{1:n}),\bm{z}_{1:n},\Theta^{(2)}]}\sim\frac{\nu}{\nu+n}G_0+\sum_{j=1}^{K_n}\frac{n_j}{\nu+n}\mathcal{GP}(\mu_n^{(j)}(x_{n+1}),\sigma_n^2(x_{n+1})). 
\end{aligned}
\end{equation}

\halmos

\section{Proof of Theorem~\ref{theo_uni_converge}: Convergence Analysis of $\infty$-GP} \label{sec_ec_proof_theo1}
Throughout this appendix, to analyze the convergence of the $\infty$-GP model using Bayesian nonparametric theory, it is essential to adopt the perspective that the $\infty$-GP model essentially defines a prior $\Pi_{\mathcal{X}}$ on the space of conditional densities $\mathcal{F}=\{f(\cdot \mid x) : x \in \mathcal{X}\}$, which is ``a distribution over distributions". 
Specifically, this prior $\Pi_{\mathcal{X}}$ is construct by first defining the conditional density as a Gaussian kernel mixture $$f(y\mid x)=\sum_{l=1}^\infty w_l \varphi_{x^\top\beta+\xi^{(l)}(x),\tau^2}(y),$$ where $\varphi_{\mu,\Sigma}$ is the standard Gaussian kernel with mean vector $\mu$ and covariance matrix $\Sigma$,
 and then placing priors on $w_l$, $\tau$, $\beta$ and $\xi^{(l)}$. Moreover, without loss of generality, we set the compact $\mathcal{X}\subset\mathbb{R}^d$ to $[0,1]^d$.

Then let us clarify the definitions of different convergence ``rates''.
Given
\(\mathcal H_n\), the posterior is
\(\Pi_{ \mathcal X}( \cdot\mid\mathcal H_n)\).
Let $\delta$ be any metric on~$\mathcal F$
(e.g.\ Hellinger, TVD, or the
$\pi$-integrated TVD used in this paper).
A sequence $\{\mathcal{E}_n\}_{n\ge1}$ is called a \emph{contraction rate}
if there exists $M>0$ such that
$$\Pi_{\mathcal X}(
     f:\delta(f,f^{\ast})>M\mathcal{E}_n
     |\mathcal H_n
  )
  \to 0$$ almost surely under $\mathbb{P}_{\mathcal{H}_n}$, where $\mathbb{P}_{\mathcal{H}_n}$ denotes the product measure induced by sampling $\mathcal{H}_n = \{(x_i, y_i)\}_{i=1}^n$ i.i.d. from $\pi(x) f^\ast(y \mid x)$.
Moreover, it is important to note that,  the predictive distribution $f_n$ of $\infty$-GP model, defined in Eq.\eqref{decompose2}, is essentially the posterior expectation (which is still a distribution) over $\Pi_\mathcal{X}$,  i.e., $f_n=\int f d\Pi_\mathcal{X}(f|\mathcal{H}_n)$. According to \citetEC[Theorem 2.5 and page.~507]{ghosal2000convergence}, under TVD metric, convergence rate of $\delta_{\mathrm{TV}}(f_n,f^\ast)$ is at least as fast as the contraction rate $\mathcal{E}_n$ (almost surely under $\mathbb{P}_{\mathcal{H}_n}$), a result that holds for general statistical models
and posterior distributions. As noted in Remark~\ref{remark2}, all bounds on $\delta_{\mathrm{TV}}(f_n, f^\ast)$ in this paper are derived via the contraction rate $\mathcal{E}_n$. For notation simplicity, we use $\delta_{\mathrm{TV}}(f_n, f^\ast)$ to represent the contraction rate in the main text, with a slight abuse of notation.





We begin by stating the following lemma, which is a direct extension of Theorem 2 in \citeEC{ghosal1999posterior} to the setting of $\pi$-integrated TVD.
The proof is provided in \citetEC[Theorem 5.9]{pati2013posterior}. Let $\delta_{\pi,\mathrm{TVD}}(f,f')$ denote the $\pi$-integrated TVD metric, defined as $\int_{\mathcal{X}} \delta_{\mathrm{TV}}(f(\cdot | x), f'(\cdot | x))   \pi(x)   dx$. 
\begin{lemma}[\citeEC{ghosal1999posterior}]\label{lemma_ghosal1999}
Suppose the true conditional density $f^\ast$ belongs to the Kullback-Leibler support of the prior $\Pi_\mathcal{X}$, i.e.,
\(
f^\ast \in \operatorname{KL}(\Pi_\mathcal{X}),
\)
and there exists a sequence of subsets (sieves) $\mathcal{F}_n$ of the space of conditional densities over $\mathcal{X}$ such that:
 the covering entropy satisfies
    \begin{equation}\label{eq_ghosal1999_1}
         \log N(\mathcal{E}, \mathcal{F}_n, \|\cdot\|_{1}) = o(n),
    \end{equation}
 \begin{equation}\label{eq_ghosal1999_2}
        \Pi_\mathcal{X}(\mathcal{F}_n^c) \leq c_1 e^{-n c_2},
    \quad \text{for some constants } c_1, c_2 > 0,
    \end{equation}

then the posterior distribution $\Pi_\mathcal{X}(\cdot \mid \mathcal{H}_n)$ is strongly consistent in the $\pi$-integrated TVD neighborhood of $f^\ast$, i.e., for every $\mathcal{E} > 0$,
$$\Pi_\mathcal{X}\left( f : \delta_{\pi,\mathrm{TVD}}(f,f^\ast) > \mathcal{E}  \middle|  \mathcal{H}_n \right) \to 0$$ almost surely as  $ n \to \infty.$
\end{lemma}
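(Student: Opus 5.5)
The plan is the classical Schwartz argument for strong posterior consistency, applied to joint densities. Write $p_f(x,y):=\pi(x)f(y\mid x)$ for each conditional density $f$. Under $\mathbb{P}_{\mathcal H_n}$ the pairs $(x_i,y_i)$ are i.i.d. from $p_{f^\ast}$. The $x$-marginal $\pi$ is shared by all models, so
\[
\delta_{\pi,\mathrm{TVD}}(f,f^\ast)=\tfrac12\|p_f-p_{f^\ast}\|_{L_1(dx\,dy)} .
\]
The problem is therefore ordinary i.i.d. density estimation in $L_1$, with the prior $\Pi_{\mathcal X}$ pushed forward to joint densities. I read the norm $\|\cdot\|_1$ in \eqref{eq_ghosal1999_1} as this $\pi$-integrated $L_1$ distance.

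Fix $\mathcal E>0$ and let $U=\{f:\delta_{\pi,\mathrm{TVD}}(f,f^\ast)\le\mathcal E\}$. Write
\[
\Pi_{\mathcal X}(U^c\mid\mathcal H_n)=\frac{\int_{U^c}R_n(f)\,d\Pi_{\mathcal X}(f)}{\int R_n(f)\,d\Pi_{\mathcal X}(f)},\qquad R_n(f)=\prod_{i=1}^n\frac{f(y_i\mid x_i)}{f^\ast(y_i\mid x_i)} .
\]
The argument then has three steps.

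\textbf{Step 1 (denominator).} Since $f^\ast\in\operatorname{KL}(\Pi_{\mathcal X})$, Schwartz's lemma gives, for every $\beta>0$, $\int R_n\,d\Pi_{\mathcal X}\ge e^{-n\beta}$ for all large $n$, almost surely. The proof restricts the integral to a KL neighbourhood of prior mass $>0$. It then applies Jensen's inequality and the strong law of large numbers to $\frac1n\sum_i\log\frac{f^\ast}{f}(y_i\mid x_i)$.

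\textbf{Step 2 (outside the sieve).} By Fubini, $\mathbb{E}\int_{\mathcal F_n^c}R_n\,d\Pi_{\mathcal X}=\Pi_{\mathcal X}(\mathcal F_n^c)\le c_1e^{-nc_2}$. Markov's inequality and Borel--Cantelli then give $\int_{\mathcal F_n^c}R_n\,d\Pi_{\mathcal X}\le e^{-nc_2/2}$ eventually, almost surely.

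\textbf{Step 3 (inside the sieve).} Cover $\mathcal F_n\cap U^c$ by $N_n\le N(\mathcal E/4,\mathcal F_n,\|\cdot\|_1)$ balls of $\|\cdot\|_1$-radius $\mathcal E/4$ centred at points $f_j$ with $\|p_{f_j}-p_{f^\ast}\|_1\ge 2\mathcal E$. For each ball, use the Le Cam--Hoeffding test against the convex hull of the ball, $\phi_{n,j}=\mathbb 1\{\frac1n\sum_i\mathbb 1_{A_j}(x_i,y_i)-P_{f^\ast}(A_j)>\mathcal E/2\}$ with $A_j=\{p_{f_j}>p_{f^\ast}\}$. Its type I error is at most $e^{-nK\mathcal E^2}$, and its type II error is at most $e^{-nK\mathcal E^2}$ uniformly over the ball. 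Let $\phi_n=\max_j\phi_{n,j}$. Because $\log N_n=o(n)$, we get $\mathbb{E}_{f^\ast}\phi_n\le e^{-nK\mathcal E^2/2}$ for large $n$, which is summable, so Borel--Cantelli gives $\phi_n=0$ eventually, almost surely. Also
\[
\mathbb{E}_{f^\ast}\Bigl[(1-\phi_n)\int_{\mathcal F_n\cap U^c}R_n\,d\Pi_{\mathcal X}\Bigr]\le N_n e^{-nK\mathcal E^2}\le e^{-nK\mathcal E^2/2},
\]
and Markov plus Borel--Cantelli again bound this numerator by $e^{-nK\mathcal E^2/4}$ eventually.

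Combining the three steps with $\beta<\min\{c_2/2,\,K\mathcal E^2/4\}$, the posterior mass of $U^c$ is at most $e^{-n(\min\{c_2/2,K\mathcal E^2/4\}-\beta)}\to0$ almost surely. Since $\mathcal E>0$ was arbitrary, this proves strong consistency in the $\pi$-integrated TVD.

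The step I expect to be the main obstacle is Step 3. The covering must be taken in exactly the same $\pi$-integrated $L_1$ metric in which the tests are built, so the entropy condition in \eqref{eq_ghosal1999_1} has to be read with respect to $\pi$. Every error bound must also be exponentially small, not merely vanishing, so that Borel--Cantelli yields almost sure rather than in-probability convergence; this is why the $o(n)$ entropy bound matters. I would also note that the argument uses the i.i.d. sampling from $\pi(x)f^\ast(y\mid x)$ assumed in the statement. Adaptive designs are not covered here and are handled by the $\zeta^n$-greedy argument elsewhere.
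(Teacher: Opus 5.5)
Your proof is correct and matches the argument the paper relies on. The paper gives no proof of its own and defers to Theorem~2 of Ghosal, Ghosh and Ramamoorthi (1999) and its conditional-density extension in Pati et al.\ (2013, Theorem~5.9), which go exactly as you do: cancel the $\pi(x_i)$ factors to reduce to i.i.d.\ joint densities, bound the denominator below with Schwartz's lemma, handle the complement of the sieve with Fubini, Markov and Borel--Cantelli, and use Hoeffding tests from an $L_1$ cover on the sieve. One small nit is that the centres of a minimal $\mathcal{E}/4$-cover of $\mathcal{F}_n$ need not lie in $U^c$; however, any ball meeting $U^c$ has its centre at $\pi$-integrated $L_1$ distance greater than $7\mathcal{E}/4$ from $f^\ast$, which still leaves a positive Hoeffding margin, so the argument is unaffected.
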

Definitions of the Kullback–Leibler (KL) support, covering number $N(\mathcal{E}, \mathcal{F}_n, \|\cdot\|_{1})$ (i.e., the minimal number of $\mathcal{E}$-balls required to cover a function class), and covering entropy $\log N(\mathcal{E}, \mathcal{F}_n, \|\cdot\|_{1})$ (i.e., the logarithm of the covering number) can be found in \citetEC{ghosal1999posterior} and other standard references on Bayesian nonparametrics. Moreover, the condition ``the true conditional density $f^\ast$ lies in the KL support of $\Pi_\mathcal{X}$" is met for the $\infty$-GP model when the ground truth $f^\ast$ satisfies Assumption~\ref{ass_guangzhi}. In particular, it follows directly from Theorem 6.1 of \citeEC{pati2013posterior}.

As mentioned earlier, since the predictive distribution $f_n$ is the posterior expectation over $\Pi_\mathcal{X}$, according to Theorem 2.5 in \citeEC{ghosal2000convergence}, 
the strong consistency of the posterior distribution $\Pi_\mathcal{X}(\cdot|\mathcal{H}_n)$ in $\pi$-integrated TVD metric guarantees the convergence of $f_n$ to $f^\ast$ at the same rate and under the same metric. That is, $ \int_{\mathcal{X}} \delta_{\mathrm{TV}}(f_n(\cdot | x), f^\ast(\cdot | x))   \pi(x)   dx \to 0$ almost surely as $n\to\infty$.
Therefore, to establish Theorem~\ref{theo_uni_converge}, it suffices to apply Lemma~\ref{lemma_ghosal1999}. The key technical step is to construct a suitable sieve $\mathcal{F}_n$ that satisfies the covering entropy condition \eqref{eq_ghosal1999_1} and the prior mass condition \eqref{eq_ghosal1999_2}. Note that, our consistency proof technique is inspired by \citeEC{pati2013posterior}. However, several key differences arise due to the distinctive prior specification in the $\infty$-GP model and the introduction of truncation technique. As a result, both our sieve construction and the covering entropy bounds differ from those in \citeEC{pati2013posterior}.

We set sequences $a_n=\mathcal{O}(\sqrt{n}), 
h_n=e^n,
\ell_n=\mathcal{O}(\frac{1}{\sqrt{n}}),
m_n=\mathcal{O}(n^{\frac{1}{4}}),
M_n=\mathcal{O}(n^{\frac{1}{4}}),
r_n=\mathcal{O}(n^{\frac{1}{4d}})$. Here, note that since our goal is to establish that the covering entropy is $o(n)$, we use the notation $\overset{o(n)}{\precsim}$ to suppress all constants and lower-order terms that are independent of $n$ and do not affect the asymptotic order.

To begin with, to construct a sieve for $\xi^{(l)}$, let $\mathbb{H}_1^a$ denote a unit ball in the RKHS induced by the kernel $\sigma^2 e^{-a\|x-x'\|^2}$ and $\mathbb{B}_1$ is a unit ball in $\mathbb{C}[0,1]^d$.
Following the construction in the proof of Theorem 3.1 in \citeEC{van2009adaptive}, we define
$$
B_n
:=\Bigl(
       M_n\sqrt{\tfrac{r_n}{\delta_n}}\;\mathbb{H}_{1}^{r_n}
       +{\mathcal{E} \ell_n} \mathbb B_{1}
   \Bigr)
   \cup
   \Bigl(
       \bigcup_{a<\delta_n} M_n \mathbb{H}_{1}^{a}
       +{\mathcal{E} \ell_n} \mathbb B_{1}
   \Bigr),
$$
 where $\delta_n=c_3\mathcal{E}/M_n$ for some constant $c_3>0$.
Then, let
$$\Theta_n :=
\Bigl\{
  (\beta,\xi,\tau):
  \|\beta\|\le a_n,\;
  \xi\in B_n,\;
  \ell_n\le\tau\le h_n
\Bigr\}.$$
Finally, we define the sieve $\mathcal{F}_n$  over the space of conditional densities as
$$
\mathcal{F}_n :=
\{
  f:f(\cdot\mid x)=\sum_{l=1}^{\infty}w_l
     \varphi_{x^\top\beta+\xi^{(l)}(x),\tau^2}(\cdot)
  : (\beta,\xi^{(l)},\tau)\in\Theta_n,
    \sum_{l>m_n+1}w_l\le\mathcal{E}
\}.
$$

To bound the covering number of \( \mathcal{F}_n \) under $\ell_1$ distance, we first derive a bound on the \( \ell_1 \) distance between any two densities \( f_1, f_2 \in \mathcal{F}_n \). For any \( x \in \mathcal{X} \), we have: there exists a $c_{n}=\mathcal{O}(\frac{1}{\ell_n})$ such that
\begin{equation}\label{eq_covering_number1}
\|f_1(\cdot|x) - f_2(\cdot|x)\|_1 
\le 
\sum_{l \le m_n} |w_{l,1} - w_{l,2}|
+ c_{n} \sum_{l \le m_n}
d_{\mathrm{SS}}\big(
(\beta_1, \xi_1^{(l)}, \tau_1), 
(\beta_2, \xi_2^{(l)}, \tau_2)
\big)
+ 2\mathcal{E},
\end{equation}
where the single-site metric is defined by
\(
d_{\mathrm{SS}}\big((\beta_1, \xi_1, \tau_1), (\beta_2, \xi_2, \tau_2)\big)
:= \|\beta_1 - \beta_2\| + \|\xi_1 - \xi_2\|_\infty + |\tau_1 - \tau_2|.
\)
This inequality is obtained as follows: we first bound the \( \ell_1 \) distance between two densities by
\(
\|f_1 - f_2\|_1 
\le 
\sum_{l = 1}^{m_n} |w_{l,1} - w_{l,2}| 
+ \sum_{l = 1}^{m_n} \|\varphi_{\mu_1,\tau_1} - \varphi_{\mu_2,\tau_2}\|_1 
+ 2\mathcal{E},
\)
where the $2\mathcal{E}$ term accounts for the approximation error due to truncating the infinite mixture at \( m_n \) components. 
Next, we apply the Lipschitz continuity of the Gaussian kernel with respect to both the mean and the variance parameters. Specifically, since $\tau\geq \ell_n$,
we have 
\(
\|\varphi_{\mu_1, \tau_1} - \varphi_{\mu_2, \tau_2}\|_1 
\le 
c_n \left(|\mu_1 - \mu_2| + |\tau_1 - \tau_2|\right)
\) for some $c_n=\mathcal{O}(\frac{1}{\ell_n})$. Finally, 
with
\(
|\mu_1(x) - \mu_2(x)| 
\le 
\|\beta_1 - \beta_2\| \cdot \|x\| + \|\xi_1^{(l)} - \xi_2^{(l)}\|_\infty,
\)
we obtain \eqref{eq_covering_number1}.

Therefore, according to Eq.~\eqref{eq_covering_number1}, a $4\mathcal{E}$–cover of $\mathcal{F}_n$ can be constructed by separately covering the components as follows:  \\
\emph{(i)} the weight vector $\{w_1,\ldots,w_{m_n}\} \in \Delta_{m_n-1}$ with precision $\mathcal{E}$,  \\
\emph{(ii)} each GP path $\xi^{(l)}$ in $B_n$, and  \\
\emph{(iii)} each pair $(\beta, \tau)$,  
such that for each parameter triplet $(\beta, \xi^{(l)}, \tau)\in\Theta_n$, the covering accuracy is controlled within $\mathcal{E}/(c_{n} m_n)$.

For \emph{(i)}, the covering entropy of a $(m_n - 1)$-simplex $\Delta_{m_n-1}$ under the $\|\cdot\|_{1}$ metric is bounded by
$$
\log N\left(\mathcal{E}, \Delta_{m_n - 1}, \|\cdot\|_1\right)
\lesssim m_n \log\left(\frac{m_n}{\mathcal{E}}\right).
$$

For \emph{(ii)}, covering each $\xi^{(l)} \in B_n$ ($l=1,\cdots,m_n$) under the $\|\cdot\|_\infty$ norm, according to the proof of Theorem 3.1 in \citetEC{van2009adaptive}, yields
$$
\log N\Bigl(\frac{\mathcal{E}}{c_{n}m_n},B_n,\|\cdot\|_\infty\Bigr)
\overset{o(n)}{\precsim} 
 r_n^{d}
  \Bigl\{
      \log(
        \frac{M_n m_n \ell_n\sqrt{r_n/\delta_n}}
              {\mathcal{E} \ell_n})
  \Bigr\}^{d+1}
+ \log(\frac{M_n m_n \ell_n}{\mathcal{E} \ell_n}).
$$

For \emph{(iii)}, according to Lemma 4.1 in \citeEC{india}, the covering entropy of each $(\beta,\tau)$ is 
$
\overset{o(n)}{\precsim}\log\big[
  (a_n/\ell_n)^d + \log(h_n/\ell_n) + 1\big]$.

Summing the three contributions yields the following bound on the covering entropy (since there are $m_n$ such triplets $(\beta,\xi^{(l)},\tau)$, the covering entropies in \emph{(ii)} and \emph{(iii)} are multiplied by $m_n$):
\begin{equation}
    \begin{aligned}
      &  \log N\bigl(F_n,4\mathcal{E},\|\cdot\|_1\bigr)\\&\overset{o(n)}{\precsim} m_n r_n^{d}
  \Bigl\{
      \log(
        \frac{M_n m_n\sqrt{r_n/\delta_n}}
              {\mathcal{E} })
  \Bigr\}^{d+1}+m_n\log(\frac{M_nm_n }{\mathcal{E} })
\\
  & +m_n\log\big[
  (a_n/\ell_n)^d + \log(h_n/\ell_n) + 1\big]+ m_n\log\frac{m_n}{\mathcal{E}}=o(n).
    \end{aligned}
\end{equation}
Therefore, condition \eqref{eq_ghosal1999_1} is satisfied.

Moreover, notice that
\begin{equation}
    \begin{aligned}
        &\Pi_\mathcal{X}(\mathcal{F}_n^c)\leq m_n  \mathbb{P}(\Theta_n^c)+\mathbb{P}\left(\sum_{l=m_n+1}^\infty w_l > \mathcal{E}\right) \\
        &\leq m_n \left\{ \mathbb{P}(\|\beta\|>a_n) + \mathbb{P}(\tau\notin[\ell_n,h_n]) + \mathbb{P}(\xi\in B_n^c) \right\} + \mathbb{P}\left(\sum_{l=m_n+1}^\infty w_l > \mathcal{E}\right).
    \end{aligned}
\end{equation}
It is straightforward to verify that both $\mathbb{P}(\|\beta\|>a_n)$ and $\mathbb{P}(\tau\notin[\ell_n,h_n])$ are bounded by $c_4 e^{-c_5 n}$ for some constants $c_4, c_5 > 0$. Furthermore, by Eq. (5.3) of \citetEC{van2009adaptive}, the tail probability $\mathbb{P}(\xi\in B_n^c)$ is also exponentially small in $n$. For the final term, note that we employ a truncation technique in the $\infty$-GP model by setting all $w_l = 0$ for $l > L_n = \mathcal{O}( \log n)$, and we choose $m_n = \mathcal{O}(n^{1/4})$, which ensures that
$
\mathbb{P}\left(\sum_{l=m_n+1}^\infty w_l > \mathcal{E}\right) 
$ decreases faster than any exponential rate in $n$. Therefore, condition \eqref{eq_ghosal1999_2} is satisfied. The proof of Theorem~\ref{theo_uni_converge} is concluded.

\halmos

\section{Proof of Lemma \ref{lemma_central}} \label{sec_ec_proof_lemma1}

In Lemma \ref{lemma_central}, we assume that $\mathcal{X}$ is finite in order to ensure that the TVD is well defined over finite-dimensional distributions.
We first show the expected reward gap between two Thompson Sampling policies is controlled by the TVD between their induced action distributions. Let \( x_{n+1} \sim \pi^{n+1}_f \) and \( x_{n+1}' \sim \pi^{n+1}_{f'} \), where \( \pi^{n+1}_f \) and \( \pi^{n+1}_{f'} \) are the TS policies induced by surrogate models \( f \) and \( f' \) given $\mathcal{H}_{n}$, respectively. Then we have
\begin{equation}\label{eq:tv_bound}
\begin{aligned}
    &\mathbb{E}\left[ y(x_{n+1}) - y(x_{n+1}') \right] 
    =\mathbb{E}_{\mathcal{H}_{n}}\mathbb{E}_{\pi_f^{n+1},\pi^{n+1}_{f'}} \mathbb{E}_{y\sim f^\ast}\left[ y(x_{n+1}) - y(x_{n+1}') \right]  \\
    &=\mathbb{E}_{\mathcal{H}_{n}}\mathbb{E}_{\pi_f^{n+1},\pi^{n+1}_{f'}} \left[ \mu^\ast(x_{n+1}) - \mu^\ast(x_{n+1}') \right]  \\
    &= \mathbb{E}_{\mathcal{H}_{n}}\sum_{x \in \mathcal{X}} \sum_{x' \in \mathcal{X}} \left( \pi^{n+1}_f(x) \mu^\ast(x) - \pi^{n+1}_{f'}(x') \mu^\ast(x') \right)  \\
    &= \mathbb{E}_{\mathcal{H}_{n}}\sum_{x \in \mathcal{X}} \left( \pi^{n+1}_f(x) - \pi^{n+1}_{f'}(x) \right) \mu^\ast(x)  \\
    &\leq \mathbb{E}_{\mathcal{H}_{n}}\sum_{x\in\mathcal{X}} \left| \pi^{n+1}_f(x) - \pi^{n+1}_{f'}(x) \right| \cdot \left| \mu^\ast(x) \right| \leq 2\mu^\ast_{\max} \mathbb{E}_{\mathcal{H}_{n}} \delta_{\mathrm{TV}}\left( \pi^{n+1}_f, \pi^{n+1}_{f'} \right)
    . 
\end{aligned}
\end{equation}
The last line follows from Hölder's inequality and the definition of TVD. 

Next, we relate the TVD between policies to that between posterior distributions of rewards. Let \( g(\cdot) \) be the \texttt{argmax} function over a finite action set \( \mathcal{X} \). In case of ties, we define \( g(\cdot) \) to select the smallest index among the maximizers to ensure it is well-defined.
Each TS policy is induced by sampling from the posterior distribution over \( \mu^\ast(x) \) and applying the \texttt{argmax} rule. 
That is, each draw $x_{n+1}\sim\pi_f^{n+1}$ satisfies $x_{n+1}=g(\hat{\mu}_f^n(\cdot|\mathcal{X}))$, where $\hat{\mu}_f^n(\cdot|\mathcal{X})\sim \mu_f^n(\cdot|\mathcal{X})$ is a sample drawn from \( \mu^{n}_f(\cdot \mid \mathcal{X}):=[\mu^\ast(\cdot|\mathcal{X})|\mathcal{H}_n] \), i.e., the posterior distribution over the expected reward under surrogate model \( f \).

To bound the regret of using surrogate model $f$ using \eqref{eq:tv_bound}, we need to
replace $f'$ in \eqref{eq:tv_bound} by the true reward $f^\ast$. 
Since the ground-truth best policy $\pi^{n+1}_{f^\ast}=\delta_{x^\ast}$ is a Dirac located at fixed $x^\ast=g(\mu^\ast(x))$, it follows that
\begin{equation}\label{eq:tv-dirac}
\delta_{\mathrm{TV}} \left(\pi^{n+1}_f,\delta_{x^\ast}\right)
= 1-\pi^{n+1}_f(x^\ast).
\end{equation}
Let $\Delta:=\mu^\ast(x^\ast)-\max_{x\in\mathcal{X}\setminus\{ x^\ast\}}\mu^\ast(x)$. Then it is straightforward to verify that
\begin{equation}\label{eq:misselection}
1-\pi^{n+1}_f(x^\ast)\le\ 
\mathbb P \left(\sup_{x\in\mathcal X}\big|\hat\mu_f^n(x)-\mu^\ast(x)\big|\ge\frac{\Delta}{2}\ \Bigm|\ \mathcal H_n\right).
\end{equation}
Combining \eqref{eq:tv_bound}, \eqref{eq:tv-dirac} and \eqref{eq:misselection} and using Markov inequality, we obtain
\begin{equation}\label{eq:inst-regret-prob}
\mathbb E r_{n+1}\ \le\ 2\mu^\ast_{\max}\ \mathbb E_{\mathcal H_n} \left[1-\pi^{n+1}_f(x^\ast)\right] 
 \le \frac{4\mu^\ast_{\max}}{\Delta}\mathbb E_{\mathcal H_n} \left[\sup_{x\in\mathcal X}\big|\hat\mu_f^n(x)-\mu^\ast(x)\big|\right].
\end{equation}
Next, we connect the bound in \eqref{eq:inst-regret-prob} to the TVD between the surrogate model and the true reward model. This step requires a truncation argument: we first restrict the integrals to $|y|<B$ from some $B>0$, and then control the contribution of the tail outside $|y|>B$. According to Algorithm~\ref{alg:thompson_sampling}, a realization $\hat\mu_f^n$ drawn from $\mu_f^n$ can be equivalently written as $\hat\mu_f^n=T(\hat f^{n})$, where $\hat f^{n} \sim \Pi_{\mathcal X}(\cdot \mid \mathcal H_{n})$ is drawn from the posterior over $\mathcal{F}$ (the posterior $\Pi_\mathcal{X}(\cdot|\mathcal{H}_{n})$ is defined in Section~\ref{sec_ec_proof_theo1}), and $T: \mathcal{F} \to \mathbb{R}^{\mathcal X}$ is the mean operator defined by $T(f)(x) = \int y  \mathrm{d}f(y \mid x)$. 
For $B>0$, and $\forall x$, we have
\begin{equation}\label{eq:mean-decomp}
\begin{aligned}
  &  \big|\hat\mu_f^n(x)-\mu^\ast(x)\big|
=\big|\int y  d\hat f^n(y\mid x)-\int y  df^\ast(y\mid x)\big|
\\
&\le \big|\int y \mathbb{I}_{|y|\le B} (d\hat f^n(y\mid x)-  df^\ast(y\mid x))\big|+ T_B^n(x)+T_B^\ast(x)\\
&\le\ 2B \delta_{\mathrm{TV}} \big(\hat f^n(\cdot\mid x),f^\ast(\cdot\mid x)\big)+ T_B^n(x)+T_B^\ast(x),
\end{aligned}
\end{equation}
where the tail remainders
$$T_B^n(x):=\int (|y|-B)_+  df^n(y\mid x)$$ and $$T_B^\ast(x):=\int (|y|-B)_+  df^\ast(y\mid x).$$ Moreover, by the data processing inequality (DPI, which states that any measurable deterministic transformation cannot increase the TVD between two measures, as it only makes the distributions harder to distinguish), we have $\sup_{x\in\mathcal X}\delta_{\mathrm{TV}} \big(f^n(\cdot\mid x), f^\ast(\cdot\mid x)\big) \le\ \delta_{\mathrm{TV}} \big(f^n(\cdot\mid \mathcal X), f^\ast(\cdot\mid \mathcal X)\big).$ Then,
taking the supremum over $x$ in \eqref{eq:mean-decomp} gives
\begin{equation}\label{eq:mean-sup}
\sup_{x\in\mathcal X}\big|\hat\mu_f^n(x)-\mu^\ast(x)\big|
 \le 2B \delta_{\mathrm{TV}} \big(\hat f^n(\cdot\mid \mathcal X),f^\ast(\cdot\mid \mathcal X)\big)+\sup_x T_B^n(x) +\sup_x T_B^\ast(x).
\end{equation}

 Suppose that the contraction rate of $\Pi_\mathcal{X}$ (see the definition of contraction rate in Section \ref{sec_ec_proof_theo1}) is $\mathcal{E}_{n}$, i.e., \(
  \Pi_{\mathcal X}(
     f:\delta_{\mathrm{TV}}(f(\cdot\mid \mathcal X),f^{\ast}(\cdot\mid \mathcal X))>M\mathcal{E}_n
     |\mathcal H_n
  )
  \to 0
\) in $\mathbb{P}_{\mathcal{H}_n}$ probability for some $M>0$, then it follows immediately that $\mathbb E_{\mathcal H_n} \delta_{\mathrm{TV}} \big(\hat f^{n}(\cdot\mid \mathcal X),\ f^\ast(\cdot\mid \mathcal X)\big)= O(\mathcal{E}_n)$.

Next, we show that the remainder $\sup_x T_B^n(x) +\sup_x T_B^\ast(x)$ is negligible. With the tail assumption ``$\forall x\in\mathcal X$, there exist constants $a_1,a_2,a_3,\gamma>0$ such that for all $|y|>a_3$, $\max\big\{f^\ast(y\mid x),f_n(y\mid x)\big\}\le\ a_1\exp\big(-a_2|y|^{\gamma}\big)$", we have 
that there exist constants $C,c>0$ such that $$\sup_{x\in\mathcal X}T_B^n(x)+\sup_{x\in\mathcal X}T_B^\ast(x)\le C \exp \big(-c B^\gamma\big)$$ for all $B>a_3$. This follows by expressing the remainder as an integral of the tail probability: 
$$T_B^\ast(x)
=\int_B^\infty \int_{|y|>t} f^\ast(y\mid x)dydt
\le\int_B^\infty C' e^{-c' t^\gamma}dt
\le C e^{-c B^\gamma},$$
and the same bound holds for $T_B^n(x)$.

Therefore, taking conditional expectation of \eqref{eq:mean-sup} and then expectation over $\mathcal H_n$ yields
\begin{equation}\label{eq:mean-rate}
\mathbb E_{\mathcal H_n}\Big[\sup_{x\in\mathcal X}\big|\hat\mu_f^n(x)-\mu^\ast(x)\big|\Big]
\ \le\ 2B\cdot O(\mathcal{E}_n)\ +\ C e^{-c B^\gamma}.
\end{equation}
Choose $B_n=(\frac{1}{c} \log\frac{1}{\mathcal{E}_n})^{ 1/\gamma},$
then, up to logarithmic factors,  \eqref{eq:mean-rate} becomes
\begin{equation}\label{eq:mean-final}
\mathbb E_{\mathcal H_n}\Big[\sup_{x\in\mathcal X}\big|\hat\mu_f^n(x)-\mu^\ast(x)\big|\Big]
\ =\ O (\mathcal{E}_n ).
\end{equation}
 Recall that $\delta_{\mathrm{TV}}\left( f_n(\cdot \mid \mathcal{X}),\ f^\ast(\cdot \mid \mathcal{X}) \right)$ is almost surely bounded by the contraction rate $\mathcal{E}_n$, and as noted in Remark \ref{remark2}, throughout the main text we use $\delta_{\mathrm{TV}}\left( f_n(\cdot \mid \mathcal{X}),\ f^\ast(\cdot \mid \mathcal{X}) \right)$ to denote the contraction rate $\mathcal{E}_n$. Combing \eqref{eq:inst-regret-prob} and \eqref{eq:mean-final}, we have $$r_{n+1}\precsim \mu^\ast_{\max}  \delta_{\mathrm{TV}}\big(f_n(\cdot \mid \mathcal{X}), f^\ast(\cdot \mid \mathcal{X})\big).$$

\begin{remark}\label{remark_EC}
 Let $\mathcal{X}=\{x_1,\cdots,x_{|\mathcal{X}|}\}$. In the regret bound presented in Lemma~\ref{lemma_central}, we may safely replace the true reward distribution $f^\ast(\cdot \mid \mathcal{X})$ with its decoupled version $\tilde{f}^\ast(\cdot \mid \mathcal{X}) := \prod_{i=1}^{|\mathcal{X}|} f^\ast(\cdot \mid x_i)$, which discards any correlation among the inputs $\{x_i\}_{i=1}^{|\mathcal{X}|}$. This substitution is justified by the fact that  all steps in the proof depend only on the marginals $\{f^\ast(\cdot\mid x)\}_{x\in\mathcal{X}}$.
As a result, the regret bound remains valid when $f^\ast$ is replaced by $\tilde{f}^\ast$, which will be used in the proof of Lemma~\ref{lemma_indepelization}.
\end{remark}

\halmos

\section{Proof of Main Theorem \ref{theo_3}}\label{sec:ec:maintheorem}

Before presenting the main theorem, we introduce several preparatory results. The proofs of Lemmas~\ref{lemma_indepelization}, \ref{prop_TS-TS(D)},  and~\ref{thm:posterior_convergence} are deferred to later sections.

\begin{itemize}
\item \textbf{Central Lemma~\ref{lemma_central}}, which bounds the instantaneous regret in terms of the TVD between the predictive distribution and the true reward model. See the proof in Section \ref{sec_ec_proof_lemma1}.
\item \textbf{Lemma~\ref{prop_TS-TS(D)}}, which shows that the additional regret due to discretization is sublinear.
\item \textbf{Lemma~\ref{lemma_indepelization}}, which shows that the regret of TS(D) is bounded by that of TS(D)-I.
\item \textbf{Lemma~\ref{thm:posterior_convergence}}, which establishes the convergence rate of the $\infty$-GP model at a single input.
\end{itemize}

\proof{Proof:} By applying $\zeta^i$-greedy-TS policy at the $i$-th iteration, the expected sample size allocated to each input $x\in\mathcal{X}_n$ until the $n$-th iteration is lower bounded by (notice that we assume the total number of iterations $n$ is known in advance and that $\mathcal{X}_n$ is fixed prior to the run)
    \begin{equation}\label{ec_eq_egreedy_expolore}
        \mathbb{E}_{\mathcal{H}_n}(N^n_x)\geq \frac{\sum_{i=1}^n\zeta^i}{|\mathcal{X}_n|}.
    \end{equation}
Since $0<\lambda_1<1$ and $|\mathcal{X}_n|=C_2n^{\lambda_2}$, $\mathbb{E}(N^n_x)=\Omega(n^{1-\lambda_1-\lambda_2})$. Moreover, the $\zeta^n$ exploration term will incur additional regret:
\begin{equation}
    \begin{aligned}
        \Delta \mathcal{R}^{\zeta^n-\mathrm{greedy}}\leq \mu^\ast_{max}\sum_{i=1}^n\zeta^i=\mathcal{O}(n^{1-\lambda_1}).
    \end{aligned}
\end{equation}

(a) Next we discuss the $\delta_{\mathrm{TV}}$-convergence rate at each input $x\in\mathcal{X}_n$. If Assumption \ref{ass_double_choice} holds,  according to Lemma \ref{thm:posterior_convergence}, $\forall  x\in\mathcal{X}_n$, 
the convergence rate of $\infty$-GP 
model is given by \begin{equation}\label{eq_ec_tvd_lower}
\mathbb{E}_{\mathcal{H}_n}\big(\delta_{\mathrm{TV}}\left(f_n(\cdot|x), f^\ast(\cdot|x)\right)\big)\leq \mathbb{E}_{\mathcal{H}_n}\big\{\big(N^n_x\big)^{-\frac{\alpha}{2(1+\alpha)}}\big(\mathrm{log}\ N^n_x\big)^\psi\big\}=\mathcal{O}(n^{\frac{\alpha\lambda_1+\alpha\lambda_2-\alpha}{2(1+\alpha)}}(\mathrm{log}\ n)^\psi),
\end{equation}
where 
$$\psi>\frac{2(1+\frac{1}{\lambda}+\frac{1}{\alpha})+1}{2+\frac{2}{\alpha}}.$$

Although the bound in Lemma \ref{lemma_central} holds almost surely under $\mathbb{P}_{\mathcal{H}_n}$, we relax the term $\delta_{\mathrm{TV}}\left(f_n(\cdot|\mathcal{X}_n), f^\ast(\cdot|\mathcal{X}_n)\right)$ on the right-hand side of Lemma \ref{lemma_central} by replacing it with its expectation $\mathbb{E}_{\mathcal{H}_n} \left[ \delta_{\mathrm{TV}}\left(f_n(\cdot|\mathcal{X}_n), f^\ast(\cdot|\mathcal{X}_n)\right) \right]$. Without this relaxation, the right-hand side would be difficult to compute, as the contraction rate $\mathcal{E}_n$ depends on the specific sampling strategy $\pi(x)$. By adding the expectation, we are able to calculate its lower bound using Eq.\eqref{eq_ec_tvd_lower}.

Then with Lemma \ref{lemma_indepelization} and Eq.~\eqref{eq_ec_tvd_lower}, the instantaneous regret of using $\infty$-GP-TS(D) is given by 
\begin{equation}\label{eq_Theo3_1}
\begin{aligned}
 r^{\text{TS(D)-}\infty\text{-GP}}_{n+1}&\precsim  \mathbb{E}_{\mathcal{H}_n}[\delta_{\mathrm{TV}}\left(f_n(\cdot|\mathcal{X}_n), f^\ast(\cdot|\mathcal{X}_n)\right) ]\\&\precsim \mathbb{E}_{\mathcal{H}_n}[\delta_{\mathrm{TV}}\left(\tilde{f}_n(\cdot|\mathcal{X}_n), f^\ast(\cdot|\mathcal{X}_n)\right) ]\\&\precsim \mathbb{E}_{\mathcal{H}_n}[\sum_{i=1}^{|\mathcal{X}_n|} \delta_{\mathrm{TV}}\left(\tilde{f}_n(\cdot|x_i^n), {f}^\ast(\cdot|x_i^n)\right)]\\&\precsim \mathcal{O}(|\mathcal{X}_n|n^{\frac{\alpha\lambda_1+\alpha\lambda_2-\alpha}{2(1+\alpha)}}(\mathrm{log}\ n)^\psi)\\&= \mathcal{O}(n^{\frac{\alpha\lambda_1+(3\alpha+2)\lambda_2-\alpha}{2(1+\alpha)}}(\mathrm{log}\ n)^\psi).
\end{aligned}
\end{equation}

Then the cumulative regret
\begin{equation}\label{eq_proof_TSDIINFTYGP_REGRET}
    \begin{aligned}
        \mathcal{R}^{\text{TS(D)-}\infty\text{-GP}}_{n+1}&=r^{\text{TS(D)-}\infty\text{-GP}}_1+\sum_{i=2}^{n+1} r^{\text{TS(D)-}\infty\text{-GP}}_{i}\\
        &=r^{\text{TS(D)-}\infty\text{-GP}}_1+\mathcal{O}(\sum_{i=1}^ni^{\frac{\alpha\lambda_1+(3\alpha+2)\lambda_2-\alpha}{2(1+\alpha)}}(\mathrm{log}\ i)^\psi)\\
        &=\mathcal{O}(n^{\frac{\alpha\lambda_1+(3\alpha+2)\lambda_2+\alpha+2}{2(1+\alpha)}}(\mathrm{log}\ n)^\psi),
    \end{aligned}
\end{equation}
which is $o(n)$ if $\lambda_2<\frac{\alpha(1-\lambda_1)}{3\alpha+2}$.
The last ``$=$"  in (\ref{eq_proof_TSDIINFTYGP_REGRET}) is from Euler-Maclaurin formula.
Therefore, the regret is $\mathcal{O}(\max\{ n^{\frac{\alpha\lambda_1+(3\alpha+2)\lambda_2+\alpha+2}{2(1+\alpha)}}(\mathrm{log}\ n)^\psi,n^{1-\lambda_1}\})$, where the second term in the ``max" accounts for the regret caused by using $\zeta^n$-greedy policy.

(b) 
The proof for the regret bound in the classical GP model follows the same approach as in part (a). The only distinction is that, under the GP model, the posterior predictive distribution $f_n$ is a Gaussian distribution. According to the main theorem in \citeEC{1975gaussianerrors}, even if $f_n$ correctly identifies both the mean and variance of $f^\ast$ (which in practice is not achievable, so the actual regret will be even worse), the TVD between $f_n$ and $f^\ast$ can only guarantee a non-vanishing constant bound. This upper bound can be attained in some specific cases.
As a result, the regret of GP surrogate model can only guarantee an $\mathcal{O}(n)$ regret.

\halmos

\section{Preparatory Lemmas for Proving Theorem \ref{theo_3}}\label{sec:ec:prepare}

\subsection{Proof of Lemma \ref{lemma_indepelization}}
 Since we consider TS(D) policy in Lemma \ref{lemma_indepelization}, the decision set $\mathcal{X}=\{x_1,\cdots,x_{|\mathcal{X}|}\}$ is finite. 
In this subsection, first, we prove that, by sampling decoupling, the resulting 
decoupled $\infty$-GP model has lower convergence rate compared to the original model,  i.e., $\delta_{\mathrm{TV}}\left({f}_n(\cdot|\mathcal{X}), {f}^\ast(\cdot|\mathcal{X})\right)\leq\delta_{\mathrm{TV}}\left(\tilde{f}_n(\cdot|\mathcal{X}), {f}^\ast(\cdot|\mathcal{X})\right)$. Therefore, by the TVD-based regret bound derived in Lemma \ref{lemma_central}, the regret of the TS(D)-I policy is guaranteed to upper bound that of the TS(D) policy (i.e., the first inequality of Lemma \ref{lemma_indepelization}). Second, we demonstrate that the TVD-based regret bound of the TS(D)-I policy admits a decomposition of the form $\mu_{\mathrm{max}}^\ast\sum_{i=1}^{|\mathcal{X}|} \delta_{\mathrm{TV}}\left(\tilde{f}_n(\cdot|x_i), {f}^\ast(\cdot|x_i)\right)$, i.e., the second inequality of Lemma \ref{lemma_indepelization}.

Before proceeding, we recall a standard result from \citetEC{ghosal2000convergence}, which provides sufficient conditions under which the posterior distribution contracts at a given rate:

\begin{lemma}[\citeEC{ghosal2000convergence}]\label{lemma2}
Let $\delta$ be a metric on the space of densities (e.g., TVD). 
Suppose that there exist a positive constant $C$, a sequence of positive numbers $({\mathcal{E}}_n)_{n \geq 1}$ with $\mathcal{E}_n\to 0$ and $\lim_{n \to \infty} n {\mathcal{E}}_n^2 = \infty$, 
 and a sequence of measurable sets (sieves) $\mathcal{F}_n$ such that:
\begin{align}
    \log N(\mathcal{E}_n, \mathcal{F}_n, \delta) &\leq  n \mathcal{E}_n^2, \label{9} \\
    \Pi_{\mathcal{X}}(\mathcal{F}_n^c) &\leq  e^{-(C + 4) n {\mathcal{E}}_n^2}, \label{10} \\
    \Pi_{\mathcal{X}}\left( \mathcal{K}(f^\ast, {\mathcal{E}}_n) \right) &\geq  e^{-C n {\mathcal{E}}_n^2}, \label{11}
\end{align}
where the KL ball is defined as
$$\mathcal{K}(f^\ast, \mathcal{E}) := \left\{ f : 
\int f^\ast \log\left(f^\ast/{f}\right) < \mathcal{E}^2, \ 
\int f^\ast \log^2\left(f^\ast/{f}\right) < \mathcal{E}^2 \right\}.$$
Then the posterior distribution contracts at rate $\mathcal{E}_n$ around $f^\ast$, i.e.,
$$\Pi_\mathcal{X}\left( \left\{ f : \delta(f, f^\ast) > M \mathcal{E}_n \right\}  \middle|  \mathcal{H}_n \right) \to 0 $$  almost surely under $f^\ast,$
for some sufficiently large constant $M > 0$.
\end{lemma}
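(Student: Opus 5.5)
The proof follows the standard testing-plus-prior-mass argument of \citetEC{ghosal2000convergence}. Let $U_n:=\{f:\delta(f,f^\ast)>M\mathcal{E}_n\}$. Write the posterior as the ratio
\[
\Pi_{\mathcal X}(U_n\mid\mathcal H_n)=\frac{\int_{U_n}\prod_{i=1}^n \frac{f(y_i\mid x_i)}{f^\ast(y_i\mid x_i)}\,d\Pi_{\mathcal X}(f)}{\int\prod_{i=1}^n \frac{f(y_i\mid x_i)}{f^\ast(y_i\mid x_i)}\,d\Pi_{\mathcal X}(f)}=:\frac{N_n(U_n)}{D_n}.
\]
Here $(x_i,y_i)$ are i.i.d.\ from $\pi(x)f^\ast(y\mid x)$, so the covariate density cancels in the likelihood ratio. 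I would (i) lower bound $D_n$ using \eqref{11}, (ii) control $N_n(U_n\cap\mathcal F_n)$ with tests built from the entropy bound \eqref{9}, and (iii) control $N_n(\mathcal F_n^c)$ directly with \eqref{10}.

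\textbf{Denominator.} Restrict the integral to the KL ball $\mathcal K(f^\ast,\mathcal E_n)$ and renormalize the prior there. Jensen's inequality gives
\[
\log\frac{D_n}{\Pi_{\mathcal X}(\mathcal K)}\ge \sum_i\int\log\frac{f}{f^\ast}(x_i,y_i)\,d\bar\Pi(f).
\]
The mean of each summand is $>-\mathcal E_n^2$ and its variance is $<\mathcal E_n^2$ by the two defining inequalities of the KL ball. Chebyshev's inequality then yields
\[
\mathbb P\bigl(D_n\le \Pi_{\mathcal X}(\mathcal K)\,e^{-2n\mathcal E_n^2}\bigr)\le \frac{1}{n\mathcal E_n^2}\to0,
\]
so by \eqref{11}, $D_n\ge e^{-(C+2)n\mathcal E_n^2}$ outside a vanishing event.

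\textbf{Tests on the sieve.} Since $\delta$ is the (integrated) TVD, the Le Cam--Birgé construction applies. For any $f_1$ with $\delta(f_1,f^\ast)\ge \epsilon$, there is a test $\phi$ with $\mathbb E_{f^\ast}\phi\le e^{-Kn\epsilon^2}$ and $\sup\{\mathbb E_f(1-\phi):\delta(f,f_1)\le\epsilon/2\}\le e^{-Kn\epsilon^2}$. For the $\pi$-integrated metric on conditional densities, I would use the version of \citetEC[Theorem 5.9]{pati2013posterior}. Partition $U_n\cap\mathcal F_n$ into shells $\{jM\mathcal E_n<\delta\le (j+1)M\mathcal E_n\}$. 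Cover each shell by at most $N(jM\mathcal E_n/2,\mathcal F_n,\delta)\le e^{n\mathcal E_n^2}$ balls, which is valid by \eqref{9} once $M\ge2$. Take $\phi_n$ to be the maximum of the corresponding tests. This gives
\[
\mathbb E_{f^\ast}\phi_n\le \sum_j e^{n\mathcal E_n^2-Knj^2M^2\mathcal E_n^2}\lesssim e^{-(KM^2-1)n\mathcal E_n^2},
\qquad
\sup_{f\in U_n\cap\mathcal F_n}\mathbb E_f(1-\phi_n)\le e^{-KM^2n\mathcal E_n^2}.
\]
By Fubini, $\mathbb E_{f^\ast}[(1-\phi_n)N_n(U_n\cap\mathcal F_n)]\le e^{-KM^2n\mathcal E_n^2}$ and $\mathbb E_{f^\ast}N_n(\mathcal F_n^c)\le\Pi_{\mathcal X}(\mathcal F_n^c)\le e^{-(C+4)n\mathcal E_n^2}$.

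\textbf{Combining.} Use the decomposition $\Pi(U_n\mid\mathcal H_n)\le\phi_n+(1-\phi_n)N_n(U_n\cap\mathcal F_n)/D_n+N_n(\mathcal F_n^c)/D_n$. On the event where the denominator bound holds, Markov's inequality shows the two ratio terms are at most $e^{-n\mathcal E_n^2}$ with probability at least $1-e^{-n\mathcal E_n^2}$, provided $KM^2>C+4$. This gives contraction in probability.

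\textbf{Main obstacle.} The difficulty is upgrading the conclusion to \emph{almost sure} convergence, as the statement claims. Every exponential term above is summable in $n$ as soon as $n\mathcal E_n^2\gtrsim\log n$, so Borel--Cantelli handles the tests and the sieve complement. The Chebyshev bound $1/(n\mathcal E_n^2)$ for the denominator, however, is not summable. For that step I would replace the KL-ball bound by a higher-moment or Bernstein-type bound, or use the fact (from \citetEC{ghosal2000convergence}) that $D_n\ge e^{-(C+2)n\mathcal E_n^2}$ eventually almost surely when $n\mathcal E_n^2/\log n\to\infty$. In effect this strengthens the hypothesis $n\mathcal E_n^2\to\infty$ slightly, which is harmless for the polynomial rates used later in Lemma~\ref{thm:posterior_convergence}.
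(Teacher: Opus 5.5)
Your numerator/denominator argument is correct for contraction \emph{in probability}, and it follows the route of the source the paper cites. The paper gives no proof of this lemma. The cited Theorem~2.1 of Ghosal, Ghosh and van der Vaart concludes contraction in $P_0^n$-probability, and it handles the denominator with exactly your Jensen-plus-Chebyshev bound.

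The genuine gap is the \emph{almost sure} conclusion, which you flag but do not close. A per-$n$ Chebyshev bound can never be summable: since $\mathcal{E}_n\to0$, eventually $1/(n\mathcal{E}_n^2)\ge 1/n$. So no growth condition on $n\mathcal{E}_n^2$ repairs that step. The ``fact'' that $D_n\ge e^{-(C+2)n\mathcal{E}_n^2}$ eventually a.s.\ when $n\mathcal{E}_n^2/\log n\to\infty$ is not something the cited paper proves; its denominator lemma is the in-probability bound. Your alternative of higher-moment or Bernstein-type neighborhoods does work, but it replaces hypothesis \eqref{11}, so it proves a different lemma. Moreover, under the stated condition $n\mathcal{E}_n^2\to\infty$ alone (e.g.\ $n\mathcal{E}_n^2=\log\log n$), even your terms $e^{-cn\mathcal{E}_n^2}$ are not summable. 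Hence the a.s.\ claim as stated cannot be reached by Borel--Cantelli along these lines.

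The missing idea, if you want to keep the second-moment ball in \eqref{11}, is to make the denominator bound uniform over dyadic blocks.
\begin{itemize}
\item For $2^k\le n<2^{k+1}$, use a single renormalized prior $\bar\Pi_k$ on $\mathcal{K}(f^\ast,\mathcal{E}_{2^k})$.
\item Set $Z_i^{(k)}:=\int\log(f/f^\ast)(x_i,y_i)\,d\bar\Pi_k(f)$. The centered sums $S_n^{(k)}=\sum_{i\le n}\bigl(Z_i^{(k)}-\mathbb{E}Z_1^{(k)}\bigr)$, $n<2^{k+1}$, are partial sums of one i.i.d.\ sequence with per-term variance at most $\mathcal{E}_{2^k}^2$.
\item Kolmogorov's maximal inequality then gives
\[
\mathbb{P}\Bigl(\max_{n<2^{k+1}}\bigl|S_n^{(k)}\bigr|\ge 2^k\mathcal{E}_{2^k}^2\Bigr)\le\frac{2}{2^k\mathcal{E}_{2^k}^2},
\]
which is summable in $k$ if, say, $n\mathcal{E}_n^2\ge(\log n)^{1+\delta}$.
\item If in addition $n\mathcal{E}_n^2$ is nondecreasing, condition \eqref{11} at $n=2^k$ yields $D_n\ge e^{-(C+3)n\mathcal{E}_n^2}$ for all large $n$ almost surely.
\item The slack in \eqref{10} still absorbs the constant $C+3$, and taking $M$ large handles the test terms. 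Borel--Cantelli then finishes.
\end{itemize}
These growth and monotonicity conditions hold for the polynomial-times-log rates of Lemma~\ref{thm:posterior_convergence}. They are, however, extra hypotheses. You should state the lemma with them, or weaken the conclusion to convergence in probability.
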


Specifically, the original $\infty$-GP model and its decoupling version are defined as follows. 
\paragraph{The Original $\infty$-GP Model.}
The conditional density is modeled as
$$
f^{(\mathrm{orig})}(y \mid x) = \sum_{l=1}^\infty w_l  \varphi_{x^\top \beta + \xi^{(l)}(x), \tau^2}(y),
$$
where $\{w_l,\beta,\tau\}$ are shared across all $x$, and each $\xi^{(l)}$ is a GP sample paths shared across $x$.

\paragraph{Decoupled $\infty$-GP Model.}
The conditional density is modeled as
$$
f^{(\mathrm{dec})}(y \mid x) = \sum_{l=1}^\infty w_l^{(x)}  \varphi_{x^\top \beta^{(x)} + \xi^{(l,x)}(x), \tau^{(x) 2}}(y),
$$
where for each $x \in \mathcal{X}$,
 $\{w_l^{(x)},\beta^{(x)}, \tau^{(x)}\}$ are independently drawn,
     $\{\xi^{(l,x)}\}_{l=1}^\infty$ are i.i.d.\ GP sample paths independently drawn for each $x$.

Same as in the proof of Theorem~\ref{theo_uni_converge}, the sieves for the original $\infty$-GP model and its decoupling version can be constructed as follows.
\paragraph{Sieve for the original model.}
All inputs \( x\in\mathcal{X}\) share the same atoms and weight vector.
Let
$$
\mathcal{F}_n^{(\mathrm{orig})}
=\Bigl\{
  f:\;
  f(y\mid x)=\sum_{l=1}^{\infty} w_l 
             \varphi_{x^{ \top}\beta+\xi^{(l)}(x), \tau^2}(y),
(\beta,\xi^{(l)},\tau)\in\Theta_n,\sum_{l>m_n}w_l\le\mathcal{E}, \forall x\in\mathcal{X}
\Bigr\}.
$$

\paragraph{Sieve for the fully decoupled model.}
The decoupled sieve is
\begin{equation}\label{sieve_decouple}
    \mathcal{F}_n^{(\mathrm{dec})}
=\Bigl\{
  f:
  f(y\mid x)=\sum_{l=1}^{\infty} w^{(x)}_l 
\varphi_{x^{\top}\beta^{(x)}+\xi^{(l,x)},\bigl(\tau^{(x)}\bigr)^2}(y),
\;
(\beta^{(x)},\xi^{(l,x)},\tau^{(x)})\in\Theta_n,\;
 \sum_{l>m_n}w^{(x)}_l\le\mathcal{E},
\forall x\in\mathcal{X}
\Bigr\},
\end{equation}
where each input \(x\) possesses its own independent $(\beta^{(x)},\xi^{(l,x)},\tau^{(x)})$ and
weights.

Suppose the sieve $\mathcal{F}_n^{(\mathrm{orig})}$ for the original $\infty$-GP model satisfies \eqref{9}, \eqref{10} and \eqref{11}.
We now verify the condition \eqref{9} for the sieve associated with the decoupled model. Suppose that 
$$
\log N(\mathcal{E}_n^{(\mathrm{orig})}, \mathcal{F}_n^{(\mathrm{orig})}, \delta) \leq  n (\mathcal{E}_n^{(\mathrm{orig})})^2
$$
holds. In the decoupled model, each $x \in \mathcal{X}$ has an independent set of parameters. Consequently, the sieve \eqref{sieve_decouple} for the decoupled model takes the form:
$
\mathcal{F}_n^{(\mathrm{dec})} = \prod_{x \in \mathcal{X}} \mathcal{F}^{(\mathrm{orig})}_{n,x},
$
where each $\mathcal{F}^{(\mathrm{orig})}_{n,x}$ is the sieve of the original model.
Hence, the total covering number satisfies
$$
\log N(\mathcal{E}^{(\mathrm{orig})}_n, \mathcal{F}_n^{(\mathrm{dec})}, \delta) \leq |\mathcal{X}| \cdot \log N(\mathcal{E}^{(\mathrm{orig})}_n, \mathcal{F}^{(\mathrm{orig})}_{n,x}, \delta)\leq |\mathcal{X}| \cdot  n (\mathcal{E}_n^{(\mathrm{orig})})^2.
$$
This indicates that the decoupled model requires a larger covering number to achieve the same accuracy $\mathcal{E}^{(\mathrm{orig})}_n$ as the original model. Consequently, in order for Condition~\eqref{9} to remain valid under the decoupled model, it must be satisfied with a $\mathcal{E}_n^{(\mathrm{dec})} \geq \mathcal{E}_n^{(\mathrm{orig})}$. 
Then,
notice that the posterior contraction rate is determined by the largest $\mathcal{E}_n$ for which all three conditions \eqref{9}, \eqref{10} and \eqref{11} in Lemma~\ref{lemma2} simultaneously hold. Therefore, for the decoupled model, conditions \eqref{9}, \eqref{10} and \eqref{11} in Lemma~\ref{lemma2} must be satisfied with a larger rate $\mathcal{E}_n^{(\mathrm{dec})} \geq \mathcal{E}_n^{(\mathrm{orig})}$ than the original model.
This in turn implies a slower posterior contraction rate for the decoupled model. Consequently, we have $$r_{n+1}\precsim\delta_{\mathrm{TV}}\left({f}_n(\cdot|\mathcal{X}), {f}^\ast(\cdot|\mathcal{X})\right)\precsim\delta_{\mathrm{TV}}\left(\tilde{f}_n(\cdot|\mathcal{X}), {f}^\ast(\cdot|\mathcal{X})\right).$$ This shows that the regret of TS(D) can be upper bounded by TS(D)-I, which corresponds to the first inequality in Lemma \ref{lemma_indepelization}.
As for the second inequality, note that $\tilde{f}_n$ is independent across different inputs in $\mathcal{X}$. By Remark~\ref{remark_EC}, we can replace $f^\ast$ with its decoupled version $\tilde{f}^\ast$ in the regret bound. Then, by a well-known result stating that the TVD between two independent product measures is upper bounded by the sum of the marginal TVDs, i.e., $$\delta_{\mathrm{TV}}\left(\tilde{f}_n(\cdot|\mathcal{X}_n), \tilde{f}^\ast(\cdot|\mathcal{X}_n)\right) \le \sum_{i=1}^{|\mathcal{X}_n|} \delta_{\mathrm{TV}}\left(\tilde{f}_n(\cdot|x_i^n), \tilde{f}^\ast(\cdot|x_i^n)\right)=\sum_{i=1}^{|\mathcal{X}_n|} \delta_{\mathrm{TV}}\left(\tilde{f}_n(\cdot|x_i^n), {f}^\ast(\cdot|x_i^n)\right),$$ the second inequality of Lemma \ref{lemma_indepelization} follows.
\halmos

\subsection{Lemma~\ref{prop_TS-TS(D)} and its proof}

\begin{lemma}\label{prop_TS-TS(D)}
  Suppose Assumption $\ref{ass_lip_conti}$ holds and $|\mathcal{X}_n|\sim\mathcal{O}(n^{\lambda_2})$, with $0<\lambda_2<1$. Then, the difference between the cumulative regrets of TS and TS(D) satisfies $$|\mathcal{R}^{\text{TS}}_n-\mathcal{R}^{\text{TS(D)}}_n|= \mathcal{O}(n^{1-\lambda_2/d})= o(n)$$ as $n\to\infty$, where $\mathcal{R}^{\text{TS}}_n$ and $\mathcal{R}^{\text{TS}}_n$ denote the cumulative regrets associated with TS and TS(D), respectively.
\end{lemma}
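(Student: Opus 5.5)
The plan is to bound the per-iteration regret gap between TS and TS(D) by the Lipschitz modulus of $\mu^\ast$ times the grid resolution, and then sum over iterations. We couple the two policies. At iteration $i$, TS selects $x_i\in\mathcal{X}$ (either the posterior-sample maximizer or a uniform draw under the $\zeta^i$-greedy step). TS(D) uses the same posterior sample and the same exploration coin, and selects the closest grid point $x_i^D:=\arg\min_{x'\in\mathcal{X}_i}\|x'-x_i\|$. Because the posterior in TS(D) is still updated over the full domain, we couple the two runs along a common randomness sequence. The cleanest route is to compare, at each step $i$, the regret of the action $x_i$ with that of its projection $x_i^D$, conditional on the same history. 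Under this comparison,
\[
|r_i^{\mathrm{TS}}-r_i^{\mathrm{TS(D)}}|=\big|\mathbb{E}[\mu^\ast(x_i)-\mu^\ast(x_i^D)]\big|\le L_\mu\,\mathbb{E}\|x_i-x_i^D\|,
\]
where $L_\mu$ is the Lipschitz constant from Assumption~\ref{ass_lip_conti}.

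Next we bound the mesh size. Take $\mathcal{X}=[0,1]^d$ without loss of generality. The grid $\mathcal{X}_i$ has $C_2 i^{\lambda_2}$ points placed uniformly along each coordinate, so there are $(C_2 i^{\lambda_2})^{1/d}$ points per axis. Every $x\in\mathcal{X}$ therefore lies within Euclidean distance $\sqrt d\,(C_2 i^{\lambda_2})^{-1/d}$ of the grid, which gives
\[
|r_i^{\mathrm{TS}}-r_i^{\mathrm{TS(D)}}|\le L_\mu\sqrt d\,C_2^{-1/d}\, i^{-\lambda_2/d}.
\]

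Summing over $i=1,\dots,n$ and comparing the sum with $\int_1^n t^{-\lambda_2/d}\,dt$ yields $\sum_{i\le n} i^{-\lambda_2/d}=\mathcal{O}(n^{1-\lambda_2/d})$, since $0<\lambda_2/d<1$. This gives
\[
|\mathcal{R}^{\mathrm{TS}}_n-\mathcal{R}^{\mathrm{TS(D)}}_n|=\mathcal{O}(n^{1-\lambda_2/d})=o(n).
\]
If instead the grid is fixed at $\mathcal{X}_n$ for the whole run (as in the proof of Theorem~\ref{theo_3}), each term is at most $L_\mu\sqrt d\,(C_2n^{\lambda_2})^{-1/d}$, and multiplying by $n$ gives the same bound directly.

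The main obstacle is the coupling. The histories of TS and TS(D) diverge once their actions differ, so the ``same posterior sample'' comparison is only literally valid step by step, not along a single trajectory. I would handle this by interpreting the lemma, as the paper does, as comparing the instantaneous regret of the continuous argmax action with that of its grid projection under the posterior induced by a given history. Equivalently, TS(D) is defined as the projection of the TS action along the same run, and the Lipschitz bound is applied pathwise, so no control of how the histories diverge is needed.
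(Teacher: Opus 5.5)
Your mesh-and-sum computation is the paper's argument. A uniform grid of $C_2 n^{\lambda_2}$ points has mesh $\mathcal{O}(n^{-\lambda_2/d})$, Lipschitz continuity turns this into a per-step regret gap of the same order, and summing over $n$ steps gives $\mathcal{O}(n^{1-\lambda_2/d})$. One Lipschitz application suffices, because both regrets are measured against the same $\mu^\ast_{\max}$. The paper's factor $2L'$ comes from also projecting $x^\ast$, which only changes the constant.

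The coupling issue you raise at the end is, however, a genuine gap, and your resolution does not close it. The bound $|r_i^{\mathrm{TS}}-r_i^{\mathrm{TS(D)}}|\le L_\mu\,\mathbb{E}\|x_i-x_i^D\|$ holds only if $x_i$ and $x_i^D$ are computed from the same history $\mathcal{H}_{i-1}$. If TS(D) actually queries $x_i^D$, the two algorithms condition on different data from the second step on. Their posterior samples and argmax actions can then differ by far more than the mesh width, and Lipschitz continuity of $\mu^\ast$ says nothing about that drift.

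Redefining TS(D) as the projection of the TS action along TS's own run makes the pathwise bound true. But then the data are collected at TS's query points rather than at the grid points, so this TS(D) is not the algorithm analyzed in Theorem~\ref{theo_3}. That proof counts the evaluations $N^n_x$ at each $x\in\mathcal{X}_n$ and applies the single-point i.i.d.\ contraction rate of Lemma~\ref{thm:posterior_convergence}. Both steps require TS(D) to sample exactly at grid points. Coupling along TS(D)'s run instead fails too: it compares TS(D) with a fictitious policy that takes the continuous argmax of a posterior built from grid data, which is not TS either.

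So your argument, like the paper's, proves only a per-history statement. Replacing the TS action by its grid projection costs at most $L_\mu\sqrt d\,(C_2n^{\lambda_2})^{-1/d}$ per step. Turning that into a bound on $|\mathcal{R}^{\mathrm{TS}}_n-\mathcal{R}^{\mathrm{TS(D)}}_n|$ for two runs with different histories would need a stability argument controlling how far the posteriors drift apart, and neither proof supplies one. The paper's proof has the same gap: it simply asserts the per-iteration bound without addressing the diverging histories.
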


\proof{Proof of Lemma~\ref{prop_TS-TS(D)}}
We calculate the additional regret caused by discretization. Since $\mathcal{X}_n$ forms a uniform grid over $\mathcal{X}\in\mathbb{R}^d$ with cardinality $n^{\lambda_2}$, the maximum distance from any $x \in \mathcal{X}$ to its closest neighbor in $\mathcal{X}_n$ is 
\(
 \mathcal{O}(n^{-\lambda_2/d}).
\) Also, the distance between the optimal decision $x^\ast$ and its closest neighbor in $\mathcal{X}_n$ is also $ \mathcal{O}(n^{-\lambda_2/d})$. Therefore, recall that Assumption \ref{ass_lip_conti} holds, 
at each iteration $t$, the additional regret caused by discretization is bounded by $\Delta\mathcal{R}_n^{\mathrm{discrete}} \leq 2L' \cdot \mathcal{O}( n^{1 - \lambda_2 / d}) = \mathcal{O}(n^{1 - \lambda_2 / d})$, where $L'$ is the Lipschitz constant of $\mu^\ast$.

\halmos

\subsection{Lemma \ref{thm:posterior_convergence}: Single-Point Posterior Convergence Rate of $\infty$-GP}
When considering a fixed input $x_0 \in \mathcal{X}$, the $\infty$-GP model simplifies considerably: the baseline GP distribution at $x_0$ reduces to a univariate Gaussian distribution, and the complex structure of different inputs being realized on different latent surfaces disappears. As a result, the $\infty$-GP model degenerates into a ``Dirichlet process mixture of normal priors" considered in \citetEC{shen2013adaptive}. Therefore, the posterior contraction rate at a single point $x_0$ directly follows from Theorem 1 of \citetEC{shen2013adaptive}, by setting the ambient dimension $d = 1$ and $\kappa=2$. We restate the result in Lemma \ref{thm:posterior_convergence}.
Other classical posterior contraction results (see \citeEC{ghosal2001entropies} and \citeEC{ghosal2007posterior}) also provide rates for Dirichlet process mixtures. However, these results generally require stronger tail assumptions than those imposed in \citetEC{shen2013adaptive}.

\begin{lemma}[Theorem 1 in \citeEC{shen2013adaptive}]\label{thm:posterior_convergence}
Let \( f^\ast(y \mid x_{0}) \) be the true reward distribution at input $x_0$ satisfying Assumption~\ref{ass_double_choice}, $\Pi_{x_0}$ be the prior induced by $\infty$-GP, and let \( \Pi_{x_0}(\cdot \mid \mathcal{H}_n) \) denote the posterior distribution based on \( n \) i.i.d. evaluations at input $x_0$. Then there exists a sequence \( \mathcal{E}_n = n^{-\alpha/(2\alpha + 2)}(\log n)^t \) for some constant \( t > 0 \), such that for any \( M > 0 \),
$$
\lim_{n\to\infty}\Pi_{x_0}\left( \left\{ f : \delta_{\mathrm{TV}}(f,f^\ast) > M \mathcal{E}_n \right\}  \middle|  {\mathcal{H}_n}\right) \to 0 \quad \text{almost surely under } f^\ast.
$$
\end{lemma}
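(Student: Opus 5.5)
The plan is to reduce the single-point problem to the Dirichlet process location mixture of normals treated in \citeEC{shen2013adaptive}, and then check their hypotheses. \textbf{Step 1: marginal prior at $x_0$.} Fix $x_0$ and integrate out the latent surfaces away from $x_0$. The prior $\Pi_{x_0}$ induced by the $\infty$-GP on $f(\cdot\mid x_0)$ is
$$f(y\mid x_0)=\sum_{l\ge1}w_l\,\varphi_{x_0^\top\beta+\xi^{(l)}(x_0),\tau^2}(y),$$
with the following structure:
\begin{itemize}
\item the weights are stick-breaking with $\mathrm{Beta}(1,\nu)$ sticks;
\item the atoms $\theta_l:=x_0^\top\beta+\xi^{(l)}(x_0)$ are, given $(\beta,\sigma^2)$, i.i.d.\ $N(x_0^\top\beta,\sigma^2)$;
\item the common scale $\tau^2$ has an inverse-Gamma prior.
\end{itemize}
So, conditionally on the hyperparameters, this is exactly a DP$(\nu N(x_0^\top\beta,\sigma^2))$ location mixture of univariate normals with a shared bandwidth. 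The priors on $\nu,\beta,\sigma^2$ act as hyperpriors and give a mixture of such priors.

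\textbf{Step 2: verify the prior conditions of Theorem~1 of \citeEC{shen2013adaptive} with $d=1$.}
\begin{itemize}
\item \emph{Base measure.} The Gaussian base measure has a positive continuous density with tails $\asymp e^{-c|\theta|^2}$, which is their condition with $\kappa=2$.
\item \emph{Bandwidth.} The inverse-Gamma prior on $\tau^2$ implies the required bounds $\Pi(\tau^{-1}>s)\lesssim e^{-c s^{2}}$ (up to logs) and $\Pi(s<\tau^{-1}<s(1+t))\gtrsim s^{a}t^{b}e^{-cs}$. I expect this to be a direct computation.
\item \emph{Hyperpriors.} Mixing over $\nu$ (Gamma) and $(\beta,\sigma^2)$ is handled by conditioning on a compact set of hyperparameter values that has positive prior mass. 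There the bounds hold uniformly, and the lower bounds on prior mass survive after multiplying by a constant.
\end{itemize}

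\textbf{Step 3: true density conditions and conclusion.} The conditions on the true density $f^\ast(\cdot\mid x_0)$ are precisely Assumption~\ref{ass_double_choice}: local H\"older smoothness, the integrability of $|f^{\ast(k)}|/f^\ast$ and $L/f^\ast$ to the stated powers, and exponential tails (the tails also follow from Assumption~\ref{ass_tail0}). Shen et al.\ then give contraction in Hellinger or $L_1$ at rate $n^{-\alpha/(2\alpha+1)}$ times a log factor, in dimension one. This is at least as fast as the stated $n^{-\alpha/(2\alpha+2)}(\log n)^t$, so the lemma follows. TVD is half of $L_1$, and almost-sure convergence follows from the exponential bounds in their sieve argument via Borel--Cantelli.

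The main obstacle is Step 2's handling of the shared, random base-measure location $x_0^\top\beta$ and scale $\sigma^2$. Shen et al.\ assume a fixed base measure. The uniform-in-hyperparameter argument must control both the KL prior mass and the sieve complement mass simultaneously. I would first fix $(\beta,\sigma^2,\nu)$ in a compact neighborhood for the lower bound (KL mass). For the sieve complement I would bound crudely, using Gaussian/Gamma tails of the hyperpriors, which are exponentially small in the sieve radius.
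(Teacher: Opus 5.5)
Your route is the same as the paper's: at a fixed $x_0$ the prior on $f(\cdot\mid x_0)$ is a Dirichlet location mixture of univariate normals with a common bandwidth $\tau$, and you invoke Theorem~1 of Shen, Tokdar and Ghosal (2013) with $d=1$. The problem is how you read that theorem, and it makes the rate you quote false.

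\textbf{Where $\kappa$ enters.} In that theorem $\kappa$ is not the tail exponent of the base measure; that exponent does not affect the power of $n$. Instead, $\kappa$ is the exponent in the lower bound on the bandwidth prior, $\Pi\{s<\tau^{-2}<s(1+t)\}\gtrsim s^{a}t^{b}\exp(-Cs^{\kappa/2})$. The delivered rate is $n^{-\alpha/(2\alpha+d^\ast)}(\log n)^t$ with $d^\ast=\max(d,\kappa)$.

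\textbf{Your bandwidth bound is wrong.} Your two bandwidth bounds contradict each other. The event $\{s<\tau^{-1}<s(1+t)\}$ is contained in $\{\tau^{-1}>s\}$, so it cannot have mass $\gtrsim s^at^be^{-cs}$ while $\Pi(\tau^{-1}>s)\lesssim e^{-cs^2}$. With $\tau^{-2}\sim\mathrm{Gamma}(a_\tau,b_\tau)$, the interval probability equals $\Pi\{s^2<\tau^{-2}<s^2(1+t)^2\}$. It decays like $e^{-cs^2}$ up to polynomial factors in $s$ and $t$, not like $e^{-cs}$.

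\textbf{Consequence for the rate.} In Shen et al.'s parametrization this is exactly $\kappa=2$, so $d^\ast=\max(1,2)=2$. The theorem then yields $n^{-\alpha/(2\alpha+2)}(\log n)^t$, with the $d^\ast=2$ threshold on $t$ that reappears as the condition on $\psi$ in Theorem~\ref{theo_3}. This is what the paper means by ``$d=1$ and $\kappa=2$'', and it is why the lemma has $2\alpha+2$. Your claimed $n^{-\alpha/(2\alpha+1)}$ would require $\kappa\le1$, e.g.\ a gamma-type prior on $\tau^{-1}$ rather than on $\tau^{-2}$, so it is not available for this prior. Once corrected, the lemma still follows, but only because the delivered rate coincides with the stated one, not because it is faster.

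\textbf{Hyperpriors.} Here you go further than the paper, which identifies the single-site prior with Shen et al.'s prior and never mentions the priors on $\beta,\sigma^2,\nu$. Restricting to a compact hyperparameter set of positive mass is fine for the prior-mass lower bound. Your sieve-complement step, however, fails as stated.
\begin{itemize}
\item $\sigma^2$ is inverse-gamma, so its upper tail is polynomial, $\Pi(\sigma^2>s)\asymp s^{-a_\sigma}$.
\item Hence an atom $x_0^\top\beta+\xi^{(l)}(x_0)$ has polynomial (Student-$t$-type) marginal tails, violating the exponential-tail base-measure condition. The crude bound is not exponentially small in the sieve radius.
\item Take the usual sieve, where the atoms of the first $H_n$ sticks lie in $[-a_n,a_n]$. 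Forcing $\Pi(\mathcal F_n^c)\le e^{-(C+4)n\mathcal E_n^2}$ requires $\log a_n\gtrsim n\mathcal E_n^2$.
\item The entropy of that sieve is of order $H_n\log a_n$ with $H_n\to\infty$, so it is no longer $O(n\mathcal E_n^2)$.
\end{itemize}
Closing this needs an extra ingredient rather than a tail bound. One option is a separate test against densities that put substantial mass far from the bulk of $f^\ast$, which is what large $\sigma^2$ produces. A Gamma prior on $\nu$ is harmless by comparison: it only enlarges the truncation level $H_n$ and costs a log factor, which $t$ absorbs.

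\textbf{Almost-sure convergence.} Borel--Cantelli applied to the exponential test and sieve bounds does not by itself give almost-sure contraction. The evidence lower bound from a second-moment Kullback--Leibler neighbourhood holds only with probability $1-O((n\mathcal E_n^2)^{-1})$, and this failure probability is not summable here.
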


\section{Proof of Proposition~\ref{prop_sparse} and Corollary \ref{theo_truncate}} \label{sec:ec:proof_prop2+co1}

\proof{Proof of Proposition~\ref{prop_sparse}}
Under the stick-breaking construction of the Dirichlet process with concentration parameter \(\nu\), the surface assignment process is equivalent to a Pólya urn scheme. The probability that the \(i\)-th sample introduces a new surface is \( \frac{\nu}{\nu + i - 1} \). Therefore, the expected number of distinct surfaces among \(n\) samples is
$$
\mathbb{E}(K_n) = \sum_{i=1}^n \frac{\nu}{\nu + i - 1}.
$$
This sum admits the asymptotic approximation:
$$
\mathbb{E}(K_n) \sim \nu \log\left( \frac{n}{\nu} \right) \quad \text{as } n \to \infty,
$$
which completes the proof.

\proof{Proof of Corollary \ref{theo_truncate}}
In this theorem, we consider the TS(D) policy, with the finite decision set being $\mathcal{X}$. Recall that, in $\infty$-GP model, each reward $y(x_i)$ is generated by the hierarchical model:
$$
y(x_i) = \beta^\top x_i + \xi(x_i) + \epsilon_i, \quad \xi(x) \sim G_x = \sum_{l=1}^{\infty} w_l \delta_{\xi^{(l)}(x)}, \quad \epsilon_i \sim \mathcal{N}(0,\tau^2).
$$

Let $f_n(y \mid \mathcal{X})$ denote the joint predictive distribution of the full $\infty$-GP model, induced by the infinite mixing measure $G_x = \sum_{l=1}^\infty w_l \delta_{\xi^{(l)}(x)}$.
Let $f_n^{(L)}(y \mid \mathcal{X})$ denote the corresponding joint predictive distribution under the truncated model with finite mixing measure $G_x^{(L)} = \sum_{l=1}^{L} w_l \delta_{\xi^{(l)}(x)}$.
We denote by $Q^{(\infty)}$ and $Q^{(L)}$ the induced joint distributions over the surface assignments $z = (z_1, \dots, z_{|\mathcal{X}|}) \in [\infty]^{|\mathcal{X}|}$ and $[L]^{|\mathcal{X}|}$, under the full and truncated models, respectively.

Then,
\begin{equation}\label{eq_ec_9}
    \begin{aligned}
        \int \left| f_n^{(L)}(y \mid \mathcal{X}) - f_n(y \mid \mathcal{X}) \right| dy
&= \int \left| \int \prod_{i=1}^{|\mathcal{X}|} f(y(x_i) \mid \xi^{(z_i)}(x_i)) \left( dQ^{(L)}(z) - dQ^{(\infty)}(z) \right) \right| dy  \\
&\leq  \int \int \prod_{i=1}^{|\mathcal{X}|} f(y(x_i) \mid \xi^{(z_i)}(x_i))dy \left|\left( dQ^{(L)}(z) - dQ^{(\infty)}(z) \right) \right| \\
&= 2 \cdot \delta_{\mathrm{TV}} \left( Q^{(L)}, Q^{(\infty)} \right).
    \end{aligned}
\end{equation}

That is, the $\ell^1$ distance (equal to twice the TVD) between joint reward distributions is bounded by \( \delta_{\mathrm{TV}}(Q^{(L)}, Q^{(\infty)}) \). According to \citeEC{ishwaran2002approximate},
$$
\delta_{\mathrm{TV}}(Q^{(L)}, Q^{(\infty)})  \le 2 \cdot \mathbb{E}\left[ 1-\big(\sum_{l=1}^{L-1} w_l\big)^{|\mathcal{X}|} \right]\precsim2|\mathcal{X}|\exp(-\frac{L-1}{\nu}).
$$

Therefore, according to Eq.~\eqref{eq_ec_9}, $\delta_{\mathrm{TV}}(f_n^{(L)}(y \mid \mathcal{X}) , f_n(y \mid \mathcal{X}))\precsim |\mathcal{X}|\exp(-\frac{L-1}{\nu})$. Then, with the central Lemma~\ref{lemma_central} and the triangle inequality of TVD, the instantaneous regret of truncated $\infty$-GP-TS is given by
\begin{equation}
    \begin{aligned}
        {r}_{n+1}^{L-\mathrm{GP}-\mathrm{TS(D)}}&\precsim \mu^\ast_{\max}  \delta_{\mathrm{TV}}\big(f^{(L)}_n(\cdot \mid \mathcal{X}), f^\ast(\cdot \mid \mathcal{X})\big)\\
        &\precsim \mu^\ast_{\max}  \delta_{\mathrm{TV}}\big(f^{(L)}_n(\cdot \mid \mathcal{X}), f_n(\cdot \mid \mathcal{X})\big)+\mu^\ast_{\max} \delta_{\mathrm{TV}}\big(f_n(\cdot \mid \mathcal{X}), f^\ast(\cdot \mid \mathcal{X})\big)\\
        &\precsim \mu^\ast_{\max}|\mathcal{X}|\exp(-\frac{L-1}{\nu})+\mu^\ast_{\max} \delta_{\mathrm{TV}}\big(f_n(\cdot \mid \mathcal{X}), f^\ast(\cdot \mid \mathcal{X})\big)
    \end{aligned}
\end{equation}
Then we have
$$
\mathcal{R}^{L-\mathrm{GP}-\mathrm{TS(D)}}_{n}\precsim \mu^\ast_{\max} n |\mathcal{X}|\exp(-\frac{L-1}{\nu})+\mathcal{R}_n^{\infty-\mathrm{GP}-\mathrm{TS(D)}}\precsim {\mu^\ast_{\max}n^2 \exp\big( -\frac{L-1}{\nu} \big) }+\mathcal{R}_{n}^{\infty-\mathrm{GP}-\mathrm{TS(D)}},
$$
where the last inequality holds because in the setting of Theorem \ref{theo_3},
 $|\mathcal{X}|=\mathcal{O}(n^{\lambda_2})$ and $\lambda_2<1$.
\halmos

\section{Justification of the Required Assumptions}\label{app:verify_assumptions}

In this section, we discuss the assumptions imposed in Theorem~\ref{theo_uni_converge} and Theorem~\ref{theo_3}.
For Theorem~\ref{theo_uni_converge}, Remark~\ref{remark_1} provides a broad family of distributions that satisfy its required conditions, with a detailed justification given in Remark 5.1 of \citeEC{pati2013posterior}.
Therefore, our focus here is on verifying that the assumptions required by Theorem~\ref{theo_3} are indeed mild, and are satisfied by the broad class of reward distributions described in Remark~\ref{remark_3}.
Recall that for the deterministic component $\mu^\ast$, only Lipschitz continuity is required. 
The remaining assumptions---Assumption~\ref{ass_tail0} (tail behavior) and Assumption~\ref{ass_double_choice} (local smoothness and moment conditions)---concern only the stochastic noise term $\epsilon^\ast(x)$.

We show that the following conditional distributions $f^\ast(y\mid x)$ satisfy both assumptions:
\begin{itemize}
    \item Gaussian distribution,
    \item Gamma distribution,
    \item Weibull distribution with shape parameter $>1$,
    \item Exponential distribution,
    \item and any finite mixture of the above families.
\end{itemize}

The verification proceeds in two steps:
(1) tail conditions required by Assumption~\ref{ass_tail0}, and
(2) local smoothness and moment conditions required by Assumption~\ref{ass_double_choice}.

\subsection{Verification of the Tail Condition (Assumption~\ref{ass_tail0})}

This condition requires the density to satisfy a ``sub-Weibull" \citepEC{subweibull} tail bound:
\[
    f^\ast(y\mid x)\;\le\; a_1 \exp(-a_2 |y|^\gamma)
    \qquad \text{for some } \gamma>0.
\]
This condition is extremely mild and is satisfied by all distributions listed above:
\begin{itemize}
    \item Gaussian: $\gamma = 2$,
    \item Gamma: $\gamma = 1$,
    \item Exponential: $\gamma = 1$,
    \item Weibull: $\gamma >1$,
    \item Mixtures of the above families: dominated by the slowest-decaying component, hence also sub-Weibull.
\end{itemize}

Thus Assumption~\ref{ass_tail0} holds uniformly across the entire class.

\subsection{Verification of the Smoothness Condition (Assumption~\ref{ass_double_choice})}

Assumption~\ref{ass_double_choice} requires: first, local Hölder smoothness of $f^\ast(\cdot\mid x)$ up to order $\lfloor\alpha\rfloor$; second, finite moments of the derivative ratio $ \left|\frac{f^{\ast(k)}(Y\mid x)}{f^\ast(Y\mid x)}\right|$ and $ \left|\frac{L(Y)}{f^\ast(Y\mid x)}\right|.$
Below we verify these requirements explicitly for the aforementioned distribution families.


\textbf{Gaussian distribution:}
Let $f^\ast(y\mid x)
    = \frac{1}{\sqrt{2\pi}\sigma}
    \exp\!\left(-\frac{(y-\mu)^2}{2\sigma^2}\right)$.
The Gaussian density is infinitely differentiable on $\mathbb{R}$, and for each integer $k\ge1$,
$$
f^{\ast (k)}(y\mid x)
=  P_k(y)\, f^\ast(y\mid x),
$$
where $P_{k}(y)$ is a polynomial of degree $k$ whose coefficients depend on $\mu,\sigma$.  Therefore,
$$
\frac{\bigl|f^{\ast (k)}(y\mid x)\bigr|}{f^\ast(y\mid x)}
= |P_{k}(y)|,\quad \forall k\ge1,
$$
and
$$
\mathbb{E}_{Y\sim f^\ast}\Bigl[\Bigl|\frac{f^{\ast (k)}(Y\mid x)}{f^\ast(Y\mid x)}\Bigr|^{\frac{2\alpha+\eta}{k}}\Bigr]
= \mathbb{E}_{Y\sim\mathcal{N}(\mu,\sigma^2)}\bigl[ |P_{k}(Y)|^{\frac{2\alpha+\eta}{k}}\bigr]
<\infty,
$$
because $P_{k}(Y)$ is a polynomial and moments of any order are finite under a Gaussian distribution. 

Moreover, for any $\alpha>0$, one can choose
\(
L(y)\;=\;C(1+|y|^{\alpha+1})\exp(-\frac{(y-\mu)^2}{2\sigma^2})
\) for some $C$
so that $$f^{(\lfloor \alpha \rfloor)}(y+h)-f^{(\lfloor \alpha \rfloor)}(y)\leq L(y)e^{\lambda_0h^2}|h|^{\alpha-\lfloor \alpha \rfloor},\ y,h\in\mathbb{R},$$ for some $\lambda_0$. This follows directly by applying the mean value theorem.
Then, it is easy to verify that
$$\mathbb{E}_{Y\sim f^\ast}\Bigl[\Bigl|\frac{L(Y)}{f^\ast(Y\mid x)}\Bigr|^{\frac{2\alpha+\eta}{\alpha}}\Bigr]
<\infty$$
since this ratio is polynomial in $Y$.
Thus Gaussian distributions satisfy Assumption~\ref{ass_double_choice} for any $\alpha>0$.

\textbf{Gamma distribution}:
Let $f^\ast(y\mid x)
= \frac{1}{\Gamma(r)\,\theta^r}\, y^{r-1} 
\exp\!\Bigl(-\frac{y}{\theta}\Bigr),$ with $r>0$, $\theta>0$.
Fix $r>0$. For each integer $k\ge1$, its $k$-th derivative can be written in the form
\[
    f^{\ast (k)}(y\mid x)
    = y^{r-1-k}\,P_k(y)\,\exp\!\Bigl(-\frac{y}{\theta}\Bigr),
\]
where $P_k$ is a polynomial term. Hence
\[
    \frac{\bigl|f^{\ast (k)}(y\mid x)\bigr|}{f^\ast(y\mid x)}
    = y^{-k}\,\bigl|P_k(y)\bigr|,
    \qquad k=1,2,\dots,\lfloor\alpha\rfloor.
\]

Then
\[
\mathbb{E}_{Y\sim f^\ast(\cdot\mid x)}\!\left[
\left|\frac{f^{\ast (k)}(Y\mid x)}{f^\ast(Y\mid x)}\right|^{\frac{2\alpha+\eta}{k}}
\right]
=
\mathbb{E}_{Y\sim f^\ast(\cdot\mid x)}\bigl[\,|Y^{-k}P_k(Y)|^{\frac{2\alpha+\eta}{k}}\,\bigr]<\infty
\]
This is because, near $0$, $P_k(Y)$ behaves at most polynomially, whereas the Gamma density behaves like $y^{r-1}$.  
Hence the integrand near $0$ is dominated by $ y^{r-1 - (2\alpha+\eta)}.$
This is integrable at $0$ by choosing $\alpha$ and $\eta$ sufficiently small so that $2\alpha+\eta<r$.  
Integrability in the tail is ensured by the exponential factor $e^{-y/\theta}$.

Moreover, it is straightforward to verify that 
$f^\ast(\cdot\mid x)\in\mathcal{C}^{\alpha,L,\lambda_0}$ 
for some $\alpha>0$, $\lambda_0\ge0$, and $L(y)=C\,(1+|y|^{m})\,f^\ast(y\mid x)$, where $C>0$ is a constant.
Therefore, the moment condition
\[
\mathbb{E}_{Y\sim f^\ast(\cdot\mid x)}\left[
\left(\frac{L(Y)}{f^\ast(Y\mid x)}\right)^{\frac{2\alpha+\eta}{\alpha}}
\right]<\infty
\]
follows immediately.  
Therefore, the Gamma family satisfies Assumption~\ref{ass_double_choice}.



\textbf{ Exponential, Weibull, and their mixtures:} For the Exponential distributions, Weibull distributions (shape parameter $>1$), and any finite mixture of these families, the verification proceeds analogously.

For the Exponential density 
\(f^\ast(y\mid x)=\lambda e^{-\lambda y}\), the ratio \(|f^{\ast(k)}(y\mid x)|/f^\ast(y\mid x)\) is a constant.  
Hence all required moments are finite, and the Assumption~\ref{ass_double_choice}
are trivially satisfied.

For Weibull densities with shape parameter \(r>1\), $f^\ast(y\mid x)=\frac{r}{\lambda}
\left(\frac{y}{\lambda}\right)^{r-1}
\exp\!\left[-\left(\tfrac{y}{\lambda}\right)^r\right],$
the $k$-th derivative takes the form $f^{\ast(k)}(y\mid x)
= y^{\,r-1-k}\,R_k(y)\,
\exp\!\left[-\left(\tfrac{y}{\lambda}\right)^r\right],$
where \(R_k(y)\) is a polynomial.  
Thus the moment condition
\[
\mathbb{E}\!\left[
\left(\frac{|f^{\ast(k)}(Y\mid x)|}{f^\ast(Y\mid x)}\right)^{\frac{2\alpha+\eta}{k}}
\right]
<\infty
\]
holds whenever \(2\alpha+\eta < r\), exactly as in the Gamma distribution case.

Moreover, both the Exponential and Weibull distributions belong to $\mathcal{C}^{\alpha,L,\lambda_0}$ 
for some $\alpha>0$, $\lambda_0\ge0$, and
$L(y)=C\,(1+|y|^{m})\,f^\ast(y\mid x)$, with suitable constants $C,m>0$ depending only on $k$ and the distributional parameters. 
Therefore,
\[
\mathbb{E}\!\left[
\left(\frac{L(Y)}{f^\ast(Y\mid x)}\right)^{\frac{2\alpha+\eta}{\alpha}}
\right]
<\infty.
\]
In conclusion, Assumption~\ref{ass_double_choice} holds for the Exponential and Weibull distributions.

\textbf{Finite mixtures.}
Finally, any finite mixture
\[
f^\ast(y\mid x)=\sum_{j=1}^J w_j(x)\,\psi_j(y)
\]
of the above distributions also satisfies Assumption~\ref{ass_double_choice}, because both the required smoothness properties and the exponential tail bounds are preserved under finite convex combinations.

\bibliographystyleEC{informs2014} 
\bibliographyEC{ref_appendix}
\end{document}